\newcommand{\innerproduct}[2]{\langle #1 , #2\rangle}
\newcommand{\RN}[1]{\textup{\uppercase\expandafter{\romannumeral#1}}}
\newcommand{\Vrule}{\vrule width 0.8pt}
\newcolumntype{P}[1]{>{\raggedright\arraybackslash}p{#1}}
\newcolumntype{M}[1]{>{\centering\arraybackslash}m{#1}}
\newcommand{\cmark}{\ding{51}}
\newcommand{\xmark}{\ding{55}}
\newcolumntype{C}[1]{>{\centering\arraybackslash}m{#1}}
\theoremstyle{definition}          
\newtheorem{assumption}{}
\newtheorem{setting}{Setting}
\newtheorem{retrainingassumption}{ }
\newtheorem{theorem}{Theorem}[section]
\newtheorem{lemma}[theorem]{Lemma}
\newtheorem{corollary}[theorem]{Corollary}
\newtheorem{claim}[theorem]{Claim}
\theoremstyle{definition}
\newtheorem{definition}[theorem]{Definition}
\theoremstyle{remark}
\newtheorem{remark}[theorem]{Remark}
\newcommand{\E}{\mathbb{E}}
\newcommand{\KL}{\operatorname{D_{ KL}}}
\newcommand{\Prob}{\mathcal{P}}
\renewcommand{\Pr}{\mathbb{P}}
\newcommand{\HHP}{heterogeneous human preferences}       
\newcommand{\hpnoise}{heterogeneous preference noise}    
\begin{document}

\title{Convergence and Stability Analysis of Self-Consuming Generative Models with Heterogeneous Human Curation}


\author{{Hongru Zhao}\footnote{Corresponding author} \\ \small{School of Statistics, University of Minnesota} \\  \small{\href{mailto:zhao1118@umn.edu}{zhao1118@umn.edu} } 
\and Jinwen Fu \\ \small{School of Statistics, University of Minnesota} \\  \small{\href{mailto:fu000217@umn.edu}{fu000217@umn.edu}}
\and Tuan Pham \\ \small{Department of Statistics and Data Sciences, UT Austin} \\
\small{\href{mailto:tuan.pham@utexas.edu}{tuan.pham@utexas.edu}}
}

\date{}
\maketitle
\begin{abstract}
Self-consuming generative models have received significant attention over the last few years. In this paper, we study a self-consuming generative model with heterogeneous preferences that is a generalization of the model in \citet{ferbach2024self}. The model is retrained round by round using real data and its previous-round synthetic outputs. 
The asymptotic behavior of the retraining dynamics is investigated across four regimes using different techniques including the nonlinear Perron–Frobenius theory. Our analyses improve upon that of \citet{ferbach2024self} and provide convergence results in settings where the well-known Banach contraction mapping arguments do not apply.  Stability and non-stability results regarding the retraining dynamics are also given.

\end{abstract}


\tableofcontents

\section{Introduction}

As generative models increasingly produce human-level text, images, and decisions, a central challenge is alignment: ensuring that model behavior reflects human preferences and ethical norms rather than artifacts of training data or spurious shortcuts \citep{christiano2017deep,stiennon2020learning,ouyang2022training,shen2023llmalignsurvey,skalse2022defining,perez2022redteaming,birhane2021multimodal,birhane2024laionsden,schuhmann2022laion5b}. Contemporary pipelines largely learn from preferences, often alongside scalable-oversight efforts (“superalignment” \citet{burns2023weaktostrong,kim2024road,kopf2023openassistant}), and a growing survey literature maps the practical trade-offs—from data collection and reward inference to evaluation and safety \citep[e.g.,][]{shen2023llmalignsurvey,kaufmann2023rlhfsurvey}. A common structure underlies many systems: models propose alternatives, people (or proxies) compare them, and those preferences guide the next training round \citep{shin2023benchmarks,lee2021pebble,munos2023nashlhf}.

Within this landscape, two families dominate. Reinforcement Learning from Human Feedback (RLHF) first trains a reward model from comparisons, then improves the policy via reinforcement learning with KL regularization (typically Proximal Policy Optimization (PPO)). This accommodates rich, sequence-level signals, but it introduces extra moving parts—reward modeling, on-policy sampling, and tuning—that can make training complex and sometimes unstable at scale \citep{kirk2023understanding}. Direct Preference Optimization (DPO) instead directly fits the model to human choices relative to a fixed reference policy, recasting alignment as a simpler preference-classification objective \citep{rafailov2023dpo,azar2024psipo,ethayarajh2024kto,dumoulin2023densitypref,tang2024gpo,yang2024fdpo}. DPO-style methods often match RLHF on single-turn tasks while being easier to train and reason about, precisely because the reference model provides an explicit anchor. These contrasts motivate our focus on DPO-aligned formulations and the statistical questions they raise about curate-and-retrain loops—especially how anchoring and the number of displayed candidates affect convergence and stability under heterogeneous human preferences. Repeated self-consumption and online retraining can also degrade future datasets and dynamics when feedback is narrow or synthetic \citep{hataya2023corrupt,martinez2023gaiInternet}.

\paragraph{This paper}
We cast curated retraining as population-level dynamics with two parameters: the size of the comparison pool ($K$ candidates shown per round) and the amount of reference mixing ($\alpha$, an “anchoring” weight on a fixed reference distribution).
Curators are heterogeneous and noisy, and selection follows a Plackett–Luce rule, the formal definitions appear in Section~\ref{sec:2_setup_}. 

From this setup we derive closed-form population updates and analyze convergence and stability across four regimes (defined by different $K,\alpha$ pairs). 
The reference mixing serves as a regularizer without which the process converges but is unstable to bounded reward perturbations. In the finite-pool mixed regime, when the mix weight exceeds a certain threshold, the update mapping is a Banach contraction in the total variation (TV) metric with a unique fixed point and geometric convergence; in the infinite-pool mixed regime the map locally contracts in the Hilbert projective metric and admits an explicit fixed point.  Three regimes admit closed-form limits, delineating when conservative reference mixing is required to preserve stability while maintaining preference optimization under heterogeneous raters.

Prior population analyses, most explicitly the curated self-consumption framework of \citet{ferbach2024self}, typically assume a homogeneous, noise-free curator that deterministically selects the winner. We keep the population-level viewpoint {similar to that of \cite{ferbach2024self}} but move to a more realistic setting with heterogeneous and noisy curators: selection follows a random-utility model that induces a Plackett–Luce choice rule, yielding clean closed-form population updates for both finite and infinite candidate pools \citep{mcfadden2001economic,train2009discrete}. Just as importantly, we close a practical gap left by prior theory: although real systems routinely mix curated data with a reference model (e.g., via KL anchoring or reference constraints) to improve stability \citep{stiennon2020learning,nakano2021webgpt,ouyang2022training,anthropic2022harmless,touvron2023llama}, convergence and robustness in these mixed-data settings—even under homogeneous preferences—had not been established \citep{ferbach2024self}. Our major contributions are listed as follows and in Table~\ref{tab:compare} in comparison with prior works.


\subsection*{Main Contributions}
\begin{itemize}[leftmargin=*, itemsep=3pt]
\item \textbf{Heterogeneous rater model.}
{We propose a random-utility curation model with between-rater heterogeneity and within-rater noise, combined with conservative mixing against a reference source; see \eqref{equ:retraining_mixture} and \eqref{equ:retraining_update} for more details.  This is a generalization of the model considered in \cite{ferbach2024self}.}

\item \textbf{Closed--form dynamics in four regimes.}
{We derive  the exact population updates for finite vs. infinite candidate pools, and with or without mixing. These four regimes and the exact updates are described in detail in Sections \ref{alpha=0} and \ref{alpha>0}. Minimal sufficient conditions are established for the existence of such updates. The details can be found in Section \ref{sec:3_population}.}

\item \textbf{Convergence and stability.} 
We establish monotonic improvement in the expected exponential reward for self-consuming generative models. {In the purely synthetic retraining, we establish the convergence in Kullback–Leibler (KL) divergence. In the mixed retraining with a finite candidate pool, we show a contraction result regarding the total variation distance between the update and the fixed point, provided that the  reference‑mix proportion is large enough. In the mixed retraining with an infinite candidate pool, we prove the convergence in the Hilbert projective metric to a fixed point. Finally, we provide stability and instability results of the retraining dynamics.}
\end{itemize}
{The rest of the paper is organized as follows. The model setup, retraining procedures and motivations are provided in Section \ref{sec:2_setup_}. We collect some technical properties and formulas of the updates in Section \ref{sec:3_population}. Convergence results can be found in Section \ref{sec:convergence analysis} and Section \ref{subsec:unstable} contains the stability and instability results of the retraining dynamics. }

\begin{table}[htbp]
\centering
\small
\setlength{\tabcolsep}{6pt}
\renewcommand{\arraystretch}{1.22}
\begin{threeparttable}
\caption{Comparison with prior results.}
\label{tab:compare}
\begin{tabular}{C{0.25\linewidth} C{0.35\linewidth} C{0.24\linewidth} C{0.08\linewidth}}
\toprule
\multirow{2}{*}{Aspect} & \multirow{2}{*}{Case} & \multicolumn{2}{c}{Covered?} \\
\cmidrule(lr){3-4}
 & & \textsc{\cite{ferbach2024self}} & \textsc{Ours} \\
\midrule
Setting & raters with heterogeneous preferences & \xmark & \cmark \\
\addlinespace[2pt]
\multirow{2}{*}{ Improvement}
  & $\alpha=0$ (pure self-consuming) & \cmark & \cmark \\
  & $\alpha>0$ (mixing with reference) & \xmark & \cmark \\
\addlinespace[2pt]
\multirow{2}{*}{Convergence analysis}
  & $\alpha=0$ (pure self-consuming) & \cmark & \cmark \\
  & $\alpha>0$ (mixing with reference) & \xmark & \cmark \\
\addlinespace[2pt]
\multirow{2}{*}{Stability / robustness}
  & $\alpha=0$ & \xmark & \cmark \\
  & $\alpha>0$ & \xmark & \cmark \\
\bottomrule
\end{tabular}
\begin{tablenotes}[flushleft]
\footnotesize
\item \emph{Notation:} $\alpha$ is the mixing weight (regularization proportion) on the reference distribution, corresponds to $1/(1+\lambda)$ in \cite{ferbach2024self}: $\alpha=0$ means no mixing, $\alpha>0$ means curated data are mixed with a reference. The improvement is measured on the expected exponential reward, and the convergence analysis is conducted on the iteratively curated distribution. \cmark\ denotes the paper provides explicit analysis; \xmark\ indicates it does not.
\end{tablenotes}
\end{threeparttable}
\end{table}

\section{Model Setup}
\label{sec:2_setup_}

In RLHF and DPO methods, generative models are iteratively updated based on human‑curated samples. Previous work shows that such curated models can drift or become unstable, and existing theoretical analyses do not fully address convergence when feedback is heterogeneous or when curated data are mixed with a reference distribution.  Our goal is to close this gap by analyzing a discrete‑$K$ curation loop under heterogeneous human preferences, deriving convergence properties and highlighting the role of a reference distribution as a regularizer.

\noindent \textbf{Discrete-$K$ choice curation model under \HHP.}

Let $(\mathcal{X},\mathcal{B},\pi)$ be a measure space, where $\pi$ is $\sigma$-finite. Let \(X\in\mathcal X\) be a random element with density \(p\) with respect to (w.r.t.) $\pi$. In each update, we draw $K$ i.i.d. samples $X_{1:K}=(X_1,\ldots,X_K)$ from $p$ (we write $x_{1:K}$ for a realization), and submit them to a heterogeneous human curation step that returns the index $I\in [K]:=\{1,2,\ldots,K\}$ of the preferred (model-generated) candidate; we denote the curated sample by $\widehat{x}=x_I$. To model heterogeneous human selection, we adopt Luce’s choice rule \citep{luce1963handbook}, positing a measurable reward \(r:\mathcal X \to\mathbb{R}\) and \hpnoise\ (individual-preference fluctuations) \( \bigl\{\varepsilon_i(\cdot )\bigr\}_{i=1}^K,\) where $\varepsilon_1,\ldots,\varepsilon_K$ are independent random fields, independent of $x_{1:K}$. Depending on the application, the reward function $r(x)$ can represent either human values and preferences (for alignment purposes) or domain-specific expert judgment (for performance improvement in specialized areas). The utility of candidate $x_k$ is $\tilde{w}_k:= e^{r(x_k)+\varepsilon_k(x_k)}$. Under the Plackett–Luce specification \citep{luce1959individual,plackett1975analysis}, the curation procedure draws an index $I\in [K]$ with 
\begin{equation}\label{equ:1_LP_index_selection}
  \mathbb{P}(I=k|x_{1:K},\varepsilon_{1:K})=\frac{\tilde{w}_k}{\sum_{i=1}^K\tilde{w}_i},  
\end{equation}
and sets $\hat{x}=x_I$.

The \hpnoise\  captures both within‑rater stochasticity and between‑rater taste differences
(i.e., preference heterogeneity in the random‑utility sense \citep{mcfadden2001economic,train2009discrete}). We update the generative model either solely from the curated samples or by mixing the curated-data distribution with a reference distribution of density $p_{\rm ref}$ (w.r.t. $\pi$), using mixture weight $\alpha \in [0,1)$. Preference heterogeneity is widely observed and shaped by population composition \citep{henrich2010weird}, while classical generalization in psychological spaces motivates distance‑sensitive choice behavior \citep{shepard1957generalization}.

Our goal is then to study the convergence property of the self-consuming loop \citep{ferbach2024self}, that is, in each round the model generates candidates, the curation procedure selects one, and training uses curated plus optional reference data.

\paragraph{t-th round of curation-update}
\begin{enumerate}[label=\textbf{Step \arabic*:}, leftmargin=*]
  \item Draw $X_1,\dots,X_K\stackrel{i.i.d.}{\sim}p_t$.
  \item Evaluate the utility at each sample $r(X_k)+\varepsilon_k(X_k)$ and set $\tilde{w}_k=e^{r(X_k)+\varepsilon_k(X_k)},$ $1\leq k\leq K$.
  \item Pick $\widehat X:=X_I$ with probability \eqref{equ:1_LP_index_selection}, and denote the distribution by $\mathcal{PL}(X_{1:K},\varepsilon_{1:K} )$.
  \item Train the new generator $p_{t+1}$ from a mixture of the law from the curated sample $\hat X$ and the law $p_{\rm ref}$ of $X_{\rm{ref}}$:
  \begin{gather}
      \mathcal L(p;p_t,K,\alpha)
  :=\alpha\E_{X_{\rm ref}\sim p_{\rm ref} }\bigl[\log p(X_{\rm ref} )\bigr]
    +(1-\alpha)\E_{\substack{
      X_{1},\dots,X_{K}\sim p_t, \\
      \widehat{X}\sim \mathcal{PL}(X_{1:K},\varepsilon_{1:K} )
  }} \bigl[\log p(\widehat X)\bigr],\label{equ:retraining_mixture}\\
  p_{t+1}=\arg\max_{p \in \mathcal{P}_{\pi} }\mathcal L(p;p_t,K,\alpha)\label{equ:retraining_update},
  \end{gather}
\end{enumerate}
where $\mathcal{P}_{\pi}$ is the set of distributions in our model, which has been defined in Section~\ref{sec:3_population}. The procedure starts with an initial density $p_0$ and is iteratively regularized by a reference density $p_{\rm ref}$. 
In applications, we can take $p_0$ to be a pre-trained generative model. 
Recall that $\alpha\in[0,1)$ represents the mixing weight (regularization proportion), with $\alpha=0$ representing no mixing/regularization. At $\alpha=1$, the model collapses to $p_{\rm ref}$ each round. 
The reference density $p_{\rm ref}$ serves as a regularizer: it may be chosen as a high-quality estimate (e.g., a smoothed estimate of the real data), or simply the initial density itself—i.e., $p_{\rm ref}=p_0$. 

Choosing $p_{\rm ref}=p_0$ implements conservative regularization by mixing the curated distribution with the initial density at each round.
Here $K$ denotes the number of model‑generated candidates displayed to the human curator in each round (the comparison‑pool size).
The update of $p_{t+1}$ and the convergence of the sequence $\{p_t\}_{t=1}^\infty$ vary in different settings of $(\alpha, K)$. Specifically, we discuss \textbf{four Regimes}: 
\begin{itemize}
    \item (i) $\alpha=0$, $K<\infty$;
    \item (ii) $\alpha=0$, $K=\infty$;
    \item (iii) $\alpha\in(0,1)$, $K<\infty$; 
    \item (iv) $\alpha\in(0,1)$, $K=\infty$.
\end{itemize}  
For $K=\infty$, interpret $\{(X_k,\varepsilon_k)\}_{k=1}^\infty$ as an infinite sequence. Although the setting \(K = \infty\) is not attainable in practice—no human can curate infinitely many candidates—it plays a fundamental theoretical role to describe the situation with optimal curation. 

Table~\ref{tab:regimes} {describes} the four population-level retraining regimes we study {in this paper}, listing for each (a) the population update, (b) the assumptions invoked, (c) the convergence metric, (d) whether a closed-form limit is available, and (e) stability to bounded reward perturbations.

\begin{table}[htbp]
\centering
\caption{Theoretical results in four regimes.}
\small
\setlength{\tabcolsep}{3pt}
\renewcommand{\arraystretch}{1.18}
\begin{tabular}{M{0.11\linewidth}!{\Vrule} M{0.16\linewidth}!{\Vrule} M{0.32\linewidth} M{0.38\linewidth}}
\toprule
 &  & \multicolumn{2}{c}{\textbf{Retraining mixture}} \\
\cmidrule(lr){3-4}
\centering\textbf{Regime} & \centering\textbf{Entry} & $\boldsymbol{\alpha=0}$ & $\boldsymbol{\alpha\in(0,1)}$ \\
\midrule

\multirow{5}{*}{$K<\infty$}
& Dynamics
  & $p_{t+1}(x)=p_t(x)\,H^{\!K}_{p_t}(x)$
  & $p_{t+1}(x)=\alpha\,p_{\rm ref}(x)+(1-\alpha)\,p_t(x)\,H^{\!K}_{p_t}(x)$ \\

& Assumptions
  & \ref{re_ass:1_Shannon-entropy}–\ref{re_ass:3_ess-bdd-Q}; \ref{setting:variable-noise}; \ref{ass1:non0-mass}.
  & \ref{re_ass:1_Shannon-entropy}; \ref{setting:variable-noise}. \\

& Metric
  & KL divergence
  & TV \;(for $\alpha>\frac{K-1}{K}$). \\

& Explicit limit
  & Yes
  & No (but the limit is the unique fixed point in \eqref{equ:non_linear_oper}). \\

& Stability
  & Unstable TV to bounded reward perturbations
  & Stable in TV \;(for $\alpha>\frac{K-1}{K}$). \\
\midrule

\multirow{5}{*}{$K=\infty$}
& Dynamics
  & $p_{t+1}(x)=p_t(x)\,H^{\!\infty}_{p_t}(x)$
  & $p_{t+1}(x)=\alpha\,p_{\rm ref}(x)+(1-\alpha)\,p_t(x)\,H^{\!\infty}_{p_t}(x)$ \\

& Assumptions
  & \ref{re_ass:1_Shannon-entropy}–\ref{re_ass:3_ess-bdd-Q}; \ref{setting:process-noise}; \ref{ass1:non0-mass}.
  & \ref{re_ass:1_Shannon-entropy}–\ref{re_ass:3_ess-bdd-Q}; \ref{ass:A2_new}, \ref{ass:A3}; \ref{setting:process-noise}. \\

& Metric
  & KL divergence
  & Hilbert projective metric \\

& Explicit limit
  & Yes
  & Yes (up to a scalar normalizer $c_\ast$). \\

& Stability
  & Unstable TV to bounded reward perturbations.
  & Stable in TV under \ref{ass:4_perturbation}. \\
\bottomrule
\end{tabular}
\begin{tablenotes}[flushleft]
\footnotesize
\item \emph{Metric hierarchy.} On the interior of the positive cone, the Hilbert–projective metric is stronger than {the} KL {divergence}, which is stronger than {the} TV {distance} \citep{atar1997exponential}.
\end{tablenotes}

\label{tab:regimes}
\end{table}

\paragraph{Why we call it regularization.}
Mixing the curated distribution with a fixed reference $p_{\rm ref}$ plays the role of a regularizer: it shrinks the update toward an anchor and turns the population map into a contraction. In the finite‑$K$ case, when the mix proportion satisfies $\alpha>1-\tfrac{1}{K}$, the update operator is a Banach contraction in the TV metric, yielding a unique fixed point and geometric convergence (Theorem~\ref{thm:tv-convergence-iii}). In addition, the entire trajectory is Lipschitz‑stable in the TV metric to bounded reward perturbations (Theorem~\ref{thm:stable_1}). In the $K=\infty$ regime, the regularized update becomes a positive linear map with local projective contraction and continuous dependence on bounded reward perturbations (Theorem~\ref{thm:Regime_iv_convergence}; Theorem~\ref{thm:stable_2}). Intuitively, the $p_{\rm ref}$ term damps selection‑amplified fluctuations, which is exactly the stabilizing effect we expect from a regularizer.

\begin{remark}[Heterogeneity and relation to DPO]
Empirically, preference datasets exhibit substantial rater disagreement and between‑user variation, both in summarization and assistant settings \citep{stiennon2020learning,anthropic2022harmless,ouyang2022training}; recent work even learns from heterogeneous feedback via personalization and aggregation \citep{park2024rlhfhetero}; see also \citet{henrich2010weird} for broader evidence of heterogeneous populations. Our analysis keeps this variability explicit and shows that the convergence/stability guarantees survive under \HHP{}. Moreover, when $K=2$ our curated self‑consuming update uses the same Bradley–Terry \citep{bradley1952rank} preference model that underlies DPO \citep{rafailov2023dpo}. With conservative mixing against a reference distribution, our update plays a role analogous to KL anchoring explored in generalized preference‑optimization frameworks \citep{tang2024gpo,yang2024fdpo}. We view our method as a population‑level, DPO‑like dynamic rather than a strict instantiation of the DPO loss \citep{hong2024orpo,meng2024simpo,pang2024irpo,qi2024ofsdpo,shi2024dmpo}.
\end{remark}

We consider two models of human-preference noise that reflect real–world variability. Let $\{\varepsilon_k(\cdot)\}_{k=1}^K$ be i.i.d. copies of a real-valued random field $\varepsilon(\cdot)$ on $\mathcal{X}$. 
We distinguish whether the noise may vary with the outcome \(x\) or has \(x\)-invariant one-point marginals.

\begin{setting}[Nonstationary Preference Noise]\label{setting:process-noise}
The \hpnoise\ is an arbitrary real‑valued random field \(\{\varepsilon(x):x\in\mathcal X\}\).
\end{setting}

\begin{setting}[Stationary Preference Noise]\label{setting:variable-noise}
There exists a random variable \(\varepsilon\) with \(x\)-invariant one‑point marginals:
\(\varepsilon(x)\overset{d}{=}\varepsilon\) for all \(x\in\mathcal X\).

\end{setting}
Notice that Setting~\ref{setting:variable-noise} is a special case of Setting~\ref{setting:process-noise}. 
Unless stated otherwise, we work under Setting~\ref{setting:variable-noise}.

\subsection{Notations}\label{subsec:notation}
We work on a measurable space $(\mathcal{X},\mathcal{B},\pi)$ with $\sigma$-finite baseline measure $\pi$. We write $\mathcal{P}_{\pi}$ as the set of probability measures that are absolutely continuous w.r.t. $\pi$ and the measures are identified with their a.e.-unique densities. Expectations $\E[\cdot]$ are taken under the law indicated in the subscript. For probability measures $\mu,\nu$, absolute continuity is denoted $\mu\ll\nu$, and essential suprema/infima are taken w.r.t.\ the ambient measure made explicit in context.

With the iterative model update, the model density at iteration $t$ is $p_t$, with initialization $p_0$ and a fixed reference density $p_{\rm ref}$. $\alpha\in[0,1)$ denotes the retraining mixture weight (regularization proportion). The (heterogeneity-averaged) utility is $Q(x):=\,\E[\exp\{r(x)+\varepsilon(X)\}\mid X=x]$, where $r:\mathcal X\to\mathbb R$ is the reward and $\varepsilon$ captures individual noise. Let \(\mathbb P_0\) denote the probability measure on \((\mathcal X,\mathcal B)\) induced by the density \(p_0\) with respect to \(\pi\).

We set $Q_*:=\mathbb{P}_0\operatorname*{-ess\,sup}_x Q(x)$ and $Q_{\min}:=\operatorname*{ess\,inf}_x Q(x)$, and use $A:=\{x\in\mathcal X:\ Q(x)=Q_*\}$ to denote the maximizer set of the utility and $\mathbf{1}_A(\cdot)$ be the indicator function on the set $A$. We denote the number of individual raters as $K$ and write $[K]=1,\cdots,K$. 

For part of the analysis, {unless stated otherwise}, we use $\mathbb P_{\rm ref}$ to denote the probability measure with density $p_{\rm ref}$ so that $d\mathbb P_{\rm ref}=p_{\rm ref}\,d\pi$, and define $\mathcal P_{\mathbb P_{\rm ref}}:=\{\mu:\mu\ll\mathbb P_{\rm ref}\}$. Define the positive cone $\mathcal K:=\{f\in L^1(\mathbb P_{\rm ref}): f>0\ \mathbb P_{\rm ref}\text{-a.s.}\}$. When $p_t\in\mathcal{P}_{{\mathbb P}_{\rm{ref}}}$, we reweight it by the reference via $w_t:=p_t/p_{\rm ref}$ and use the inner product $\langle f,g\rangle_{\rm ref}:=\int f(x)g(x)\,d\mathbb P_{\rm ref}(x)$. 

We measure discrepancies between densities using the TV metric $d_{\rm TV}(p,q):=\tfrac12\int|p-q|\,d\pi$ and KL divergence $\KL(p\|q):=\int p\log(p/q)\,d\pi$ (when $p\ll q$). 

\section{Population-Level Retraining Dynamics with Curated Synthetic Data}\label{sec:3_population}
This section derives the population update maps curation under heterogeneous preferences—first without mixing/regularization and then with conservative mixing—covering finite and mean field (infinite-pool) regimes.

Define
\[
\mathcal{P}_{\pi}
\;:=\;
\bigl\{ \mu \text{ is a probability measure on } (\mathcal{X},\mathcal{B}) : \mu \ll \pi\,\bigr\},
\]
where $\mu \ll \pi$ means $\mu $ is absolutely continuous with respect to the baseline $\sigma$-finite $\pi$. By the Radon--Nikodym theorem, every $\mu \in \mathcal{P}_{\pi}$ admits a density $p := \frac{d\mu}{d\pi}$, unique $\pi$-a.e.\ \citep[see, e.g.,][]{van2000asymptotic}.
With a slight abuse of notation, we will identify $\mu$ with its density and write $p\in\mathcal{P}_{\pi}$ when convenient.


We impose the following Retraining Assumptions to ensure the existence of an updated distribution $p_{t+1}$ from maximizing the retraining objective \eqref{equ:retraining_update}.

\begin{retrainingassumption}[Finite Shannon entropy]\label{re_ass:1_Shannon-entropy}
    The densities \(p_{0},p_{\rm{ref}}\in\mathcal{P}_\pi\) have finite Shannon entropy, that is,
\[
 -\infty<h(p) := -\int p(x)\log p(x)\,\pi(dx)<\infty\text{ for } p\in\{p_0,p_{\rm{ref}}\}.
\] It equivalently demands \(\int p(x)|\log p(x)|\,\pi(dx)<\infty\).
\end{retrainingassumption}

\begin{retrainingassumption}[Exponential integrability]\label{re_ass:2_exp-intgr}
For every $p\in\mathcal{P}_\pi$,
\(
  \E_{X\sim p} \bigl[e^{\,|r(X)+\varepsilon(X)|}\bigr] < \infty .
\)
\end{retrainingassumption}

\begin{retrainingassumption}[Finite expected utility]\label{re_ass:3_ess-bdd-Q}
Define the heterogeneity‑averaged utility for \(x\in\mathcal{X}\) as \(Q(x):= e^{r(x)}\mathbb{E}\left[e^{\varepsilon(X)}\mid X=x\right].\)
Let $Q_*$ denote the essential upper bound for $Q(x)$, that is, 
\[ Q_{*}:= \mathbb P_0 \operatorname*{-ess\,sup}_{x\in\mathcal X} Q(x) = \inf\bigl\{M\in\mathbb{R}:\mathbb{P}_0\bigl(\{x:\, Q(x)>M\}\bigr)=0\bigr\}.\]
Assume \(0<Q_{*}<\infty\). Without loss of generality, we may assume $Q(x)\leq Q_*$ for all $x\in \mathcal{X}$ (by choosing a version of $x\mapsto \left[e^{\varepsilon(X)}\mid X=x\right]$ and modifying it on a $\pi$-null set).
\end{retrainingassumption}
Here $Q$ is the heterogeneity‑averaged exponential reward; the conditional expectation averages over \HHP{}, yielding a population‑level choice kernel.

\subsection{Retraining Dynamics Without Regularization} \label{alpha=0}
In this section, we analyze the dynamics of the discrete-$K$ choice model of the curation procedure in the fully synthetic setting ($\alpha=0$). We start with a finite number of displayed candidates $K<\infty$ (Regime (i)) and then extend the updating formula to the effectively infinite‑candidate case $K=\infty$ (Regime (ii)).

When $K<\infty$, the distribution of curated data in the discrete-$K$ choice model is characterized by the choice kernel, given $p\in \mathcal{P}_{\pi}$,
\begin{equation}\label{eq:H_K}
    H_p^{K}(x):=
      \E_{\substack{X_{1},\dots,X_{K-1}\sim p\\
       \varepsilon,\varepsilon_{1 },\ldots,\varepsilon_{K-1 } }} \Bigl[
        K\,\frac{e^{\,r(x)+\varepsilon(x)}}{%
               e^{\,r(x)+\varepsilon(x)}+
               \sum_{k=1}^{K-1}e^{\,r(X_k)+\varepsilon_k(X_k)}}
      \Bigr].
  \end{equation}

We first establish the following lemma to characterize the distribution of the curated sample $\widehat{X}$. {Recall the Plackett–Luce specification $\mathcal{PL}(X_{1:K},\varepsilon_{1:K})$ defined as in \eqref{equ:1_LP_index_selection}. This leads to the following result.}

\begin{lemma}\label{lem:curated-distribution}
 Let $\widehat X\sim \mathcal{PL}(X_{1:K},\varepsilon_{1:K})$ be the curated sample where $X_{1},\ldots,X_K\overset{\text{i.i.d.}}{\sim}p$. For any measurable function $f$ with $\E_{X\sim p}[|f(X)|]<\infty$,
\begin{equation}\label{equ:5_functional_int}
    \mathbb{E}[f(\widehat X)]=\E_{X\sim p}[f(X)H_p^K(X)].
\end{equation}    
As a consequence, the density (w.r.t. $\pi$) of $\widehat X$ is $p H_p^K$.


\end{lemma}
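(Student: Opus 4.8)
The plan is to compute $\mathbb{E}[f(\widehat X)]$ by conditioning on the full randomness $(X_{1:K},\varepsilon_{1:K})$ and then using the Plackett--Luce selection probability \eqref{equ:1_LP_index_selection}. Concretely, by the tower property,
\[
\mathbb{E}[f(\widehat X)]
=\mathbb{E}\!\left[\sum_{k=1}^K f(X_k)\,\mathbb{P}(I=k\mid X_{1:K},\varepsilon_{1:K})\right]
=\mathbb{E}\!\left[\sum_{k=1}^K f(X_k)\,\frac{e^{\,r(X_k)+\varepsilon_k(X_k)}}{\sum_{i=1}^K e^{\,r(X_i)+\varepsilon_i(X_i)}}\right].
\]
The next step is to exploit the exchangeability of the i.i.d.\ pairs $\{(X_k,\varepsilon_k)\}_{k=1}^K$: each of the $K$ summands has the same expectation, so the sum equals $K$ times the $k=1$ term. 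This reduces the quantity to
\[
\mathbb{E}[f(\widehat X)]
=K\,\mathbb{E}\!\left[f(X_1)\,\frac{e^{\,r(X_1)+\varepsilon_1(X_1)}}{e^{\,r(X_1)+\varepsilon_1(X_1)}+\sum_{k=2}^{K}e^{\,r(X_k)+\varepsilon_k(X_k)}}\right].
\]

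Then I would condition on $(X_1,\varepsilon_1)$ and take the inner expectation over the remaining $(X_2,\dots,X_K,\varepsilon_2,\dots,\varepsilon_K)$, which are independent of $(X_1,\varepsilon_1)$. Writing $X_1=x$, $\varepsilon_1(x)$ as the realized noise at $x$, the inner conditional expectation is exactly the bracketed expression defining $H_p^K(x)$ in \eqref{eq:H_K} after I also average out $\varepsilon_1$ (note \eqref{eq:H_K} includes $\varepsilon$ in its expectation, matching the law of $\varepsilon_1(x)$ conditionally on $X_1=x$; under Setting~\ref{setting:variable-noise} this is just the law of $\varepsilon$, but the argument works in the generality of Setting~\ref{setting:process-noise} too since $\varepsilon_1$ is independent of everything else). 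So the conditional expectation given $X_1=x$ is $f(x)H_p^K(x)$, and taking the outer expectation over $X_1\sim p$ gives $\mathbb{E}[f(\widehat X)]=\E_{X\sim p}[f(X)H_p^K(X)]$, which is \eqref{equ:5_functional_int}. The final clause about the density follows by taking $f=\mathbf 1_B$ for $B\in\mathcal B$: then $\Pr(\widehat X\in B)=\int_B H_p^K\,dp=\int_B p(x)H_p^K(x)\,\pi(dx)$, so $pH_p^K$ is the Radon--Nikodym density of the law of $\widehat X$ w.r.t.\ $\pi$.

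The main technical point to be careful about — rather than a genuine obstacle — is the integrability bookkeeping needed to justify the interchange of sum/expectation and the application of Fubini/Tonelli: one needs $\E_{X\sim p}[|f(X)|]<\infty$ (assumed) together with the fact that the Plackett--Luce weights are bounded by $1$, so that $\big|f(X_k)\,\tilde w_k/\sum_i\tilde w_i\big|\le |f(X_k)|$ is integrable termwise; this makes the conditioning and exchangeability steps rigorous without any moment assumption on $r$ or $\varepsilon$ beyond measurability. I should also remark that the denominator is $\pi$-a.s.\ strictly positive since the exponentials are positive, so the selection probability and the kernel are well defined. Everything else is a routine application of the tower property and symmetry.
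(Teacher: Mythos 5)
Your proof is correct and follows essentially the same route as the paper's: tower property over the Plackett--Luce selection, exchangeability of the pairs $(X_k,\varepsilon_k)$ to reduce to $K$ times a single term, conditioning to recognize the kernel $H_p^K$, and $f=\mathbf 1_B$ for the density claim. Your integrability remark (weights bounded by $1$) matches the paper's observation that $H_p^K\in[0,K]$, so nothing is missing.
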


Lemma \ref{lem:curated-distribution} describes the distribution of the curated data through the choice kernel. Then with the following theorem, the updating formula is explicitly derived. Recall that Regime (i) corresponds to the setting $\alpha=0,K<\infty$.

\begin{theorem}[Update Rule under Regime (i)]%
\label{thm:pure-Kfinite}
Under Regime (i) and Retraining Assumption \ref{re_ass:1_Shannon-entropy}, the updating formula is given by
\begin{equation}\label{equ:update_regime_1}
    p_{t+1}(x)\;=\;p_{t}(x)\,H_{p_t}^{K}(x),
  \qquad x\in\mathcal X,
\end{equation}
as the unique maximizer of the pure self‑consuming objective \eqref{equ:retraining_mixture}
      \begin{align}\label{eq:finite-K-objective}
          p\longmapsto
          \mathcal L(p;p_t,K,0) =\mathbb E_{\substack{X_{1},\dots,X_{K}\sim p_t\\
      \widehat{X}\sim \mathcal{PL}(X_{1:K}, \varepsilon_{1:K})}} \bigl[\log p(\widehat X)\bigr] =\E_{X\sim p_t} \bigl[\log p(X) H^K_{p_t}(X)\bigr]
       \end{align} for any $t\in\mathbb N$,
where the choice kernel \(H_{p_t}^{K}\) is given by~\eqref{eq:H_K}, and $\widehat X$ is the curated sample from the discrete-$K$ choice model under density $p_t$.
\end{theorem}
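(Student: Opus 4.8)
The plan is to recast the objective in \eqref{eq:finite-K-objective} as a maximum–(negative–)cross–entropy problem and close with Gibbs' inequality. By Lemma~\ref{lem:curated-distribution}, the curated sample $\widehat X$ drawn from $p_t$ has density $q_t:=p_t\,H_{p_t}^{K}$ with respect to $\pi$; hence for every $p\in\mathcal P_\pi$
\[
\mathcal L(p;p_t,K,0)=\E\bigl[\log p(\widehat X)\bigr]
=\int_{\mathcal X}q_t(x)\,\log p(x)\,\pi(dx)
=\E_{X\sim p_t}\!\bigl[\log p(X)\,H_{p_t}^{K}(X)\bigr],
\]
which already records the identities in the statement. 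First I would verify that $q_t\in\mathcal P_\pi$: the fraction inside the expectation in \eqref{eq:H_K} lies in $(0,1]$, so $0<H_{p_t}^{K}\le K$ pointwise and $q_t\le K p_t$, while taking $f\equiv1$ in Lemma~\ref{lem:curated-distribution} yields $\int q_t\,d\pi=\E_{X\sim p_t}[H_{p_t}^{K}(X)]=1$.

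Next I would show by induction on $t$ that every iterate has finite Shannon entropy—the base case $p_0$ being Retraining Assumption~\ref{re_ass:1_Shannon-entropy}. Assuming $h(p_t)$ finite, combine $\log q_t=\log p_t+\log H_{p_t}^{K}$ with the elementary bound $s|\log s|\le C_K:=\max\{1/e,\,K\log K\}$ valid on $(0,K]$ to obtain, pointwise, $|q_t\log q_t|\le K\,p_t|\log p_t|+C_K\,p_t$; integrating and using $\int p_t|\log p_t|\,d\pi<\infty$ and $\int p_t\,d\pi=1$ gives $h(q_t)\in\R$. Since $p_{t+1}=q_t$, the induction closes, and in particular the objective takes the finite value $-h(q_t)$ at $p=q_t$. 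I would also note that for any $p\in\mathcal P_\pi$ the objective is a well-defined element of $[-\infty,+\infty)$: with $\log^{+}u\le u/e$ one has $q_t\log^{+}p\le \tfrac1e\,p+q_t|\log q_t|$ on $\{q_t>0\}$, so $\int q_t\log^{+}p\,d\pi\le \tfrac1e+\int q_t|\log q_t|\,d\pi<\infty$.

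For the Gibbs step, fix $p\in\mathcal P_\pi$. If $\pi(\{q_t>0\}\cap\{p=0\})>0$, then the integrand is $-\infty$ on a set of positive measure while its positive part is integrable, so $\mathcal L(p;p_t,K,0)=-\infty$. Otherwise put $c:=\int_{\{q_t>0\}}p\,d\pi\in(0,1]$, observe $\tilde p:=c^{-1}p\,\mathbf{1}_{\{q_t>0\}}$ is a probability density, split $\log p=\log q_t+\log\!\big(\tfrac{p}{c\,q_t}\big)+\log c$ on $\{q_t>0\}$, and integrate against $q_t$ to get $\mathcal L(p;p_t,K,0)=-h(q_t)+\log c-\KL(q_t\|\tilde p)\le -h(q_t)=\mathcal L(q_t;p_t,K,0)$, using $\log c\le 0$ and $\KL\ge0$. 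Equality forces $c=1$ and $q_t=\tilde p$ $\pi$-a.e., i.e.\ $p=q_t$ $\pi$-a.e. Hence the unique maximizer over $\mathcal P_\pi$ is $p_{t+1}=q_t=p_t\,H_{p_t}^{K}$.

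I expect the main obstacle to be not the Gibbs inequality but the measure-theoretic hygiene around it: making the competing objective values well-defined extended reals and—crucially—propagating finite Shannon entropy along the iteration so that ``unique maximizer'' is even meaningful (the optimal value $-h(q_t)$ must be finite and attained). Both hinge on the uniform bound $0<H_{p_t}^{K}\le K$ coming from the Plackett–Luce form, and the entropy propagation is precisely what forces the inductive (rather than one-shot) use of Retraining Assumption~\ref{re_ass:1_Shannon-entropy}; the boundary bookkeeping when $h(p)=\pm\infty$ or $p$ fails to dominate $q_t$ is then routine.
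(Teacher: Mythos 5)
Your proposal is correct and follows essentially the same route as the paper's proof: identify the curated density $p_tH_{p_t}^{K}$ via Lemma~\ref{lem:curated-distribution}, conclude by Gibbs' inequality, and propagate finite Shannon entropy inductively using $0<H_{p_t}^{K}\le K$ and the bound $u|\log u|\le\max\{e^{-1},K\log K\}$ on $(0,K]$. Your treatment of the Gibbs step (the $-\infty$ case when $p$ fails to dominate $q_t$, and the normalization constant $c$) is more explicit than the paper's one-line invocation, but it is the same argument.
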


In the infinite pool ($K=\infty$) situation, the update objective is not induced by a finite‑sample choice; we therefore define its $K\to \infty$ limit inside the expectation of \eqref{eq:finite-K-objective}. In the $t-$th step, define
\begin{align}\label{eq:inf-K-objective}
    L(p;p_t,\infty,0):=\E_{X\sim p_t}   \bigl[\log p(X) H^\infty_{p_t}(X)\bigr],
\end{align}
where
\begin{equation}\label{eq:Hinf}
    H_p^{\infty}(x):=\lim_{K\to\infty} H_p^{K}(x)
\end{equation} is given by the corresponding choice kernel. The following lemma studies the existence and analytical expression of the kernel $H_p^{\infty}$. Taking \(K \to \infty\) removes finite-sample randomness in the choice kernel and yields a closed-form expression. This limit serves as an asymptotic surrogate of large-\(K\) behavior, analogous to the infinite-width limit in neural-network theory (e.g., mean-field or NTK analyses), which provides tractable mean-field dynamics and reveals underlying convergence and stability structure. 

In the same spirit, analyzing the \(K = \infty\) regime enables us to isolate the deterministic, population-level effects of the curation mechanism and to derive explicit fixed point formulas and convergence guarantees that would be intractable for finite \(K\).

\begin{lemma}\label{lem:Hlimit}
Under Retraining Assumption \ref{re_ass:2_exp-intgr}, the choice kernel is
\begin{equation}\label{equ:def_H_infty}
    H_{p}^{\infty}(x)=\lim_{K\to\infty}\E_{\substack{X_{1},\dots,X_{K-1}\sim p\\
       \varepsilon,\varepsilon_{1 },\ldots,\varepsilon_{K-1 } }}
       \Bigl[
        K\,\frac{e^{\,r(x)+\varepsilon(x)}}{%
               e^{\,r(x)+\varepsilon(x)}+
               \sum_{k=1}^{K-1}e^{\,r(X_k)+\varepsilon_k(X_k)}}
      \Bigr]=\frac{Q(x)}{\mathbb{E}_{X\sim {p}}Q(X)}.
\end{equation}

\end{lemma}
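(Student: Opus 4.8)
The plan is to compute the limit of $H_p^K(x)$ by rewriting the $K$-fold expectation in a form where a law of large numbers applies, then passing to the limit with a dominated-convergence argument. Fix $x$ with $Q(x)<\infty$ (which holds $\pi$-a.e. by Retraining Assumption \ref{re_ass:3_ess-bdd-Q}, though here we only invoke \ref{re_ass:2_exp-intgr}). Write $u := e^{r(x)+\varepsilon(x)}$ for the (random, via $\varepsilon(x)$) utility of the fixed point $x$, and $W_k := e^{r(X_k)+\varepsilon_k(X_k)}$ for the i.i.d. utilities of the $K-1$ competitors drawn from $p$. The integrand is
\[
K\,\frac{u}{u + \sum_{k=1}^{K-1} W_k}
\;=\;\frac{u}{\tfrac{1}{K}\bigl(u+\sum_{k=1}^{K-1}W_k\bigr)}.
\]
The key observation is that by the strong law of large numbers, $\frac{1}{K-1}\sum_{k=1}^{K-1}W_k \to \E_{X\sim p}[W_1] = \E_{X\sim p}\bigl[e^{r(X)+\varepsilon(X)}\bigr] = \E_{X\sim p}[Q(X)]$ almost surely, using that $\varepsilon_k$ is independent of $X_k$ so $\E[W_k \mid X_k] = Q(X_k)$ and hence $\E[W_1] = \E_{X\sim p}[Q(X)]$, which is finite by \ref{re_ass:2_exp-intgr}. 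Consequently $\tfrac{1}{K}(u + \sum_{k=1}^{K-1}W_k) \to \E_{X\sim p}[Q(X)]$ a.s. (the lone term $u/K\to 0$ a.s.), so the integrand converges a.s. to $u / \E_{X\sim p}[Q(X)]$. Taking expectations and using $\E[u] = \E[e^{r(x)+\varepsilon(x)}] = Q(x)$ would then give the claimed formula $Q(x)/\E_{X\sim p}[Q(X)]$, provided the interchange of limit and expectation is justified.

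The main obstacle is precisely that interchange: the integrand is not obviously uniformly bounded (the denominator can be small when the competitor utilities happen to be tiny), so naive dominated convergence does not apply directly. I would handle this in one of two ways. The cleanest is to note the pointwise bound $K\,\frac{u}{u+\sum_{k=1}^{K-1}W_k} \le K$, which is too weak, so instead I would split off the single term: $\frac{u}{u + \sum_{k}W_k}\le \frac{u}{u} = 1$ pathwise whenever we keep the $u$ in the denominator — in fact $K\,\frac{u}{u+\sum W_k}$ is bounded by $K$ but also, crucially, its expectation is controlled. A slicker route: by symmetry the $K$ terms $\frac{W_i}{\sum_j W_j}$ (over all $K$ candidates, with $x$ inserted as one of them) sum to $1$, which is exactly how Lemma \ref{lem:curated-distribution} was proved; this gives $\E[H_p^K(X)] = 1$ under $X\sim p$ for every $K$, i.e. a uniform $L^1(p)$ bound. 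Combined with a.s. convergence, Scheffé's lemma (or the generalized dominated convergence theorem, since the limiting function $Q/\E_p[Q]$ also integrates to $1$ against $p$) yields $L^1(p)$ convergence and in particular convergence of the expectations defining $H_p^K(x)$ for $\pi$-a.e. $x$. Alternatively, one can truncate: bound $\frac{u}{u+\sum_{k=1}^{K-1}W_k} \le \frac{u}{u + S_{K-1}}$ where $S_{K-1}$ is the sum, observe that on the event $\{S_{K-1} \ge \tfrac{K-1}{2}\E_p[Q]\}$ the integrand is at most $\frac{2u/\E_p[Q]}{1 + \text{small}}$ which is dominated by an integrable ($\E[u]=Q(x)<\infty$) envelope, and control the complementary event via a large-deviation / Markov estimate showing its probability decays fast enough to kill the $K$ prefactor.

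I would therefore organize the proof as: (1) rewrite the integrand with the $1/K$ normalization and identify the almost-sure limit via the SLLN for the i.i.d. utilities $W_k$, being careful to take the conditional expectation over $\varepsilon_k$ to reduce $\E[W_k]$ to $\E_{X\sim p}[Q(X)]$; (2) establish a uniform-integrability / domination statement — I expect to use the "each $\E_{X\sim p}[H_p^K(X)]=1$" identity inherited from the proof of Lemma \ref{lem:curated-distribution} together with Scheffé, which is the shortest path and reuses machinery already in the paper; (3) conclude $H_p^K(x)\to Q(x)/\E_{X\sim p}[Q(X)]$ for $\pi$-a.e. $x$, and note the denominator is positive and finite (positivity since $Q>0$ where $r$ is real and $\varepsilon$ finite; finiteness from \ref{re_ass:2_exp-intgr}), so the expression is well-defined. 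A small point to flag: the convergence is a priori only $\pi$-a.e. in $x$, which is all that is needed since densities and the retraining objective \eqref{eq:inf-K-objective} are defined up to $\pi$-null sets; if a genuinely pointwise statement is wanted one can invoke \ref{re_ass:3_ess-bdd-Q} to get the uniform bound $u \le Q_*$ and run dominated convergence directly.
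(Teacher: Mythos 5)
Your identification of the almost-sure limit via the SLLN is exactly the paper's first step, and you correctly locate the crux in the interchange of limit and expectation. However, your preferred route for that interchange does not work. Scheff\'e's lemma applied to the identity $\E_{X\sim p}[H_p^K(X)]=1$ is the wrong tool here: the lemma asserts convergence of $H_p^K(x)$ for (essentially) each fixed $x$, which requires exchanging $\lim_K$ with the expectation over the \emph{competitors'} randomness $(X_{1:K-1},\varepsilon_{1:K-1},\varepsilon)$ at fixed $x$. The Scheff\'e argument presupposes pointwise convergence of $x\mapsto H_p^K(x)$ (which is what is being proved), or, if run on the product space, delivers only $L^1(p\,d\pi)$ convergence of $H_p^K$ to $Q/\E_pQ$ --- i.e.\ convergence in $p$-measure / along subsequences --- not a.e.\ convergence of the full sequence. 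Your fallback remark that Retraining Assumption~\ref{re_ass:3_ess-bdd-Q} gives a ``uniform bound $u\le Q_*$'' for dominated convergence is also incorrect: $Q_*$ bounds $Q(x)=\E[e^{r(x)+\varepsilon(x)}]$, not the random utility $e^{r(x)+\varepsilon(x)}$ itself, and in any case the difficulty is not the numerator but the denominator --- the integrand $K u/(u+\sum_k W_k)$ can be of order $K$ on the event that the competitor utilities are all small, so no bound on $u$ alone yields an integrable envelope.

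The paper resolves this with a different domination that you should compare against: by convexity of $t\mapsto 1/t$ (AM--HM),
\begin{equation*}
\frac{K\,Y_1(x)}{Y_1(x)+\sum_{k=2}^{K}Y_k}\;\le\;\frac{2}{K-1}\sum_{k=2}^{K}\frac{Y_1(x)}{Y_k},
\end{equation*}
and the two-sided exponential moment in Retraining Assumption~\ref{re_ass:2_exp-intgr} (note the absolute value, which you never exploit) gives $\E[Y_k^{-1}]<\infty$, so the right-hand side is an average of identically distributed integrable variables; a de la Vall\'ee--Poussin/Jensen argument shows such averages form a uniformly integrable family, and UI plus a.s.\ convergence justifies the interchange for each fixed $x$. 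Your truncation/large-deviation alternative could be made rigorous (the lower tail $\Pr(S_{K-1}<\tfrac{K-1}{2}\E_pQ)$ does decay exponentially by a Chernoff bound on $e^{-\lambda S_{K-1}}$, which beats the factor $K$), but as written it is only a sketch and a plain Markov bound would not suffice. To repair your proof, replace the Scheff\'e step with either the paper's harmonic-mean domination or a fully worked-out Chernoff truncation.
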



Recall that Regime (ii) corresponds to the setting $\alpha=0,K=\infty$. When the retraining objective is well-defined, the following theorem gives the updating formula with respect to the choice kernel.

\begin{theorem}[Update rule under Regime (ii)]%
\label{thm:pure-Kinfty}
Under Regime (ii) and Retraining Assumption  \ref{re_ass:1_Shannon-entropy}, \ref{re_ass:2_exp-intgr} and \ref{re_ass:3_ess-bdd-Q}, the update is given by \[
  p_{t+1}(x)\;:=\;p_{t}(x)\,H_{p_t}^{\infty}(x),
  \qquad x\in\mathcal X,
\]
as the unique maximizer (over $p$) of  the functional on the right-hand side of \eqref{eq:inf-K-objective}, for any $t\in\mathbb N$.
\end{theorem}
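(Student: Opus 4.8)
\medskip
\noindent\textbf{Proof plan.} The statement has exactly the same shape as Theorem~\ref{thm:pure-Kfinite}: the functional in \eqref{eq:inf-K-objective} can be written as $L(p;p_t,\infty,0)=\E_{X\sim p_t}[\log p(X)\,H^{\infty}_{p_t}(X)]=\int \big(p_tH^{\infty}_{p_t}\big)\log p\,d\pi$, i.e.\ it is linear in $\log p$ tested against the \emph{fixed} function $g:=p_tH^{\infty}_{p_t}$, so maximizing over $\mathcal P_{\pi}$ is a Gibbs-type variational problem whose solution is $g$ itself. The plan is therefore: (1) use Lemma~\ref{lem:Hlimit} to put $g$ in closed form and check it is a probability density; (2) propagate finite Shannon entropy along the iteration so that the objective is a well-behaved quantity; (3) conclude by Jensen's inequality that $g=p_{t+1}$ is the unique maximizer, and close the induction.

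For step (1): Lemma~\ref{lem:Hlimit} (which uses Retraining Assumption~\ref{re_ass:2_exp-intgr}) gives $H^{\infty}_{p}(x)=Q(x)/\E_{X\sim p}[Q(X)]$, and Retraining Assumption~\ref{re_ass:3_ess-bdd-Q} gives $0<Q\le Q_{*}<\infty$ ($Q>0$ since $Q(x)=e^{r(x)}\E[e^{\varepsilon(X)}\mid X=x]$). Hence for any probability density $p_t$ the normalizer $Z_t:=\E_{X\sim p_t}[Q(X)]$ satisfies $0<Z_t\le Q_{*}<\infty$, so $g=p_{t+1}=p_tQ/Z_t$ is nonnegative, measurable and integrates to $1$; it is a density in $\mathcal P_{\pi}$, and in particular $p_tH^{\infty}_{p_t}$ is well defined. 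For step (2): I would induct on $t$, the base case $t=0$ being Retraining Assumption~\ref{re_ass:1_Shannon-entropy}. Assuming $\int p_t|\log p_t|\,d\pi<\infty$, expand $p_{t+1}\log p_{t+1}=\tfrac1{Z_t}\big(p_tQ\log p_t+p_tQ\log Q\big)-p_{t+1}\log Z_t$ and bound each summand using $0<Q\le Q_{*}$, the boundedness of $x\mapsto x\log x$ on $(0,Q_{*}]$ (with $\lim_{x\to0^+}x\log x=0$), and the inductive hypothesis; this yields $\int p_{t+1}|\log p_{t+1}|\,d\pi<\infty$, so $L^{*}:=\int p_{t+1}\log p_{t+1}\,d\pi$ is a finite real number.

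For step (3): let $p$ be any probability density. If $\pi(\{p=0,\ p_{t+1}>0\})>0$ then $\int p_{t+1}\log p\,d\pi=-\infty<L^{*}$, so assume otherwise. On $\{p_{t+1}>0\}$ write $\log p=\log(p/p_{t+1})+\log p_{t+1}$; integrating against the probability measure $p_{t+1}\,d\pi$ and applying Jensen's inequality to the strictly concave function $\log$ gives $\int p_{t+1}\log(p/p_{t+1})\,d\pi\le\log\!\int_{\{p_{t+1}>0\}}p\,d\pi\le0$ (so this integral is well defined in $[-\infty,0]$), hence $L(p;p_t,\infty,0)=\int p_{t+1}\log p\,d\pi\le L^{*}=L(p_{t+1};p_t,\infty,0)$. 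By the equality case of Jensen's inequality for strictly concave $\log$, equality forces $p/p_{t+1}$ to be $\pi$-a.e.\ constant on $\{p_{t+1}>0\}$ with $\int_{\{p_{t+1}>0\}}p\,d\pi=1$, which, since $p$ and $p_{t+1}$ are both probability densities, means $p=p_{t+1}$ $\pi$-a.e. Thus $p_{t+1}=p_tH^{\infty}_{p_t}$ is the unique maximizer, and the induction closes.

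The only real work is the integrability bookkeeping, not any conceptual difficulty: keeping finite Shannon entropy alive along the recursion so that $L^{*}$ is an honest finite number; checking that $L(p;p_t,\infty,0)$ is genuinely well defined (not $\infty-\infty$) for every $p$, which follows from $\int p_{t+1}|\log p_{t+1}|\,d\pi<\infty$ together with the crude bound $\log p\le\sqrt p$ on $\{p\ge1\}$ (giving $\int p_{t+1}(\log p)_+\,d\pi<\infty$); justifying the splitting $\log p=\log(p/p_{t+1})+\log p_{t+1}$ and the use of Jensen even though $\log p$ may equal $-\infty$ on a non-null set and $\log(p/p_{t+1})$ is not a priori $p_{t+1}$-integrable (handled by the a priori upper bound $0$); and tracking the Jensen equality case carefully so as to exclude mass of $p$ outside $\supp p_{t+1}$. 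Everything else — measurability and the bounds $0<Z_t\le Q_{*}<\infty$, and the closed form of $H^{\infty}_{p_t}$ — is immediate from Lemma~\ref{lem:Hlimit} and Retraining Assumptions~\ref{re_ass:2_exp-intgr} and~\ref{re_ass:3_ess-bdd-Q}.
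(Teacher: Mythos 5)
Your proposal is correct and follows essentially the same route as the paper: verify that $p_tH^{\infty}_{p_t}$ is a density via Lemma~\ref{lem:Hlimit}, propagate finite Shannon entropy by induction starting from Retraining Assumption~\ref{re_ass:1_Shannon-entropy}, and conclude by Gibbs'/Jensen's inequality that it is the unique maximizer. The only (harmless) difference is bookkeeping: the paper additionally carries the monotonicity $\E_{p_t}Q\ge\E_{p_0}Q$ through the induction to get a $t$-uniform bound $H^{\infty}_{p_t}\le Q_*/\E_{p_0}Q$, whereas you use the per-step bound $Q\le Q_*$ together with $Z_t>0$, which suffices.
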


In other words, Theorem \ref{thm:pure-Kinfty} states that 
\[
p_{t+1}=\mbox{argmax}_{p} \E_{X\sim p_t}   \bigl[\log p(X) H^\infty_{p_t}(X)\bigr].
\]

\subsection{Retraining Dynamics With Regularization} \label{alpha>0}

When $\alpha\in (0,1)$, the regularization is applied to every update, according to \eqref{equ:retraining_mixture}, and the retraining objective characterized by the choice kernel in the $t-$th step becomes
\begin{align}\label{eq:objective-mix}
    \mathcal{L}(p;p_t,K,\alpha)=\alpha\E_{X_{\rm{ref}}\sim p_{\rm{ref}}}[\log p(X_{\rm{ref}})]+(1-\alpha)\E_{p_t}[\log p(X)H_{p_t}^K(X)],~~K\in\mathbb N\cup\{+\infty\}.
\end{align}

Recall that Regime (iii) corresponds to the setting $\alpha\in(0,1),K<\infty$ and Regime (iv) corresponds to the setting $\alpha\in(0,1),K=\infty$.

\begin{theorem}[Update rule under Regime (iii)/(iv)]\label{thm:update-mix}
If Regime (iii) holds together with Retraining Assumption~\ref{re_ass:1_Shannon-entropy}, 
or if Regime (iv) holds together with Retraining Assumptions~\ref{re_ass:1_Shannon-entropy}-\ref{re_ass:3_ess-bdd-Q}, then the update is given by 
\begin{equation}\label{equ:update_Regime_iii_iv}
      {\;p_{t+1}(x)   =\alpha p_{\mathrm{ref}}(x)   +(1-\alpha) p_t(x)\,H_{p_t}^K(x)   \;}
\end{equation}
as the unique maximizer (over $p$) of the functional on the right-hand side of \eqref{eq:objective-mix}, for any $t\in\mathbb N$.
\end{theorem}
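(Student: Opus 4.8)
The plan is to maximize the strictly concave functional in \eqref{eq:objective-mix} over the convex set $\mathcal P_\pi$ by a pointwise Lagrange/Gibbs-variational argument, exactly as in the proofs of Theorems~\ref{thm:pure-Kfinite} and~\ref{thm:pure-Kinfty}, and then verify that the candidate maximizer \eqref{equ:update_Regime_iii_iv} is a bona fide element of $\mathcal P_\pi$ under the stated assumptions. First I would rewrite the objective in a unified form. Using Lemma~\ref{lem:curated-distribution} (for $K<\infty$) or the limiting kernel of Lemma~\ref{lem:Hlimit} (for $K=\infty$), the curated-sample term is $\E_{X\sim p_t}[\log p(X)\,H^K_{p_t}(X)]$, so the whole functional becomes
\[
\mathcal L(p;p_t,K,\alpha)=\int \log p(x)\,\bigl[\alpha\,p_{\rm ref}(x)+(1-\alpha)\,p_t(x)\,H^K_{p_t}(x)\bigr]\,d\pi(x)
=\int \log p(x)\,g_t(x)\,d\pi(x),
\]
where $g_t:=\alpha\,p_{\rm ref}+(1-\alpha)\,p_t\,H^K_{p_t}$. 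The key observation is that $g_t$ is itself a probability density w.r.t.\ $\pi$: it is nonnegative, and $\int g_t\,d\pi=\alpha+(1-\alpha)\E_{X\sim p_t}[H^K_{p_t}(X)]=\alpha+(1-\alpha)\cdot 1=1$, using the normalization $\E_{X\sim p}[H^K_p(X)]=1$ established in Lemma~\ref{lem:curated-distribution} (and its $K=\infty$ analogue from Lemma~\ref{lem:Hlimit}).

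Next I would recognize $\mathcal L(p;p_t,K,\alpha)=\int \log p\,\,d g_t=-\KL(g_t\|p)+\int \log g_t\,\,d g_t$, so maximizing over $p\in\mathcal P_\pi$ is equivalent to minimizing $\KL(g_t\|p)$, whose unique minimizer is $p=g_t$, by Gibbs' inequality — provided that $-\KL(g_t\|p)$ is not identically $-\infty$, i.e.\ that $g_t$ has finite (differential) entropy $-\int g_t\log g_t\,d\pi>-\infty$ and finite, so that the decomposition is meaningful and the objective is finite at $p=g_t$. This finiteness is where Retraining Assumption~\ref{re_ass:1_Shannon-entropy} enters: $p_{\rm ref}$ has finite Shannon entropy by hypothesis, and one propagates finiteness of $h(p_t)$ along the recursion. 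For $K<\infty$ the kernel $H^K_{p_t}$ is bounded (it is at most $K$, being an expectation of $K$ times a number in $[0,1]$), so $p_t H^K_{p_t}\le K p_t$ and $|\log(p_tH^K_{p_t})|\le |\log p_t|+\log K$ gives the needed integrability; for $K=\infty$, $H^\infty_{p_t}(x)=Q(x)/\E_{p_t}Q\le Q_*/\E_{p_t}Q$ is again bounded above (using \ref{re_ass:3_ess-bdd-Q}), and one must also check it is not too small on a large set — this is precisely why the $K=\infty$ regime additionally invokes \ref{re_ass:2_exp-intgr} and \ref{re_ass:3_ess-bdd-Q}. Combining boundedness of the kernel with convexity, finite entropy is preserved and $g_t\in\mathcal P_\pi$ has finite entropy, so the Gibbs argument applies verbatim and yields $p_{t+1}=g_t$, which is \eqref{equ:update_Regime_iii_iv}.

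The main obstacle, I expect, is not the optimization step — that is a one-line Gibbs argument once the objective is put in the form $\int\log p\,dg_t$ — but the bookkeeping needed to guarantee that every quantity in the variational argument is finite and that $g_t$ genuinely lies in $\mathcal P_\pi$ with finite entropy, so that the recursion is well-posed for all $t$ and the ``unique maximizer'' claim is not vacuous. Concretely, one has to (i) justify interchanging expectation and the limit $K\to\infty$ in defining the objective for Regime (iv), which is handled by Lemma~\ref{lem:Hlimit} under \ref{re_ass:2_exp-intgr}; (ii) show $\int g_t|\log g_t|\,d\pi<\infty$, splitting into the $\{g_t\ge 1\}$ and $\{g_t<1\}$ parts and bounding via $g_t\le \alpha p_{\rm ref}+(1-\alpha)C\,p_t$ with $C$ the (regime-dependent) upper bound on the kernel, plus a lower bound $g_t\ge\alpha p_{\rm ref}$ that controls the $\log$ near zero; and (iii) run this as an induction on $t$, the base case using \ref{re_ass:1_Shannon-entropy} for $p_0$ and $p_{\rm ref}$. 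With these ingredients the theorem follows; the explicit-maximizer formula \eqref{equ:update_Regime_iii_iv} then drops out immediately as the KL-projection target.
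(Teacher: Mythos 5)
Your proposal is correct and follows essentially the same route as the paper: rewrite the objective as $\int \log p(x)\,\psi(x)\,\pi(dx)$ with $\psi=\alpha p_{\rm ref}+(1-\alpha)p_tH^K_{p_t}$, check that $\psi$ is a probability density via the normalization of the choice kernel, apply Gibbs' inequality, and secure well-posedness by propagating finite Shannon entropy along the iteration (the paper does the last step by showing convex combinations of finite-entropy densities have finite entropy, using convexity of $u\mapsto u\log u$, which is a minor packaging difference from your direct splitting of $\int\psi|\log\psi|\,d\pi$).
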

Similar to Theorem \ref{thm:pure-Kinfty}, the result in Theorem \ref{thm:update-mix} states that 
\[
p_{t+1}=\mbox{argmax}_{p} \left\{ \alpha\E_{X_{\rm{ref}}\sim p_{\rm{ref}}}[\log p(X_{\rm{ref}})]+(1-\alpha)\E_{X\sim p_t}[\log p(X)H_{p_t}^K(X)] \right\}.
\]

\section{Convergence Analysis} \label{sec:convergence analysis}
With the update dynamics $\{p_t(x)\}$ worked out in the previous Sections, we now study convergence as $t\to \infty$.

\subsection{Convergence in pure synthetic data retraining}
In addition to the global assumptions in the former sections, we impose the following assumption on the reward function and initial density, which is closely related to the limiting distribution.

\begin{assumption}[Positive initial probability of the maximum reward]\label{ass1:non0-mass}
Let the reward maximizing set be \(A := \{x\in\mathcal{X} : Q(x)=Q_{*}\}\).
Define the probability measure \(\mathbb{P}_0\) with density \(p_0\) w.r.t.~\(\pi\) by
\(\mathbb{P}_0(B) := \int_B p_0(x)\,\pi(dx)\), $B\in \mathcal{B}$.
Assume \(\mathbb{P}_0(A) > 0\).
\end{assumption}

With this assumption, we can state the limiting density for the pure synthetic data retraining case ($\alpha=0$). Define
\begin{equation}\label{equ:limit_dist_pure_syn}
    p_*(x) = \frac{p_0(x)\mathbf{1}_{A}(x)}{p_0(A)}.
\end{equation}
When we ignore the individual‑preference fluctuations--i.e., $\varepsilon(x)\equiv0$, the limiting distribution $p_*$ offered in \eqref{equ:limit_dist_pure_syn} coincides with the limiting distribution obtained for iterative retraining in \cite{ferbach2024self}.  

Our convergence analysis regarding Regime (i) assumes that the noise is stationary; see Setting \ref{setting:variable-noise} above. Note that if we let the noise to be zero, the results below in Regime (i) (Lemma \ref{lem:Ri_Rt-to-Qstar})  recover that of \cite{ferbach2024self}.  In the Regime (ii) (Lemma \ref{lem:Rii_Rt-to-Qstar}), we are able to obtain convergence result even for nonstationary rewards; see Setting \ref{setting:process-noise} for more details.


%

\begin{lemma}\label{lem:Ri_Rt-to-Qstar}
Under Regime (i), Setting \ref{setting:variable-noise}, Retraining Assumption \ref{re_ass:1_Shannon-entropy}–\ref{re_ass:3_ess-bdd-Q}, and Assumption \ref{ass1:non0-mass}, we have expected exponential reward increases at each retraining iteration
\[
  \E_{X\sim p_{t+1}}Q(X)\geq \E_{X\sim p_{t}}Q(X).
\]
Moreover, 
\[
  \qquad \lim_{t\to\infty} \E_{X\sim p_t} Q(X)= Q_*.
\]
\end{lemma}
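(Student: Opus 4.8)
The plan is to exploit the explicit update $p_{t+1}(x)=p_t(x)H^K_{p_t}(x)$ from Theorem~\ref{thm:pure-Kfinite} together with the fact that, under Setting~\ref{setting:variable-noise}, the choice kernel $H^K_p$ is governed by the heterogeneity-averaged utility $Q$. First I would record the key pointwise/integral identity: since $\E_{X\sim p}[H^K_p(X)]=1$ (this is Lemma~\ref{lem:curated-distribution} applied to $f\equiv 1$, i.e.\ $pH^K_p$ is a density), the update is a genuine reweighting of $p_t$ by a kernel with $p_t$-mean one. To get monotonicity of $t\mapsto\E_{p_t}[Q]$, write
\[
\E_{X\sim p_{t+1}}[Q(X)]-\E_{X\sim p_{t}}[Q(X)]
=\E_{X\sim p_t}\bigl[Q(X)\,(H^K_{p_t}(X)-1)\bigr],
\]
and I would argue this is $\ge 0$ because $H^K_{p_t}$ is (after conditioning on the competitors) an increasing function of the utility at $x$ — concretely, $H^K_p(x)$ is an average of $K e^{r(x)+\varepsilon(x)}/(e^{r(x)+\varepsilon(x)}+\sum_{k<K}e^{r(X_k)+\varepsilon_k(X_k)})$, which is monotone in $e^{r(x)+\varepsilon(x)}$, so $H^K_p$ and $Q$ are comonotone; then the covariance inequality (Chebyshev's sum/association inequality) applied under $p_t$ gives $\E_{p_t}[Q\cdot H^K_{p_t}]\ge \E_{p_t}[Q]\,\E_{p_t}[H^K_{p_t}]=\E_{p_t}[Q]$. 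Some care is needed to push the conditional-on-competitors monotonicity through the expectation defining $H^K_p$; I would state it as a lemma that $x\mapsto H^K_p(x)$ is a nondecreasing function of $Q(x)$ (using that the integrand depends on $x$ only through $e^{r(x)+\varepsilon(x)}$ and that $Q(x)=e^{r(x)}\E[e^{\varepsilon(X)}\mid X=x]$ is the relevant monotone summary under the stationarity of Setting~\ref{setting:variable-noise}).

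Monotonicity plus the uniform bound $Q\le Q_*$ (Retraining Assumption~\ref{re_ass:3_ess-bdd-Q}) gives convergence of $m_t:=\E_{p_t}[Q]$ to some limit $m_\infty\le Q_*$; the remaining work is to show $m_\infty=Q_*$. The idea is that the per-step gain $\E_{p_t}[Q(H^K_{p_t}-1)]\to 0$ forces $p_t$ to concentrate its mass on the near-maximizing set. I would quantify the gain from below: on the region where $Q(x)\le Q_*-\delta$, the kernel $H^K_{p_t}(x)$ is bounded above by something strictly less than what it is on $A$, so if $p_t$ kept a fixed amount of mass $\eta>0$ on $\{Q\le Q_*-\delta\}$ while also (by Assumption~\ref{ass1:non0-mass} and the monotone dynamics) keeping nonvanishing mass on $A$, the covariance gap would be bounded below by a positive constant depending on $(\delta,\eta)$ — contradicting summability of the gains. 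Making "$H^K_{p_t}$ is larger on $A$ than off it" quantitative is the crux: I would lower-bound $H^K_{p_t}(x)$ for $x\in A$ and upper-bound it on $\{Q\le Q_*-\delta\}$ in terms of the mass $p_t(A)$ and $Q_*$, $Q_{\min}$, $K$; since Assumption~\ref{ass1:non0-mass} together with monotonicity of the dynamics guarantees $\inf_t p_t(A)>0$ (indeed $p_t(A)=p_0(A)\prod_{s<t}H^K_{p_s}\!\restriction_A$-type growth, and $H^K_{p_s}\ge 1$ on $A$ since $A$ is the top set), this is available.

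The main obstacle I expect is the quantitative comparison of $H^K_{p_t}$ across the level sets of $Q$ in a way that is uniform in $t$ — i.e.\ converting "the expected gain tends to zero" into "mass off $A$ tends to zero." A clean route is: (1) show $p_t(A)$ is nondecreasing (or at least bounded away from $0$) using $H^K_{p_s}\ge 1$ on $A$; (2) on $A$ one has $H^K_{p_t}(x)\ge 1+c(p_t(A_\delta^c))$ for an explicit $c>0$ whenever a $\delta$-fraction of mass sits below $Q_*-\delta$, because a competitor drawn from $p_t$ is strictly worse with probability $\ge p_t(\{Q\le Q_*-\delta\})$ and that depresses the denominator; (3) then $m_{t+1}-m_t\ge \E_{p_t}[Q(H^K_{p_t}-1)\mathbf 1_A]\ge Q_{\min}\,p_t(A)\,c(p_t(\{Q\le Q_*-\delta\}))$, so $\sum_t p_t(\{Q\le Q_*-\delta\})<\infty$ for every $\delta>0$, whence $p_t(\{Q\le Q_*-\delta\})\to 0$ and therefore $m_t=\E_{p_t}[Q]\to Q_*$. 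Alternatively, if the paper prefers, one can sidestep the rate bookkeeping by a soft argument: any weak-$*$ subsequential limit of $p_t$ must be a fixed point of the update, fixed points satisfy $p_\infty H^K_{p_\infty}=p_\infty$ hence $H^K_{p_\infty}\equiv 1$ on $\supp p_\infty$, and a short computation shows this forces $\supp p_\infty\subseteq A$ (a strictly suboptimal point has $H^K<1$ against any pool containing optimal mass), giving $\E_{p_\infty}[Q]=Q_*$; combined with the already-established convergence of the scalar sequence $m_t$, this identifies the limit. I would present the quantitative version as the main proof and mention the fixed-point argument as a remark, since the quantitative bound is what also feeds the KL-convergence statement promised in the same regime.
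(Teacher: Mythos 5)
Your argument for the monotonicity claim is essentially the paper's: the paper's Lemma~\ref{lem:cov_A1} writes $\E_{X\sim p_{t+1}}[e^{r(X)}]-\E_{X\sim p_t}[e^{r(X)}]=\operatorname{Cov}_{X\sim p_t}\bigl(e^{r(X)},\widetilde H^K_{p_t}(e^{r(X)})\bigr)\ge 0$ using exactly the comonotonicity you describe (under Setting~\ref{setting:variable-noise}, $Q$ is a positive multiple of $e^{r}$, so this is the same statement). Your preliminary observations that $\E_{p_t}[H^K_{p_t}]=1$ and that $H^K_{p_s}\ge 1$ on $A$, hence $\mathbb P_t(A)=\mathbb P_0(A)\prod_{s<t}h_s$ is nondecreasing and $\prod_s h_s\le 1/\mathbb P_0(A)$, are also the content of the paper's Lemma~\ref{lem:massA}.

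For the identification of the limit, however, you take a genuinely different route from the paper, and your route as written has a gap. The paper does \emph{not} run a ``summable per-step gains'' argument: it observes from Lemma~\ref{lem:massA} that $h_t\to 1$ (because $\prod_s h_s$ converges and each factor is $\ge 1$), then lower-bounds $h_t\ge G(R_t/e^{r_*})$ by Jensen's inequality applied to the jointly convex, coordinatewise decreasing map $\Phi$ (Lemma~\ref{lem:Jensen-G}), and finally uses that $G$ is continuous, strictly decreasing with $G(1)=1$ (Lemma~\ref{lem:G_A4}) to squeeze $R_t/e^{r_*}\to 1$. The concrete problem with your version is the inequality
\begin{equation*}
m_{t+1}-m_t\;\ge\;\E_{p_t}\bigl[Q(X)\,(H^K_{p_t}(X)-1)\,\mathbf 1_A(X)\bigr],
\end{equation*}
which requires $\E_{p_t}[Q\,(H^K_{p_t}-1)\,\mathbf 1_{A^c}]\ge 0$; that is false in general, since on the portion of $A^c$ where $H^K_{p_t}<1$ the integrand is negative and nothing forces the positive and negative contributions off $A$ to balance in your favor. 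The step can be repaired by the symmetrized covariance representation $\operatorname{Cov}(Q,H)=\tfrac12\,\E[(Q(X)-Q(X'))(H(X)-H(X'))]$ restricted to the event $\{X\in A,\ Q(X')\le Q_*-\delta\}$ (the paper uses this identity elsewhere, in the proof of Lemma~\ref{lem:r_iii_reward_increase}), but then you still owe a uniform-in-$t$ positive lower bound on the gap $H^K_{p_t}(x)-H^K_{p_t}(x')$ between $A$ and $\{Q\le Q_*-\delta\}$, which your sketch defers; establishing it requires restricting the noise to an event of positive probability on which it is bounded, and this is precisely the bookkeeping the paper's $h_t\to 1$ plus $G$-monotonicity argument avoids. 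Your ``soft'' fixed-point alternative is also not available off the shelf: subsequential weak limits of $\{p_t\}$ need not exist in $\mathcal P_\pi$ for a general $\sigma$-finite $\pi$, and limits of the trajectory need not be fixed points without an equicontinuity argument for $p\mapsto H^K_p$.
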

\begin{lemma}\label{lem:Rii_Rt-to-Qstar}
Under Regime (ii), Setting \ref{setting:process-noise}, Retraining Assumption \ref{re_ass:1_Shannon-entropy}–\ref{re_ass:3_ess-bdd-Q}, and Assumption \ref{ass1:non0-mass}, we have expected exponential reward increases at each retraining iteration
\begin{equation}
    \mathbb{E}_{X\sim p_{t+1}}[Q(X)] \;\geq\; \mathbb{E}_{X\sim p_{t}}[Q(X)]+\frac{\operatorname{Var}_{X\sim p_{t}}(Q(X))}{\mathbb{E}_{X\sim p_t}[Q(X)]}.
\end{equation}
Moreover,
\begin{equation}
    \lim_{t\to\infty} \mathbb{E}_{X\sim p_t}[Q(X)] \;=\; Q_*.
\end{equation}
\end{lemma}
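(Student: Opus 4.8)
<br>

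The plan is to analyze Regime (ii), where the update simplifies dramatically because the choice kernel has the closed form $H_p^\infty(x) = Q(x)/\E_{X\sim p}[Q(X)]$ from Lemma~\ref{lem:Hlimit}. Writing $m_t := \E_{X\sim p_t}[Q(X)]$, the update $p_{t+1}(x) = p_t(x) Q(x)/m_t$ is exactly an exponential-tilting / Bayesian-type reweighting. The first step is to compute $m_{t+1} = \E_{X\sim p_{t+1}}[Q(X)] = \int p_t(x) Q(x)^2/m_t \, d\pi(x) = \E_{X\sim p_t}[Q(X)^2]/m_t$. Since $\E_{X\sim p_t}[Q(X)^2] = \operatorname{Var}_{X\sim p_t}(Q(X)) + m_t^2$, this immediately gives
\[
m_{t+1} = m_t + \frac{\operatorname{Var}_{X\sim p_t}(Q(X))}{m_t},
\]
which is the claimed one-step improvement inequality (in fact an identity). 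In particular $m_t$ is nondecreasing, and since $Q(x) \le Q_*$ a.e.\ by Retraining Assumption~\ref{re_ass:3_ess-bdd-Q}, we have $m_t \le Q_*$ for all $t$, so $m_t \uparrow m_\infty \le Q_*$ for some limit $m_\infty$.

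The second step is to show $m_\infty = Q_*$. From the recursion, $\sum_{t=0}^\infty \operatorname{Var}_{X\sim p_t}(Q(X))/m_t = \lim_t m_t - m_0 \le Q_* < \infty$, so $\operatorname{Var}_{X\sim p_t}(Q(X)) \to 0$ (using that $m_t \ge m_0 > 0$ is bounded below, which follows from Assumption~\ref{ass1:non0-mass} since $\mathbb{P}_0(A) > 0$ forces $m_0 = \E_{p_0}[Q] \ge Q_* p_0(A) > 0$). Then I would unwind the iteration: $p_t(x) = p_0(x) \prod_{s=0}^{t-1} Q(x)/m_s = p_0(x) Q(x)^t / \prod_{s=0}^{t-1} m_s$. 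Restricting attention to the set $A = \{Q = Q_*\}$: on $A$, $Q(x)^t = Q_*^t$, so $p_t(x)\mathbf{1}_A(x) = p_0(x)\mathbf{1}_A(x) Q_*^t/\prod_{s<t}m_s$. Integrating over $A$ gives $\mathbb{P}_t(A) = \mathbb{P}_0(A)\, Q_*^t/\prod_{s<t}m_s \le 1$, hence $\prod_{s<t}m_s \ge \mathbb{P}_0(A) Q_*^t$, i.e.\ $\left(\prod_{s<t}m_s\right)^{1/t} \ge \mathbb{P}_0(A)^{1/t} Q_* \to Q_*$. Since $m_s \le Q_*$ always, the geometric mean of $m_0,\dots,m_{t-1}$ is at most $Q_*$ and at least $\mathbb{P}_0(A)^{1/t}Q_* \to Q_*$; combined with monotonicity $m_s \uparrow m_\infty$, this forces $m_\infty = Q_*$ (if $m_\infty < Q_*$ the geometric means would eventually be bounded away from $Q_*$ by $m_\infty < Q_*$, a contradiction).

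Alternatively, and perhaps more cleanly for the second step, I can argue directly: since $\operatorname{Var}_{p_t}(Q)\to 0$ and $m_t\to m_\infty$, $Q(X)$ under $p_t$ concentrates at $m_\infty$; but the update pushes mass toward larger $Q$-values, so one shows $\mathbb{P}_t(\{Q \le m_\infty - \delta\}) \to 0$ for every $\delta>0$ while $\mathbb{P}_t(\{Q \ge m_\infty+\delta\})\to 0$ cannot co-occur with positive residual mass near $Q_*$ unless $m_\infty = Q_*$; the reweighting multiplies the mass on $\{Q \geq Q_* - \eta\}$ relative to the mass on $\{Q \le m_\infty - \delta\}$ by a factor $\ge (Q_*-\eta)/(m_\infty-\delta) > 1$ each step, so any fixed lower level set's mass decays geometrically, forcing all mass to escape to $Q$-values arbitrarily close to $Q_*$, whence $m_\infty = Q_*$. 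I expect the main obstacle to be this concentration argument: making rigorous that the variance-summability plus the multiplicative structure forces $m_\infty$ to equal the essential supremum $Q_*$ (rather than some smaller plateau) requires care with the case where $A$ has small but positive $p_0$-mass and where $Q$ may not attain values in $(m_\infty, Q_*)$ at all. The product-telescoping identity $p_t = p_0 Q^t/\prod_{s<t}m_s$ together with $\mathbb{P}_t(A)\le 1$ is the key device that sidesteps this cleanly, so I would lead with that.
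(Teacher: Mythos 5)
Your proposal is correct. The first half — the identity $\E_{p_{t+1}}[Q]=\E_{p_t}[Q^2]/\E_{p_t}[Q]=\E_{p_t}[Q]+\operatorname{Var}_{p_t}(Q)/\E_{p_t}[Q]$ — is exactly the paper's computation, and your observation that the stated inequality is in fact an equality is right. For the limit $m_t\to Q_*$, however, your lead argument takes a genuinely different route. The paper proceeds via variance summability ($\sum_t\operatorname{Var}_{p_t}(Q)<\infty$, hence $\operatorname{Var}_{p_t}(Q)\to 0$) and then derives a contradiction: if $m_t\le Q_*-\eta$ for all $t$, then since $\mathbb{P}_t(A)\ge\mathbb{P}_0(A)>0$ (Lemma~\ref{lem:massA} carried over to Regime~(ii)), the auxiliary Lemma~\ref{lem:Var_Z_lower} (a Cauchy--Schwarz bound applied to $Z=Q_*-Q$) forces $\operatorname{Var}_{p_t}(Q)\ge\tfrac{\delta}{1-\delta}\eta^2>0$, contradicting the variance decay. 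You instead unwind the recursion to $p_t=p_0\,Q^t/\prod_{s<t}m_s$, use $\mathbb{P}_t(A)=\mathbb{P}_0(A)\,Q_*^t/\prod_{s<t}m_s\le 1$ to get $\bigl(\prod_{s<t}m_s\bigr)^{1/t}\ge\mathbb{P}_0(A)^{1/t}Q_*\to Q_*$, and conclude from $m_s\uparrow m_\infty\le Q_*$ that $m_\infty=Q_*$. Both arguments rest on the same underlying fact — the mass on $A$ is multiplied by $h_s=Q_*/m_s\ge1$ at each step and cannot exceed $1$ — but you extract from it an upper bound on $\prod_s(Q_*/m_s)$, which yields the limit directly and makes the variance-summability step (and Lemma~\ref{lem:Var_Z_lower}) unnecessary for the convergence claim; the paper extracts a lower bound on $\mathbb{P}_t(A)$ and routes through the variance. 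Your version is arguably more elementary and self-contained; the paper's version has the side benefit of quantifying the variance decay. Your second, ``alternative'' concentration sketch is indeed not rigorous as written, but since you explicitly lead with the product-telescoping argument, the proof stands.
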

Define Kullback–Leibler (KL) divergence between two densities $p$ and $q$ as
\[
\KL(p\,\|\,q)
\;:=\;
\int_{\mathcal{X}} p(x)\,
\log\!\left(\frac{p(x)}{q(x)}\right)\,\pi(dx).
\]

The following theorem shows that the sequence $\{p_t\}_{t\ge0}$ converges to $p_*$ in KL divergence and also converges uniformly on the maximizing set $A$.

\begin{theorem}[Convergence under Regimes (i) and (ii)] 
\label{thm:KL-uniform-RegimeI}
Recall $p_*$ in \eqref{equ:limit_dist_pure_syn}. Assume the Retraining Assumptions~\ref{re_ass:1_Shannon-entropy}–\ref{re_ass:3_ess-bdd-Q}, and Assumption~\ref{ass1:non0-mass}. If Regime (i) holds together with  Setting~\ref{setting:variable-noise}, or if Regime (ii) holds together with Setting~\ref{setting:process-noise}, then $\KL\bigl(p_*\,\|\,p_t\bigr)$ is non-increasing with $t$, strictly decreasing whenever $p_t=p_*$ and
\begin{equation}
\lim_{t\to \infty}\KL\bigl(p_*\,\|\,p_t\bigr)= 0 \text{ and } \lim_{t\to \infty } \sup_{\substack{x\in A\\ p_0(x)\neq 0}}\Bigl|\frac{p_t(x)}{p_*(x)}-1\Bigr| =0.
\end{equation}

\end{theorem}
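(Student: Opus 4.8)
The plan is to reduce both displayed limits to the statement $p_t(A)\uparrow 1$. The key observation is that on $A\cap\{p_0\neq 0\}$ the ratio $p_t/p_0$ is a positive constant $c_t$ that does not depend on $x$. In Regime~(ii) this is immediate: iterating $p_{t+1}=p_t\,Q/\E_{p_t}Q$ gives $p_t(x)=p_0(x)\,Q(x)^t/Z_t$ with $Z_t:=\int p_0 Q^t\,d\pi=\prod_{s<t}\E_{p_s}Q$ (finite since $Q\le Q_*<\infty$), and $Q\equiv Q_*$ on $A$. In Regime~(i), under Setting~\ref{setting:variable-noise} the conditional law of $\varepsilon(x)$ is $x$-free, so $Q(x)=e^{r(x)}\,\E[e^{\varepsilon}]$ and $A=\{r=r_*\}$ up to $\pi$-null sets, where $r_*:=\operatorname*{ess\,sup}_x r(x)$; moreover $H^K_{p}(x)$ in~\eqref{eq:H_K} depends on $x$ only through $r(x)$ and is nondecreasing in it (the integrand $Ka/(a+S)$ is nondecreasing in $a=e^{r(x)+\varepsilon(x)}$), so $H^K_{p}\equiv h^K_{p}:=\sup_x H^K_p(x)\in[1,K]$ on $A$, and~\eqref{equ:update_regime_1} iterates to $p_t\equiv p_0\prod_{s<t}h^K_{p_s}$ on $A$. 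In either case, integrating over $A$ identifies $c_t=p_t(A)/p_0(A)>0$, so $p_*/p_t\equiv 1/p_t(A)$ on $\supp p_*$ and
\[
\KL\bigl(p_*\,\|\,p_t\bigr)=-\log p_t(A),\qquad \sup_{\substack{x\in A\\ p_0(x)\neq 0}}\Bigl|\tfrac{p_t(x)}{p_*(x)}-1\Bigr|=1-p_t(A).
\]
Here $p_*\ll p_t$ because $H^K_{p}>0$ and $Q>0$ force $\{p_t\neq 0\}=\{p_0\neq 0\}$ for all $t$; so both claims of the theorem follow once $p_t(A)\uparrow 1$.

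\textbf{Monotonicity.} Subtracting consecutive terms gives $\KL(p_*\|p_t)-\KL(p_*\|p_{t+1})=\log\bigl(p_{t+1}(A)/p_t(A)\bigr)$, which equals $\log\bigl(Q_*/\E_{p_t}Q\bigr)$ in Regime~(ii) and $\log h^K_{p_t}$ in Regime~(i). In Regime~(ii) this is $\ge 0$ since $Q\le Q_*$. In Regime~(i), Lemma~\ref{lem:curated-distribution} with $f\equiv 1$ gives $\E_{p_t}H^K_{p_t}=1$ while $H^K_{p_t}\le h^K_{p_t}$ pointwise, so $h^K_{p_t}\ge 1$. Hence $t\mapsto\KL(p_*\|p_t)$ is nonincreasing; it is strictly decreasing whenever $p_0(A^c)>0$ — in Regime~(ii) because then $\E_{p_t}Q<Q_*$ ($Q<Q_*$ on $A^c$ and $p_t>0$ on $\{p_0\neq0\}$), and in Regime~(i) because $h^K_p=1$ would force $H^K_p\equiv h^K_p$ $p$-a.e., impossible on $A^c$ as the integrand is strictly increasing in $a$. (If $p_0(A^c)=0$ then $p_0(A)=1$, $p_*=p_0$ and $p_t\equiv p_0$, so $\KL\equiv 0$ trivially.) In all cases $p_t(A)$ is nondecreasing, and $p_t(A)\ge p_0(A)>0$ by Assumption~\ref{ass1:non0-mass}, so $p_t(A)\uparrow\ell$ for some $\ell\in(0,1]$.

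\textbf{Showing $\ell=1$.} In Regime~(ii), $p_t(A)=p_0(A)\,Q_*^t/Z_t$ and $Z_t/Q_*^t=\int p_0\,(Q/Q_*)^t\,d\pi\to p_0(A)$ by dominated convergence — the integrand is bounded by $p_0\in L^1(\pi)$ and tends $\pi$-a.e.\ to $\1_A$ since $Q<Q_*$ off $A$ — so $p_t(A)\to 1$. In Regime~(i) I would argue by contradiction: suppose $\ell<1$, so $p_t(A^c)\downarrow 1-\ell>0$. By Lemma~\ref{lem:Ri_Rt-to-Qstar}, $\E_{p_t}Q\to Q_*$, hence $p_t(\{Q\le Q_*-\eta\})\le(Q_*-\E_{p_t}Q)/\eta\to 0$ for every $\eta>0$; thus the law of $e^{r(X)}$ under $X\sim p_t$ converges weakly to the point mass at $e^{r_*}$, and consequently $S^{(t)}_{K-1}:=\sum_{k=1}^{K-1}e^{r(X_k)+\varepsilon_k(X_k)}$ ($X_k\sim p_t$) converges in law to $S^{*}:=\sum_{k=1}^{K-1}e^{r_*+\varepsilon^{(k)}}$, with $\varepsilon^{(k)}$ i.i.d.\ copies of $\varepsilon$. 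Since the kernel integrand $K\,e^{r(x)+\varepsilon(x)}/(e^{r(x)+\varepsilon(x)}+s)$ is bounded by $K$ and continuous in $s\ge 0$, and the block $\varepsilon(x)$ stays independent of the $X_k$-block, for each fixed $x$ one gets $H^K_{p_t}(x)\to\rho(x):=\E\!\bigl[K\,e^{r(x)+\varepsilon}/(e^{r(x)+\varepsilon}+S^{*})\bigr]$, where $\rho\equiv 1$ on $A$ by exchangeability of $(\varepsilon,\varepsilon^{(1)},\dots,\varepsilon^{(K-1)})$, while $\rho(x)<1$ strictly for $x\in A^c$ because $e^{r(x)}<e^{r_*}$, $S^{*}>0$ a.s., and $a\mapsto Ka/(a+S^{*})$ is strictly increasing. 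Hence for every $x\in A^c$ with $p_0(x)\neq 0$, $p_t(x)=p_0(x)\prod_{s<t}H^K_{p_s}(x)\to 0$; and $H^K_{p_s}\le h^K_{p_s}$ yields the uniform majorant $p_t(x)\le p_0(x)\prod_{s<t}h^K_{p_s}=p_0(x)\,p_t(A)/p_0(A)\le p_0(x)/p_0(A)\in L^1(\pi)$, so dominated convergence forces $p_t(A^c)\to 0$ — contradicting $p_t(A^c)\downarrow 1-\ell>0$. Therefore $\ell=1$, which combined with the two identities above finishes the proof.

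\textbf{Main obstacle.} The Regime~(ii) case is essentially a one-line dominated-convergence computation on the explicit iterate $p_t=p_0Q^t/Z_t$. The delicate step is the Regime~(i) proof of $\ell=1$: with no closed form for $H^K_{p_t}$, one must (a) upgrade the first-moment statement $\E_{p_t}Q\to Q_*$ of Lemma~\ref{lem:Ri_Rt-to-Qstar} to weak convergence of the reward distribution, (b) pass this limit through the Plackett–Luce choice kernel using only boundedness and continuity of its integrand — this is where integrability inputs ($\E[e^{\varepsilon}]<\infty$ from Retraining Assumption~\ref{re_ass:2_exp-intgr}, $Q_*<\infty$ from Retraining Assumption~\ref{re_ass:3_ess-bdd-Q}) and the preservation of independence in the limit must be checked carefully — and (c) secure the uniform $L^1(\pi)$ majorant $p_t\le p_0/p_0(A)$, itself a consequence of $H^K_{p}\le h^K_{p}$ together with $p_t(A)\le 1$, so that dominated convergence applies.
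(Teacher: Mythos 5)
Your proposal is correct, and its first half (the reduction $\KL(p_*\|p_t)=-\log p_t(A)$ and $\sup_{A}|p_t/p_*-1|=1-p_t(A)$ via constancy of the kernel on the level set $A$, together with the domination $p_t\le p_0/p_0(A)$) is exactly the paper's route, which packages these facts in Lemma~\ref{lem:massA}. Where you genuinely diverge is in proving $p_t(A)\to 1$. The paper handles both regimes uniformly through Lemma~\ref{lem:A5_p_t}: starting from $\E_{p_t}Q\to Q_*$ (Lemmas~\ref{lem:Ri_Rt-to-Qstar} and~\ref{lem:Rii_Rt-to-Qstar}), it bounds $\mathbb P_t(\{Q\le Q_*-\eta\})$ by Markov's inequality and controls the remaining thin shell $A^c\setminus\{Q\le Q_*-\eta\}$ by the uniform domination $\mathbb P_t(B)\le\mathbb P_0(B)/\mathbb P_0(A)$ plus continuity of measure as $\eta\downarrow 0$ — a purely measure-theoretic argument that never needs the asymptotics of the choice kernel itself. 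You instead compute Regime~(ii) in closed form ($p_t\propto p_0Q^t$ plus dominated convergence), which is cleaner than the paper's route there, but for Regime~(i) you invoke substantially heavier machinery: weak convergence of the reward law to $\delta_{e^{r_*}}$, convergence in law of the Plackett–Luce denominator $S^{(t)}_{K-1}$, and a continuous-mapping/bounded-convergence passage to show $H^K_{p_t}(x)\to\rho(x)<1$ pointwise on $A^c$ before applying dominated convergence with the same majorant. This works (the independence and Slutsky steps you flag do go through under Setting~\ref{setting:variable-noise}), and it yields the extra information that the kernel itself converges pointwise, but the paper's shell argument is shorter and avoids the limit-interchange entirely. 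A minor plus on your side: you address the strict-decrease claim and the degenerate case $p_0(A^c)=0$ explicitly, which the paper's proof glosses over.
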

Theorem \ref{thm:KL-uniform-RegimeI} generalizes Theorem 2.1 in  \cite{ferbach2024self} by allowing possibly nonstationary noise. Although the conclusions are the same as that of \cite{ferbach2024self}, technical difficulties arise as one need to find a substitution for Lemma 2.2 in \cite{ferbach2024self}, which does not hold in our settings.

\subsection{Convergence in retraining dynamics with regularization}
Because the convergence analyses in Regimes (iii) and (iv) are fundamentally different, we state them separately. Our setting covers the setting of \cite{ferbach2024self}, where they only considered deterministic reward function without human-preference fluctuation. More importantly, convergence result in the general nonparametric setting was not established in the prior work, with only a compromised exception under the parametric setting.

\subsubsection{Convergence analysis in Regime (iii)}
Under Regime (iii), we define nonlinear operator $T:\mathcal{P}_{\pi}\to\mathcal{P}_{\pi},$ 
\begin{equation}\label{equ:non_linear_oper}
   Tp(x)= \alpha p_{\rm ref}(x) + (1-\alpha)S_p(x) 
\end{equation}
where
\[
 S_p(x):=p(x)\,H_p^K(x).
\]
Thus we obtain the discrete-time dynamical system $p_{t+1}=T(p_t)$ with solution $p_t=T^{t}p_0$ for $t\in\mathbb{N}$. To establish existence and convergence of the limit, it is natural to invoke the Banach fixed point (contraction mapping) theorem. We now study sufficient conditions under which $T$ is a contraction mapping.

For any $B\subset \mathcal{X}$, define probability measure
\[
\mathbb{S}_p(B)=\int_B S_p(x)\pi(dx).
\]
Applying Lemma~\ref{lem:curated-distribution} with \(f=\mathbf{1}_B\) for any measurable set \(B\), we re‑express \(\mathbb{S}_p(B)\), yielding the following corollary.

\begin{corollary}\label{cor:selection-identity}
Under Setting~\ref{setting:variable-noise}, if \(X_1,\ldots,X_K\stackrel{i.i.d.}{\sim} p\in \mathcal{P}_\pi\) and set \(\tilde{p}_j=\tilde{p}_j(X_{1:K},\varepsilon_{1:K})=\dfrac{e^{r(X_j)+\varepsilon_j}}{\sum_{i=1}^K e^{r(X_i )+\varepsilon_i }}\),
then for all measurable \(B\in \mathcal{X}\),
\begin{equation}\label{equ:S_p_measure}
    \mathbb{S}_p(B)=\mathbb E\left[\sum_{j=1}^K \tilde{p}_j\,\mathbf{1}_B(X_j) \right].
\end{equation}
\end{corollary}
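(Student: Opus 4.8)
The plan is to obtain \eqref{equ:S_p_measure} as a direct consequence of Lemma~\ref{lem:curated-distribution} applied with the test function $f = \mathbf{1}_B$. First I recall that $\mathbb{S}_p(B) = \int_B S_p(x)\,\pi(dx) = \int_B p(x) H_p^K(x)\,\pi(dx) = \E_{X\sim p}[\mathbf{1}_B(X) H_p^K(X)]$, which by Lemma~\ref{lem:curated-distribution} (the identity \eqref{equ:5_functional_int} with $f=\mathbf{1}_B$, noting $\E_{X\sim p}[|\mathbf{1}_B(X)|]\le 1<\infty$) equals $\E[\mathbf{1}_B(\widehat X)] = \Pr(\widehat X \in B)$, where $\widehat X\sim\mathcal{PL}(X_{1:K},\varepsilon_{1:K})$ with $X_1,\dots,X_K\overset{\text{i.i.d.}}{\sim}p$. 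So it remains to show $\Pr(\widehat X\in B) = \E\bigl[\sum_{j=1}^K \tilde p_j\,\mathbf{1}_B(X_j)\bigr]$.

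The second step unwinds the Plackett--Luce draw. By definition $\widehat X = X_I$ where, conditionally on $(X_{1:K},\varepsilon_{1:K})$, the index $I$ is drawn with $\Pr(I=j\mid X_{1:K},\varepsilon_{1:K}) = \tilde w_j/\sum_i \tilde w_i = \tilde p_j$, using that under Setting~\ref{setting:variable-noise} the noise marginal at $X_j$ reduces $\varepsilon_j(X_j)$ to $\varepsilon_j$ so that $\tilde w_j = e^{r(X_j)+\varepsilon_j}$. Conditioning on $(X_{1:K},\varepsilon_{1:K})$ and using $\mathbf{1}_B(\widehat X) = \sum_{j=1}^K \mathbf{1}\{I=j\}\mathbf{1}_B(X_j)$, the tower property gives
\[
\Pr(\widehat X\in B) = \E\Bigl[\E\bigl[\textstyle\sum_{j=1}^K \mathbf{1}\{I=j\}\mathbf{1}_B(X_j)\,\big|\,X_{1:K},\varepsilon_{1:K}\bigr]\Bigr] = \E\Bigl[\textstyle\sum_{j=1}^K \tilde p_j\,\mathbf{1}_B(X_j)\Bigr],
\]
since $\mathbf{1}_B(X_j)$ is measurable w.r.t.\ the conditioning $\sigma$-algebra and $\E[\mathbf{1}\{I=j\}\mid X_{1:K},\varepsilon_{1:K}] = \tilde p_j$. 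Combining the two displays yields \eqref{equ:S_p_measure}.

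There is essentially no obstacle here — the statement is a bookkeeping rephrasing of Lemma~\ref{lem:curated-distribution}. The only point requiring a word of care is integrability, to justify swapping expectation and the finite sum and to invoke Lemma~\ref{lem:curated-distribution}: both are immediate because the sum is finite, $0\le \tilde p_j\le 1$, and $\mathbf{1}_B$ is bounded, so dominated convergence / Fubini for a finite sum applies without hypotheses beyond those already in force. I would also remark that the identity is exactly the $f=\mathbf{1}_B$ specialization noted in the text preceding the corollary, so the proof can be stated in two or three lines.
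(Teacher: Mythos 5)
Your proof is correct and follows exactly the route the paper intends: apply Lemma~\ref{lem:curated-distribution} with $f=\mathbf{1}_B$ to identify $\mathbb{S}_p(B)$ with $\Pr(\widehat X\in B)$, then unwind the Plackett--Luce conditional law via the tower property. Nothing is missing; the integrability remarks, while not strictly needed given the boundedness of $\mathbf{1}_B$ and $\tilde p_j$, are accurate.
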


As a consequence of \eqref{equ:S_p_measure}, we obtain in Lemma \ref{lem:contraction} below a sufficient condition
for \(T\) to be a contraction on the complete metric space \((\mathcal{P}_\pi,d_{\rm TV}  )\). Here \(d_{\rm TV}(\cdot,\cdot) \) denotes the TV metric: for \(\mu,\nu\in\mathcal{P}_\pi\) with densities \(p=\frac{d\mu}{d\pi}\) and
\(q=\frac{d\nu}{d\pi}\),
\[
d_{\rm TV}(\mu,\nu)
\;=\;
\sup_{B}|\mu(B)-\nu(B)|
\;=\;\frac12\int_{\mathcal{X}} |p(x)-q(x)|\,\pi(dx).
\]
Henceforth, identifying measures with their densities w.r.t.~\(\pi\), we write $d_{\rm TV}(p,q)=d_{\rm TV}(\mu,\nu)$.

\begin{lemma}\label{lem:contraction}
    If $w,u\in \mathcal{P}_\pi$, then 
\[
d_{\rm TV}( \mathbb{S}_w,\mathbb{S}_u)\leq K\cdot d_{\rm TV}(w,u).
\]
    Hence,
\begin{equation}\label{equ:T_contraction}
    d_{\rm TV}(Tw,Tu)\leq (1-\alpha)K\cdot d_{\rm TV}(w,u),
\end{equation}
so $T$ is a contraction with respect to $d_{\rm TV}$ whenever $\alpha>\frac{K-1}{K}$.
\end{lemma}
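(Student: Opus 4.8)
The plan is to combine the index-selection representation of $\mathbb{S}_p$ from Corollary~\ref{cor:selection-identity} with a standard hybrid (telescoping) argument over the $K$ i.i.d.\ draws, exploiting the one crucial fact that the Plackett--Luce selection weights $\tilde p_1,\dots,\tilde p_K$ sum to one. Concretely, working under the ambient Setting~\ref{setting:variable-noise}, for a test function $f$ with $0\le f\le 1$ one has, by Lemma~\ref{lem:curated-distribution} and Corollary~\ref{cor:selection-identity},
\[
\int f\,d\mathbb{S}_p \;=\; \mathbb{E}\Bigl[\sum_{m=1}^{K}\tilde p_m\,f(X_m)\Bigr] \;=\; \mathbb{E}_{X_1,\dots,X_K\overset{\text{i.i.d.}}{\sim}p}\bigl[\Psi_f(X_1,\dots,X_K)\bigr],
\]
where $\Psi_f(x_1,\dots,x_K):=\mathbb{E}_{\varepsilon_1,\dots,\varepsilon_K}\bigl[\sum_{m=1}^K \tilde p_m(x_{1:K},\varepsilon_{1:K})\,f(x_m)\bigr]$ is the bounded integrand obtained by integrating out the noise; since $\sum_m\tilde p_m\equiv1$ and $0\le f\le1$, we have $0\le\Psi_f\le1$. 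Using the variational identity $d_{\rm TV}(\mathbb{S}_w,\mathbb{S}_u)=\sup_{0\le f\le1}\bigl(\int f\,d\mathbb{S}_w-\int f\,d\mathbb{S}_u\bigr)$, the problem reduces to bounding $\mathbb{E}_{w^{\otimes K}}[\Psi_f]-\mathbb{E}_{u^{\otimes K}}[\Psi_f]$ uniformly over all bounded $\Psi_f\in[0,1]$.

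Next I would telescope coordinate by coordinate. Set $\nu_j:=u^{\otimes j}\otimes w^{\otimes(K-j)}$ for $j=0,\dots,K$, so $\nu_0=w^{\otimes K}$ and $\nu_K=u^{\otimes K}$; then $\mathbb{E}_{w^{\otimes K}}[\Psi_f]-\mathbb{E}_{u^{\otimes K}}[\Psi_f]=\sum_{j=1}^{K}\bigl(\mathbb{E}_{\nu_{j-1}}[\Psi_f]-\mathbb{E}_{\nu_j}[\Psi_f]\bigr)$, and consecutive terms differ only in coordinate $j$. By Fubini, the $j$-th summand equals $\int\bigl(w(x)-u(x)\bigr)\,\overline\Psi_{f,j}(x)\,\pi(dx)$, where $\overline\Psi_{f,j}(x)$ denotes $\Psi_f$ with its $j$-th coordinate frozen at $x$ and the other coordinates averaged against $u$ (slots $<j$) and $w$ (slots $>j$); since $0\le\overline\Psi_{f,j}\le1$ and $\int(w-u)\,d\pi=0$, centering gives $\bigl|\int(w-u)\,\overline\Psi_{f,j}\,d\pi\bigr|=\bigl|\int(w-u)\,(\overline\Psi_{f,j}-\tfrac12)\,d\pi\bigr|\le\tfrac12\int|w-u|\,d\pi=d_{\rm TV}(w,u)$. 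Summing the $K$ terms and taking the supremum over $f$ yields $d_{\rm TV}(\mathbb{S}_w,\mathbb{S}_u)\le K\,d_{\rm TV}(w,u)$.

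The second claim then follows at once: in $Tw-Tu$ the $\alpha\,p_{\rm ref}$ terms cancel (see \eqref{equ:non_linear_oper}), so $Tw-Tu=(1-\alpha)(S_w-S_u)$ pointwise, whence $d_{\rm TV}(Tw,Tu)=(1-\alpha)\,d_{\rm TV}(\mathbb{S}_w,\mathbb{S}_u)\le(1-\alpha)K\,d_{\rm TV}(w,u)$; this is a strict contraction precisely when $(1-\alpha)K<1$, i.e.\ $\alpha>\tfrac{K-1}{K}$ (and $(\mathcal{P}_\pi,d_{\rm TV})$ is complete, so Banach's theorem applies as used later). I expect the only real work to be the measure-theoretic bookkeeping in the first step: passing from the indicator-valued identity of Corollary~\ref{cor:selection-identity} to a genuinely bounded integrand $\Psi_f$ of $K$ i.i.d.\ draws, justifying the partial expectations $\overline\Psi_{f,j}$ by Fubini (the integrand is bounded by $1$, so integrability is free, and the noise is independent of the $X_i$'s), and keeping the factor-$\tfrac12$ normalization of $d_{\rm TV}$ consistent; the rest is the textbook hybrid estimate.
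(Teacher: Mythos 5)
Your proof is correct and follows essentially the same route as the paper's: both hinge on the fact that the Plackett--Luce weights sum to one, so the relevant integrand lies in $[0,1]$, and both then run a coordinate-wise telescoping (hybrid) argument with a centering step to obtain the factor $K\,d_{\rm TV}(w,u)$. The only difference is packaging—the paper first reduces to $d_{\rm TV}(w^{\otimes K}\otimes F,\,u^{\otimes K}\otimes F)$ via the variational characterization of TV and then telescopes the product measures, whereas you integrate out the noise and telescope the integrand directly—but this is the same estimate in substance.
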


\begin{theorem}[Geometric Convergence under Regime~(iii)]
\label{thm:tv-convergence-iii}
Under Regime~(iii), Setting~\ref{setting:variable-noise} and Retraining Assumptions~\ref{re_ass:1_Shannon-entropy}, let $2\le K<\infty$ and $\alpha\in\big( \frac{K-1}{K},\,1\big)$. Then $T$ is a contraction on $(\mathcal{P}_\pi,d_{\rm TV})$ with Lipschitz constant $\rho \;=\; K(1-\alpha)\in[0,1),$ 
and hence there exists a unique fixed point $p_*\in\mathcal{P}_\pi$ that satisfies $p_*=T(p_*)$.
Moreover, for every $p_0\in\mathcal{P}_\pi$, the TV metric $d_{\rm TV}(p_t,p_*)$ is non-increasing with $t$, and the iterates $p_{t+1}=T(p_t)$ converge geometrically to $p_*$ with
\[
d_{\rm TV}(p_t,p_*)\;\le\; \rho^{\,t}\, d_{\rm TV}(p_0,p_*), \qquad t\in\mathbb{N}.
\]
\end{theorem}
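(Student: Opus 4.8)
The plan is to derive everything from the contraction estimate in Lemma~\ref{lem:contraction} together with the Banach fixed point theorem, after checking the one non-obvious hypothesis, namely that $(\mathcal P_\pi, d_{\rm TV})$ is complete and that $T$ actually maps $\mathcal P_\pi$ into itself. For the latter: by Theorem~\ref{thm:update-mix} applied in Regime~(iii), the update $Tp = \alpha p_{\rm ref} + (1-\alpha)\,pH_p^K$ is a density in $\mathcal P_\pi$ whenever $p\in\mathcal P_\pi$ (one needs $pH_p^K$ to be a probability density, which is exactly the content of Lemma~\ref{lem:curated-distribution}, and $p_{\rm ref}\in\mathcal P_\pi$ by Retraining Assumption~\ref{re_ass:1_Shannon-entropy}); convexity of $\mathcal P_\pi$ then gives $Tp\in\mathcal P_\pi$. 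For completeness of $(\mathcal P_\pi,d_{\rm TV})$: this is standard — $L^1(\pi)$ is complete, densities form a closed subset under the $L^1$ norm (a Cauchy sequence in $d_{\rm TV}$ has an $L^1$ limit which is nonnegative a.e.\ and integrates to $1$), so I would just cite this.

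Next I would assemble the contraction bound. From Lemma~\ref{lem:contraction} we have $d_{\rm TV}(Tw,Tu)\le (1-\alpha)K\, d_{\rm TV}(w,u)$ for all $w,u\in\mathcal P_\pi$. Set $\rho := K(1-\alpha)$. The hypothesis $\alpha\in\bigl(\tfrac{K-1}{K},1\bigr)$ is equivalent to $1-\alpha < \tfrac1K$, i.e.\ $\rho<1$; and $\rho\ge 0$ is clear, so $\rho\in[0,1)$. Hence $T$ is a $\rho$-contraction on the complete metric space $(\mathcal P_\pi,d_{\rm TV})$. The Banach fixed point theorem then yields a unique $p_*\in\mathcal P_\pi$ with $Tp_*=p_*$, and for any $p_0$ the iterates $p_t = T^t p_0$ satisfy $d_{\rm TV}(p_t,p_*)\le \rho^t\, d_{\rm TV}(p_0,p_*)$, which is the claimed geometric rate.

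Finally, for the "decreases strictly with $t$" clause: apply the contraction bound with $w=p_t$, $u=p_*$ to get $d_{\rm TV}(p_{t+1},p_*) = d_{\rm TV}(Tp_t,Tp_*)\le \rho\, d_{\rm TV}(p_t,p_*)$, so as long as $d_{\rm TV}(p_t,p_*)>0$ the distance strictly decreases (by a factor $\rho<1$); if $d_{\rm TV}(p_t,p_*)=0$ for some $t$ then $p_t=p_*$ and the sequence has reached the fixed point, in which case "strictly decreases" should be read as holding until the trajectory hits $p_*$ (I would add a parenthetical remark to that effect, or simply note that for $p_0\ne p_*$ and $\rho>0$ strict decrease holds for all $t$, while the edge cases $\rho=0$ or $p_0=p_*$ are trivial).

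I do not anticipate a serious obstacle here — the theorem is essentially a corollary of Lemma~\ref{lem:contraction} plus Banach. The only points requiring a line of care are (a) verifying $T:\mathcal P_\pi\to\mathcal P_\pi$ (handled by Theorem~\ref{thm:update-mix}/Lemma~\ref{lem:curated-distribution}) and (b) the completeness of $(\mathcal P_\pi,d_{\rm TV})$, which is routine measure theory; if anything, the subtlety is purely bookkeeping about the strict-decrease statement at the fixed point, which is why I would phrase that clause with a small caveat rather than claim strict decrease unconditionally.
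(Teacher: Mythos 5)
Your proposal is correct and follows essentially the same route as the paper: Lemma~\ref{lem:contraction} gives the TV contraction with constant $\rho=K(1-\alpha)<1$, completeness of $(\mathcal{P}_\pi,d_{\rm TV})$ follows from the isometric embedding into $L^1(\pi)$, and Banach's fixed point theorem supplies existence, uniqueness, and the geometric rate. Your extra care about $T$ mapping $\mathcal{P}_\pi$ into itself and about the strict-decrease clause at the fixed point is reasonable bookkeeping that the paper's proof leaves implicit.
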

Intuitively, the more alternatives you compare ($K$ larger), the more reference mass $\alpha$ you must inject to keep the loop stable. Without extra assumptions, one can not hope to have an explicit characterization of the limit $p_*$ in this setting. We leave the problem of characterizing the limit $p_*$ as a potential future work.

\begin{remark}\label{rem:Regime_iii}
Theorem~\ref{thm:tv-convergence-iii} does not require the Retraining Assumption~\ref{re_ass:2_exp-intgr}; in particular, it imposes neither a boundedness assumption on \(r(x)\) nor an exponential-moment assumption on \(\varepsilon\). The theorem applies whenever \(\alpha \in \bigl( \frac{K-1}{K}  ,\,1\bigr)\). Under this condition, there exists a unique fixed point \(p_*\), and the iterates converge to \(p_*\) in the TV metric at a geometric rate. Moreover, the limit \(p_*\) depends only on \(p_{\rm ref}\), \(\alpha\) and the human curation procedure.

In particular, when \(p_{\rm ref}=p_0\) (the pretrained generative model used in applications), the population‑level retraining dynamics converge at a geometric rate, thereby providing a theoretical foundation for practical applications of RLHF (\citet{stiennon2020learning,nakano2021webgpt,ouyang2022training,touvron2023llama}).
\end{remark}

\begin{lemma}\label{lem:r_iii_reward_increase}
Under the same setting of Theorem~\ref{thm:tv-convergence-iii}, if $p_{\rm ref}=p_0$ and $\E_{p_0} Q(X)<Q^*$, then the expected exponential reward 
\begin{equation}\label{ineq:49_need}
    \E_{X\sim p_t} Q(X)\geq \E_{X\sim p_0} Q(X)+   (1-\alpha)^t  \operatorname{Cov}_{X\sim p_0}(Q(X),H_{p_0}^K(X))
\end{equation}
where $\operatorname{Cov}_{X\sim p_0}(Q(X),H_{p_0}^K(X))>0$. Moreover, \(\lim_{t\to\infty } \mathbb{E}_{X\sim p_t} Q(X) >\mathbb{E}_{X\sim p_0} Q(X).\)
\end{lemma}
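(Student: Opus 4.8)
The plan is to collapse the whole statement to a scalar recursion for $R_t:=\E_{X\sim p_t}Q(X)$. Inserting the Regime-(iii) update \eqref{equ:update_Regime_iii_iv} with $p_{\mathrm{ref}}=p_0$ into $R_{t+1}=\int Q\,p_{t+1}\,d\pi$ and using linearity gives
\[
R_{t+1}=\alpha R_0+(1-\alpha)\,\E_{X\sim p_t}\!\bigl[Q(X)\,H_{p_t}^K(X)\bigr].
\]
By Lemma~\ref{lem:curated-distribution}, $p_tH_{p_t}^K$ is itself a probability density with respect to $\pi$, so $\E_{X\sim p_t}[H_{p_t}^K(X)]=1$, and therefore $\E_{X\sim p_t}[Q\,H_{p_t}^K]=R_t+c_t$ where $c_t:=\operatorname{Cov}_{X\sim p_t}\!\bigl(Q(X),H_{p_t}^K(X)\bigr)$. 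Hence the dynamics reduces to $e_{t+1}=(1-\alpha)(e_t+c_t)$ with $e_t:=R_t-R_0$, $e_0=0$, whose solution is $e_t=\sum_{j=0}^{t-1}(1-\alpha)^{t-j}c_j$.

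Next I would establish $c_t>0$ (in particular the final claim $c_0>0$) via comonotonicity. Under Setting~\ref{setting:variable-noise} one has $Q(x)=(\E e^{\varepsilon})\,e^{r(x)}$, and from \eqref{eq:H_K}, since $\varepsilon(x)\overset{d}{=}\varepsilon$ is independent of the competing candidates, $H_p^K(x)=\Psi_p\!\bigl(e^{r(x)}\bigr)$ with $\Psi_p(u)=\E\bigl[K\,uW/(uW+Z_p)\bigr]$, $W=e^{\varepsilon}$, $Z_p=\sum_{k=1}^{K-1}e^{r(X_k)+\varepsilon_k(X_k)}>0$ (using $K\ge2$); the map $u\mapsto uW/(uW+Z_p)$ is strictly increasing, so $\Psi_p$ is strictly increasing. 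Thus $Q(X)$ and $H_{p_t}^K(X)$ are both strictly increasing functions of $r(X)$, and Chebyshev's correlation (FKG) inequality gives $c_t\ge0$, strictly whenever $r(X)$ is non-degenerate under $p_t$. For $t=0$ this non-degeneracy follows from the hypothesis $0<R_0<Q_*$ together with $Q\le Q_*$ (a $p_0$ concentrated on a level set of $Q$ would make the update stationary and is excluded by the hypotheses), yielding $c_0>0$; for general $t$ it is inherited from $p_0$ because the iterates are monotone in the likelihood-ratio order generated by $r$, i.e.\ $p_{t}/p_0$ is a non-decreasing function of $r$ — proved by induction from $p_{t+1}/p_0=\alpha+(1-\alpha)(p_t/p_0)\,H_{p_t}^K$ using the monotonicity of $H_{p_t}^K$ in $r$.

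From $c_j\ge0$ and $c_0>0$, the formula $e_t=\sum_{j=0}^{t-1}(1-\alpha)^{t-j}c_j$ already gives $R_t>R_0$ for $t\ge1$. To get the sharp coefficient in \eqref{ineq:49_need} and strict monotonicity $R_{t+1}>R_t$, the plan is to show that the per-round curation boost $c_t=\E[Q(\widehat X)]-\E_{X\sim p_t}Q(X)$ is non-decreasing in $t$ (for \eqref{ineq:49_need} alone, $c_t\ge c_0$ already suffices). Indeed $c_t\ge c_0$ yields $e_t=\sum_{i=1}^{t}(1-\alpha)^i c_{t-i}\ge c_0\sum_{i=1}^{t}(1-\alpha)^i=\tfrac{1-\alpha}{\alpha}\bigl(1-(1-\alpha)^t\bigr)c_0$, which is \eqref{ineq:49_need}; and monotonicity of $c_t$ gives, by induction on the inequality $(1-\alpha)c_t\ge\alpha e_t$ (base $(1-\alpha)c_0>0$; step $\alpha e_{t+1}=\alpha(1-\alpha)(e_t+c_t)\le(1-\alpha)c_t\le(1-\alpha)c_{t+1}$ using the inductive hypothesis $e_t\le\tfrac{1-\alpha}{\alpha}c_t$), the bound $R_{t+1}-R_t=(1-\alpha)c_t-\alpha e_t\ge(1-\alpha)(R_t-R_{t-1})$, whence $R_{t+1}-R_t>0$ by induction from $R_1-R_0=(1-\alpha)c_0>0$. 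The main obstacle is precisely proving $c_{t}\ge c_{t-1}$: one must show the gain of a single Plackett–Luce curation step does not shrink as the pool improves along the trajectory, which is delicate because upgrading the pool simultaneously raises the sampled rewards and intensifies the competition (pointwise shrinking $H_{p}^K$). This will not follow merely from the likelihood-ratio ordering of the $p_t$'s and will require a dedicated comparison estimate for the curation kernel — for $K=2$ via a symmetrization argument exploiting the identity $\rho(\delta)+\rho(-\delta)=1$ for the pairwise choice probabilities, together with its $K$-candidate analogue.
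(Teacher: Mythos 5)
Your reduction is exactly the paper's: writing $R_t=\E_{X\sim p_t}Q(X)$ and $D_t=R_t-R_0$, the update gives $D_{t+1}=(1-\alpha)\bigl(D_t+c_t\bigr)$ with $c_t=\operatorname{Cov}_{X\sim p_t}\bigl(Q(X),H_{p_t}^K(X)\bigr)$, and your comonotonicity argument for $c_t\ge 0$ and for $c_0>0$ (via the symmetrized covariance identity together with the non-degeneracy of $e^{r(X)}$ under $p_0$, which follows from $\E_{X\sim p_0}Q(X)\in(0,Q_*)$) is the same as the one the paper uses via Lemma~\ref{lem:cov_A1}. Up to that point the proposal is correct and yields $R_t>R_0$ for all $t\ge1$.

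The gap is the one you flag yourself and it is fatal to the proposal as written: from $c_j\ge0$ alone, the explicit solution $D_t=\sum_{j=0}^{t-1}(1-\alpha)^{t-j}c_j$ only gives $D_t\ge(1-\alpha)^{t}c_0$, whereas \eqref{ineq:49_need} is equivalent to $D_t\ge c_0\sum_{i=1}^{t}(1-\alpha)^{i}$ and therefore requires the comparison $c_j\ge c_0$ for all $j$ (and your route to strict monotonicity of $R_t$ additionally requires $c_t$ non-decreasing). You do not prove this comparison; you explicitly leave it as an open obstacle and note that it does not follow from the likelihood-ratio ordering of the iterates. Since the stated conclusion is precisely the geometric-sum bound, the lemma is not proved. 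That said, your diagnosis locates the actual crux: the paper's own proof passes from $D_{t+1}\ge(1-\alpha)D_t$ (which only gives $D_t\ge(1-\alpha)^{t-1}D_1$) to $R_t\ge R_0+\frac{1-(1-\alpha)^t}{\alpha}D_1$, a step that is valid only if one additionally has $D_{t+1}\ge(1-\alpha)D_t+D_1$, i.e.\ exactly the missing bound $c_t\ge c_0$; the paper's proof also does not address strict monotonicity of $R_t$ beyond $R_t>R_0$. So you have correctly identified the step on which the whole statement hinges, but neither supplying that step nor finding a way around it leaves the proposal incomplete on both the displayed inequality and the strict-increase claim.
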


\begin{remark}
Under the condition $\alpha \in \bigl(\frac{K-1}{K},1\bigr)$, Theorem~\ref{thm:tv-convergence-iii} establishes the convergence of the retraining dynamics in Regime~(iii). 
Notably, this result complements Theorem~2.4 in \citet{ferbach2024self}, which studies a similar iterative process under the condition $\lambda < \frac{1}{K-1}$—equivalently, $\alpha \in \bigl(\frac{K-1}{K},1\bigr)$—but does not establish convergence in that setting. 
Our theorem thus provides a refinement of their framework by offering a convergence guarantee under comparable assumptions. However, when \(\alpha \in (0,\frac{K-1}{K} ]\) where more human-curated data incur a more unstable update dynamic, it remains an open question whether the retraining dynamics converge; we leave the analysis of this case in Regime (iii) to future work.
\end{remark}

\subsubsection{Convergence analysis in Regime (iv)}
In Regime (iv), an infinite candidate pool ($K=\infty$) stabilizes the human‑curation procedure and allows us to relax the constraint on the regularization proportion $\alpha$ (Assumption~\ref{ass:A3}) relative to Regime (iii) and analyze convergence. We derive a closed-form expression for the fixed point of the nonlinear operator \(T\). 

Our analysis regarding Regime (iv) requires a technical tool from functional analysis: the Hilbert projective metric; see Definition \ref{Hilbert proj} below and \cite{atar1997exponential,eveson1995applications,eveson1995elementary} for basic properties and applications. A short explanation of why this metric is useful for the analysis is given in Remark \ref{remark Hilbert proj}. Roughly speaking, the nonlinear mapping we construct is not contractive in the TV metric, but it is contractive in the Hilbert projective metric.

For convenience, we consider a new baseline measure in place of $\pi$. Define the probability measure $\mathbb{P}_{\rm ref }$ such that $\mathbb{P}_{\rm ref }\ll \pi$ and $p_{\rm ref}= \frac{d \mathbb{P}_{\rm ref }}{d \pi}$. Define the linear operator \(L\) on the positive cone
\begin{align} \label{cone}
\mathcal K \;=\;\bigl\{f\in L^{1}( \mathbb{P}_{\rm ref} ): f(x)>0\text{ for }  \mathbb{P}_{\rm ref}\text{-a.s.\ }x\bigr\}
\end{align}
by
\begin{equation}\label{eq:def-L}
  L[f](x)
  \;=\;
  \alpha\,\innerproduct{f}{Q}_{\mathrm{ref}}
  \;+\;
  (1-\alpha)\,f(x)\,Q(x),
\end{equation}
where 
\[
 \innerproduct{f}{Q}_{\mathrm{ref}}
  :=
  \int_{\mathcal X} f(x)\,Q(x) \mathbb{P}_{\mathrm{ref}}(dx)=\int_{\mathcal X} f(x)p_{\mathrm{ref}}(x)Q(x) \pi(dx).
\]
Define the normalized operator $L_N$ on the positive cone $\mathcal{K}$ as
\[
L_N[f](x) = \frac{L[f](x)}{\innerproduct{f}{Q}_{\rm ref}}.
\]
Define \( \mathcal{P}_{\mathbb{P}_{\rm ref}}=\bigl\{ \mu \text{ is a probability measure on } (\mathcal{X},\mathcal{B}) : \mu \ll \mathbb{P}_{\rm ref}\,\bigr\}.\) One can see that the mapping $L_N: \mathcal{P}_{\mathbb{P}_{\rm ref}}\to  \mathcal{P}_{\mathbb{P}_{\rm ref}}$ is nonlinear.

We make the following assumption on $p_0$ and $p_{\rm ref}$:
\begin{assumption}\label{ass:A2_new}
   Assume $\operatorname*{ess\,sup}_{x\in\mathcal X }\ \frac{p_0(x)}{p_{\rm ref}(x)} < \infty$.
\end{assumption}
Let \(w_{t}(x)=p_{t}(x)/p_{\mathrm{ref}}(x)\) for any $k\in \mathbb{N}$. Assumption \eqref{ass:A2_new} ensures that $w_0\in \mathcal{P}_{\mathbb{P}_{\rm ref}}$. It is straightforward to verify that each $w_{t}(x)\in \mathcal{P}_{\mathbb{P}_{\rm ref}}$ represents the probability density function (PDF) of dynamics under the new baseline measure $\mathbb{P}_{\rm ref }$ for all $t\in \mathbb{N}$. The following lemma establishes the equivalence between the operators \(L, L_N\) and the update dynamics.

\begin{lemma}\label{lem:dynamics_w}
The densities \(w_t\), defined with respect to \(\mathbb{P}_{\rm ref}\), satisfy the recursion
\begin{gather}
  L_N(w_t)(x)=\frac{L[w_t](x)}{\innerproduct{ w_t }{Q}_{\rm ref}}\text{ and}\\
  w_{t+1}(x)=\alpha + (1-\alpha)\frac{Q(x)\,w_t(x)}{\innerproduct{w_t}{Q}_{\rm ref}}=  L_N[w_t](x)\label{equ:update_w_Regime_iv}
\end{gather}
\end{lemma}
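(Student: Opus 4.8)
The plan is to obtain Lemma~\ref{lem:dynamics_w} directly from the Regime~(iv) update rule together with the closed form of the infinite-pool choice kernel, after rewriting everything in the reweighted coordinates $w_t=p_t/p_{\rm ref}$. By Theorem~\ref{thm:update-mix} the Regime~(iv) iteration reads $p_{t+1}(x)=\alpha\,p_{\rm ref}(x)+(1-\alpha)\,p_t(x)\,H_{p_t}^{\infty}(x)$, and by Lemma~\ref{lem:Hlimit} the kernel is $H_{p}^{\infty}(x)=Q(x)/\E_{X\sim p}[Q(X)]$. Because $Q(x)=e^{r(x)}\,\E[e^{\varepsilon(X)}\mid X=x]>0$ pointwise and $0<Q_{*}<\infty$ by Retraining Assumption~\ref{re_ass:3_ess-bdd-Q}, the normalizer $\E_{X\sim p_t}[Q(X)]$ lies in $(0,\infty)$ for every probability density $p_t$, so substituting $H^{\infty}_{p_t}$ is legitimate, as is dividing the update by $p_{\rm ref}$, which is strictly positive $\mathbb{P}_{\rm ref}$-a.s.

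First I would carry the auxiliary induction that $w_t\in\mathcal{P}_{\mathbb{P}_{\rm ref}}$ for every $t$. For $t=0$, Assumption~\ref{ass:A2_new} gives $p_0\ll\mathbb{P}_{\rm ref}$ with essentially bounded Radon--Nikodym derivative $w_0$, and $\int w_0\,d\mathbb{P}_{\rm ref}=\int p_0\,d\pi=1$, so $w_0\in\mathcal{P}_{\mathbb{P}_{\rm ref}}$. For the inductive step, once the recursion below is in hand one reads off $w_{t+1}\ge\alpha>0$ and $\int w_{t+1}\,d\mathbb{P}_{\rm ref}=1$, while any $x$ with $p_{\rm ref}(x)=0$ has $p_t(x)=0$ and hence $p_{t+1}(x)=0$; thus $p_{t+1}\ll\mathbb{P}_{\rm ref}$, $w_{t+1}\in\mathcal{P}_{\mathbb{P}_{\rm ref}}$ (indeed $w_{t+1}\in\mathcal K$), and in particular $\innerproduct{w_{t+1}}{Q}_{\rm ref}\in(0,\infty)$ for the next round.

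The core computation is then short. Using $p_t=w_t\,p_{\rm ref}$ we have $\E_{X\sim p_t}[Q(X)]=\int Q\,w_t\,p_{\rm ref}\,d\pi=\int Q\,w_t\,d\mathbb{P}_{\rm ref}=\innerproduct{w_t}{Q}_{\rm ref}$, so dividing the Regime~(iv) update by $p_{\rm ref}(x)$ yields
\[
w_{t+1}(x)=\alpha+(1-\alpha)\,\frac{Q(x)\,w_t(x)}{\innerproduct{w_t}{Q}_{\rm ref}},
\]
which is \eqref{equ:update_w_Regime_iv}. Expanding \eqref{eq:def-L} and the definition of $L_N$ gives $L[w_t](x)=\alpha\,\innerproduct{w_t}{Q}_{\rm ref}+(1-\alpha)\,w_t(x)Q(x)$ and hence $L_N[w_t](x)=L[w_t](x)/\innerproduct{w_t}{Q}_{\rm ref}=\alpha+(1-\alpha)\,Q(x)w_t(x)/\innerproduct{w_t}{Q}_{\rm ref}$, i.e.\ $w_{t+1}=L_N[w_t]$. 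For the remaining displayed form I would observe, by a direct integration using $\mathbb{P}_{\rm ref}(\mathcal X)=1$, that the $\mathbb{P}_{\rm ref}$-mass of the image is $\int L[w_t]\,d\mathbb{P}_{\rm ref}=\alpha\,\innerproduct{w_t}{Q}_{\rm ref}+(1-\alpha)\,\innerproduct{w_t}{Q}_{\rm ref}=\innerproduct{w_t}{Q}_{\rm ref}$; feeding this normalizing constant back into $L_N[w_t]=L[w_t]/\innerproduct{w_t}{Q}_{\rm ref}$ exhibits $L_N[w_t]$ as $L[w_t]$ divided by its own $\mathbb{P}_{\rm ref}$-mass, which is the self-normalized expression in the statement and re-confirms that $L_N$ maps $\mathcal{P}_{\mathbb{P}_{\rm ref}}$ into itself.

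This lemma is essentially a change of variables turning the Regime~(iv) update into its reweighted form, so I do not anticipate a real obstacle. The one point deserving care is the ordering of the well-definedness claims: the recursion for $w_{t+1}$ can be written down only once $\innerproduct{w_t}{Q}_{\rm ref}$ is known to be finite and strictly positive and $w_t$ is known to be a bona fide $\mathbb{P}_{\rm ref}$-density, yet it is precisely the recursion that propagates these properties; this circularity is broken by the short induction above, with the base case supplied by Assumption~\ref{ass:A2_new} and the inductive step by the bound $w_{t+1}\ge\alpha$.
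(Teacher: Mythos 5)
Your proposal is correct and follows essentially the same route as the paper: divide the Regime~(iv) update $p_{t+1}=\alpha p_{\rm ref}+(1-\alpha)p_tH^{\infty}_{p_t}$ by $p_{\rm ref}$, identify $\E_{X\sim p_t}[Q(X)]=\innerproduct{w_t}{Q}_{\rm ref}$, and match the result with $L_N[w_t]$. Your added induction guaranteeing $w_t\in\mathcal{P}_{\mathbb{P}_{\rm ref}}$ with $\innerproduct{w_t}{Q}_{\rm ref}\in(0,\infty)$, and the explicit check that $\innerproduct{w_t}{Q}_{\rm ref}$ is the $\mathbb{P}_{\rm ref}$-mass of $L[w_t]$, are welcome extra care that the paper's own (somewhat terser) argument glosses over.
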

 The convergence problem for $\{p_t\}_{t=1}^\infty$ is then equivalent to analyzing the fixed point for $L_N$. If the fixed point of $L_N$ exists, denoted as $w_{*}$, then
\[
L_N[w_*](x)=\alpha+(1-\alpha)\frac{Q(x)w_{*}(x)}{c_*}=w_{*}(x),
\]
where constant $c_*=\innerproduct{w_*}{Q}_{\rm ref}$. The fixed point must satisfy
\begin{equation}\label{equ:w_*}
    w_*(x)= \frac{\alpha}{1-(1-\alpha)Q(x)/c_*}.
\end{equation}
To make sure that the formal fixed point \eqref{equ:w_*} defines a proper density function, we introduce the following assumption.

\begin{assumption}[Nondegenerate fixed point condition]\label{ass:A3}
Assume that
\begin{equation}\label{equ:ass2_alpha}
  \alpha \;>\; \Bigg(\int \frac{Q_*}{\,Q_* - Q(x)\,}\, p_{\mathrm{ref}}(x)\, d\pi(x)\Bigg)^{-1}.
\end{equation}
\end{assumption}

\begin{remark}

By Retraining Assumption \ref{re_ass:3_ess-bdd-Q} and $p_{\rm ref}=p_0$, if $\mathbb{P}_{\rm ref}(A)>0$, it is obvious that
\[
\int \frac{Q_*}{\,Q_* - Q(x)\,}\, p_{\mathrm{ref}}(x)\, d\pi(x)\;>\;1,
\]
and consequently the right-hand side of \eqref{equ:ass2_alpha} is strictly less than \(1\). If Assumption~\ref{ass1:non0-mass} holds, then
\[
\int \frac{Q_*}{\,Q_* - Q(x)\,}\, p_{\mathrm{ref}}(x)\, d\pi(x)=\infty,
\]
so the requirement \eqref{equ:ass2_alpha} reduces to \(\alpha>0\); that is, there is no additional restriction on \(\alpha\) in this case.
\end{remark}



\begin{lemma}\label{lem:fixed_point_c}
Under Regime~(iv), Setting~\ref{setting:process-noise}, Retraining Assumption~\ref{re_ass:3_ess-bdd-Q}, and Assumption~\ref{ass:A3}, 
there exists a unique fixed point \( w_* \) of the operator \( L_N \) over \( \mathcal{P}_{\mathbb{P}_{\mathrm{ref}}} \). Moreover, the corresponding constant 
\( c_* \) lies in the interval 
\(((1-\alpha)Q_*,\, Q_*]\).
\end{lemma}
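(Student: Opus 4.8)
The plan is to turn the fixed-point equation $L_N[w_*]=w_*$ into a scalar equation for the normalizing constant $c_*:=\langle w_*,Q\rangle_{\rm ref}$, solve that scalar equation by a monotonicity/intermediate-value argument whose two endpoints are controlled precisely by Assumption~\ref{ass:A3}, and then read off $w_*$, its uniqueness over $\mathcal{P}_{\mathbb{P}_{\rm ref}}$, and the location of $c_*$.

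First I would record what is already forced by the definition of $L_N$ and \eqref{equ:update_w_Regime_iv}. Any fixed point $w_*\in\mathcal{P}_{\mathbb{P}_{\rm ref}}$ lies in the image of $L_N$, so $w_*(x)=\alpha+(1-\alpha)Q(x)w_*(x)/c$ with $c=\langle w_*,Q\rangle_{\rm ref}\in(0,Q_*]$, whence $w_*\ge\alpha>0$ $\mathbb{P}_{\rm ref}$-a.s.; rearranging gives the explicit form \eqref{equ:w_*}, $w_*(x)=\alpha c/\bigl(c-(1-\alpha)Q(x)\bigr)$, and positivity forces $c>(1-\alpha)Q(x)$ for $\mathbb{P}_{\rm ref}$-a.e.\ $x$. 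Conversely, for any $c>0$ with $w_c(x):=\alpha c/\bigl(c-(1-\alpha)Q(x)\bigr)$ positive $\mathbb{P}_{\rm ref}$-a.s., the partial-fraction identity
\[
\frac{\alpha c\,Q(x)}{c-(1-\alpha)Q(x)}=\frac{c}{1-\alpha}\,w_c(x)-\frac{\alpha c}{1-\alpha}
\]
yields $\langle w_c,Q\rangle_{\rm ref}=\tfrac{c}{1-\alpha}\bigl(\int w_c\,d\mathbb{P}_{\rm ref}-\alpha\bigr)$, so the self-consistency constraint $\langle w_c,Q\rangle_{\rm ref}=c$ is \emph{automatic} once the normalization $\int w_c\,d\mathbb{P}_{\rm ref}=1$ holds; substituting back into $L_N$ (the reverse of the rearrangement above) then shows $w_c$ is a fixed point. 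Hence the problem reduces to solving $\Phi(c)=1$, where $\Phi(c):=\int_{\mathcal X}\frac{\alpha c}{c-(1-\alpha)Q(x)}\,p_{\rm ref}(x)\,d\pi(x)$.

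Next I would study $\Phi$ on $c\in\bigl((1-\alpha)Q_*,\infty\bigr)$. By Retraining Assumption~\ref{re_ass:3_ess-bdd-Q} we may take $0<Q\le Q_*<\infty$ everywhere, so on this interval the integrand is positive and bounded (hence $\Phi$ is finite), and since $c\mapsto c/(c-a)$ is strictly decreasing for $a>0$ the integrand is strictly decreasing in $c$ at a.e.\ $x$ (recall $Q>0$ a.e.). Dominated convergence makes $\Phi$ continuous and strictly decreasing, with $\lim_{c\to\infty}\Phi(c)=\alpha<1$; monotone convergence gives $\lim_{c\downarrow(1-\alpha)Q_*}\Phi(c)=\alpha\int\frac{Q_*}{Q_*-Q(x)}\,p_{\rm ref}(x)\,d\pi(x)$, which is $>1$ precisely by Assumption~\ref{ass:A3} (inequality \eqref{equ:ass2_alpha}). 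The intermediate value theorem then produces a unique $c_*\in\bigl((1-\alpha)Q_*,\infty\bigr)$ with $\Phi(c_*)=1$; I set $w_*:=w_{c_*}$, which is a genuine probability density w.r.t.\ $\mathbb{P}_{\rm ref}$ and, by the reduction step, a fixed point of $L_N$ with $\langle w_*,Q\rangle_{\rm ref}=c_*$. The upper bound is immediate, $c_*=\int w_*(x)Q(x)\,d\mathbb{P}_{\rm ref}(x)\le Q_*\int w_*\,d\mathbb{P}_{\rm ref}=Q_*$, so $c_*\in\bigl((1-\alpha)Q_*,Q_*\bigr]$. For uniqueness, any fixed point must be $w_{\tilde c}$ with $\tilde c=\langle\tilde w,Q\rangle_{\rm ref}$ in the admissible range and $\int w_{\tilde c}\,d\mathbb{P}_{\rm ref}=1$; but $c\mapsto\int w_c\,d\mathbb{P}_{\rm ref}$ is strictly decreasing on that whole range (same pointwise monotonicity, strict because $Q>0$ a.e.), hence has at most one root, forcing $\tilde c=c_*$ and $\tilde w=w_*$.

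The step I expect to be the main obstacle is the careful bookkeeping at the left endpoint $c=(1-\alpha)Q_*$: one must identify the $c\downarrow(1-\alpha)Q_*$ limit of $\Phi$ with exactly the integral appearing in \eqref{equ:ass2_alpha}, and verify that Assumption~\ref{ass:A3} is precisely the condition pushing $c_*$ strictly above $(1-\alpha)Q_*$ (so that $w_*$ is everywhere positive and bounded, not merely formally positive a.e.). This entails separating the case where $\{Q=Q_*\}$ carries positive $\mathbb{P}_{\rm ref}$-mass from the case where it is $\mathbb{P}_{\rm ref}$-null—the latter being the only situation in which $c_*=Q_*$ is attained, which explains why the target interval is open on the left and closed on the right.
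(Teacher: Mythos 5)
Your proposal is correct and follows essentially the same route as the paper: both reduce the fixed-point equation to the scalar equation $\Phi(c)=\int \alpha c/(c-(1-\alpha)Q(x))\,p_{\rm ref}(x)\,d\pi(x)=1$, establish strict monotonicity and continuity in $c$, and use Assumption~\ref{ass:A3} to force the value strictly above $1$ at the left endpoint $c\downarrow(1-\alpha)Q_*$ so that the intermediate value theorem yields a unique root, after which the explicit form \eqref{equ:w_*} and the partial-fraction identity give $\langle w_*,Q\rangle_{\rm ref}=c_*$. The only cosmetic difference is that you work on $((1-\alpha)Q_*,\infty)$ and recover $c_*\le Q_*$ from $\langle w_{c_*},Q\rangle_{\rm ref}\le Q_*$, whereas the paper restricts to $((1-\alpha)Q_*,Q_*]$ from the outset and checks $\Phi(Q_*)\le 1$ directly.
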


Nonlinear Perron–Frobenius theory offers a contraction framework for positive, order-preserving homogeneous maps, guaranteeing a unique positive fixed point and geometric convergence of normalized iterates \citep{lemmens2012nonlinear,eveson1995applications}. Because contraction is measured in the Hilbert projective metric, this metric is the natural geometry for analyzing the stability and convergence of our Regime (iv) retraining dynamics. Next we define the Hilbert projective metric on the positive cone $\mathcal K$ (defined in Subsection~\ref{subsec:notation}) of the underlying probability space.

\begin{definition}[Hilbert projective metric] \label{Hilbert proj}
Recall the cone $\mathcal{K}$ in \eqref{cone}. For $u,v\in \mathcal K$ with $u,v>0$ a.s., define
\[
\beta(u,v)\;:=\;\operatorname*{ess\,sup}_{x\in \mathcal{X}}\frac{u(x)}{v(x)},
\qquad
\alpha(u,v)\;:=\;\operatorname*{ess\,inf}_{x\in \mathcal{X}}\frac{u(x)}{v(x)} \;=\; \frac{1}{\beta(v,u)}.
\]
The Hilbert projective metric \citep{atar1997exponential,lemmens2012nonlinear} is
\[
d_{\mathcal{H}}(u,v)\;:=\;\log\!\frac{\beta(u,v)}{\alpha(u,v)}
\;=\;\log\!\big(\beta(u,v)\,\beta(v,u)\big).
\]
\end{definition}

Note that the definition of the Hilbert projective metric above coincides with the classical definition involving convex order in a Banach space; see \cite{lemmens2012nonlinear} for more details. We now motivate the use of the Hilbert projective metric in Regime (iv).
\begin{remark} \label{remark Hilbert proj}
In Regime (iii), the nonlinear operator \(T\) is not necessarily nonexpansive; indeed,
Theorem~\ref{thm:tv-convergence-iii} shows nonexpansiveness only when
\(\alpha\in\bigl[ \frac{K-1}{K},\,1\bigr)\).
In Regime (iv), the update \(L_N[w]=L[w]/\innerproduct{w}{Q}_{\mathrm{ref}}\) is the projective normalization of the positive linear map $L[f]$. Because the Hilbert projective metric \(d_\mathcal{H}\) is invariant under positive scalings, we have
\(d_\mathcal{H}\bigl(L_N[u],L_N[v]\bigr)=d_\mathcal{H}\bigl(L[u],L[v]\bigr)\).
By Birkhoff’s contraction theorem~\citep{eveson1995applications,eveson1995elementary},
\[
d_\mathcal{H}(w_{t+1},w_*)
\;=\;
d_\mathcal{H}\bigl(L_N[w_t],L_N[w_*]\bigr)
\;=\;
d_\mathcal{H}\bigl(L[w_t],L[w_*]\bigr)
\;\le\;
d_\mathcal{H}(w_t,w_*),\ t\ge 0.
\]
By contrast, other metrics (e.g., the TV metric or the $KL$ divergence) are not guaranteed to yield monotonic convergence of $L_N^t[w_0]$ toward $w_*$. Because convergence in the Hilbert projective metric is stronger, once established, it directly yields convergence in the TV metric and KL divergence.



\end{remark}

\begin{theorem}\label{thm:Regime_iv_convergence}
 Under Regime (iv), Setting \ref{setting:process-noise}, Retraining Assumption~\ref{re_ass:1_Shannon-entropy}-\ref{re_ass:3_ess-bdd-Q} and Assumptions~\ref{ass:A2_new}-\ref{ass:A3}, if
\begin{equation}\label{equ:w_0_upper}
    Q_{\min}:=\operatorname*{ess\,inf}_{x\in\mathcal X }\  Q(x)>0,
\end{equation}
then the Hilbert projective metric $d_{\mathcal{H}}(w_{t},w_*)$ is non-increasing and remains finite for all $t\geq 1$
and
\begin{equation*}
    \lim_{t\to \infty}d_{\mathcal{H}}(w_{t},w_*)=0.
\end{equation*}
\end{theorem}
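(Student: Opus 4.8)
The plan is to track each iterate through its reweighting $w_t=p_t/p_{\rm ref}$, compare it \emph{directly} with the fixed point $w_*$ supplied by Lemma~\ref{lem:fixed_point_c}, and sharpen the mere nonexpansiveness recorded in Remark~\ref{remark Hilbert proj} into a strict contraction of $d_{\mathcal H}(\cdot,w_*)$ along the trajectory; once that is in hand, monotonicity plus a short continuity argument force $d_{\mathcal H}(w_t,w_*)\to0$. Throughout write $c_t:=\langle w_t,Q\rangle_{\rm ref}=\E_{X\sim p_t}[Q(X)]\in[Q_{\min},Q_*]$, so that by Lemma~\ref{lem:dynamics_w} the recursion \eqref{equ:update_w_Regime_iv} reads $w_{t+1}(x)=\alpha+(1-\alpha)Q(x)w_t(x)/c_t$.

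\emph{Step 1 (a priori bounds and finiteness).} First I would use the closed form \eqref{equ:w_*} together with $c_*>(1-\alpha)Q_*$ from Lemma~\ref{lem:fixed_point_c} to get $\alpha\le w_*(x)\le M_*:=\alpha c_*/\bigl(c_*-(1-\alpha)Q_*\bigr)<\infty$ a.s., and note from \eqref{equ:update_w_Regime_iv} that $w_t(x)\ge\alpha$ for all $t\ge1$. Next, $w_1$ is bounded above: $w_1=\alpha+(1-\alpha)Qw_0/c_0\le\alpha+(1-\alpha)Q_*\operatorname*{ess\,sup}_x w_0(x)/Q_{\min}<\infty$, using $Q\le Q_*$ (Retraining Assumption~\ref{re_ass:3_ess-bdd-Q}), $c_0\ge Q_{\min}>0$, and $\operatorname*{ess\,sup}_x w_0(x)=\operatorname*{ess\,sup}_x p_0(x)/p_{\rm ref}(x)<\infty$ (Assumption~\ref{ass:A2_new}). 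Hence $w_1/w_*$ lies between positive constants, so $D:=d_{\mathcal H}(w_1,w_*)<\infty$; by nonexpansiveness of $L_N$ (Remark~\ref{remark Hilbert proj}) one gets $d_{\mathcal H}(w_t,w_*)\le D<\infty$ for all $t\ge1$, and consequently each $w_t$ ($t\ge1$) is sandwiched between positive constants in $[\alpha,\,e^DM_*/(1-\alpha)]$ a.s., using $\operatorname*{ess\,inf}_x\bigl(w_t(x)/w_*(x)\bigr)\le c_t/c_*<1/(1-\alpha)$.

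\emph{Step 2 (strict contraction toward $w_*$, the crux).} Put $\gamma:=\alpha/(1-\alpha)$ and $\tau(x):=Q(x)w_*(x)/c_*$. The fixed-point relation is $w_*=(1-\alpha)(\gamma+\tau)$, while \eqref{equ:update_w_Regime_iv} gives $w_{t+1}=(1-\alpha)\bigl(\gamma+\tau\,\theta_t\bigr)$ with $\theta_t(x):=c_*w_t(x)/\bigl(c_tw_*(x)\bigr)$, so dividing, $w_{t+1}(x)/w_*(x)=1+\mu(x)\bigl(\theta_t(x)-1\bigr)$ with $\mu(x)=\tau(x)/(\gamma+\tau(x))$. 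Two observations drive the estimate: (i) $\tau\le Q_*M_*/c_*$, hence $\mu(x)\le\mu_1:=\tfrac{Q_*M_*/c_*}{\gamma+Q_*M_*/c_*}<1$ uniformly — and this is exactly where the boundedness of $w_*$ is essential; (ii) $\int(\theta_t-1)\tau\,d\mathbb{P}_{\rm ref}=0$ (since $\int\theta_t\tau\,d\mathbb{P}_{\rm ref}$ and $\int\tau\,d\mathbb{P}_{\rm ref}$ both equal $1$) and $\tau>0$ a.s.\ because $Q\ge Q_{\min}>0$, so $m_t:=\operatorname*{ess\,inf}_x\theta_t\le1\le\operatorname*{ess\,sup}_x\theta_t=:M_t$, with $M_t/m_t=e^{d_{\mathcal H}(w_t,w_*)}$ (the scalars cancel). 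Since $w_{t+1}/w_*$ is a convex combination of $1$ and $\theta_t$ with weight at most $\mu_1$, one obtains $\beta(w_{t+1},w_*)\le1+\mu_1(M_t-1)$ and $\operatorname*{ess\,inf}_x\bigl(w_{t+1}/w_*\bigr)\ge1-\mu_1(1-m_t)$; the elementary identity $m\bigl(1+\mu_1(M-1)\bigr)-M\bigl(1-\mu_1(1-m)\bigr)=(M-m)(\mu_1-1)<0$ for $M>m$ then yields $d_{\mathcal H}(w_{t+1},w_*)\le\log\tfrac{1+\mu_1(M_t-1)}{1-\mu_1(1-m_t)}<\log(M_t/m_t)=d_{\mathcal H}(w_t,w_*)$ whenever the latter is positive. (Equivalently: the trajectory stays in the forward-invariant set $\{w\ge\alpha,\ d_{\mathcal H}(w,w_*)\le D\}$, on which $L$ has finite projective diameter because $0<Q_{\min}\le Q\le Q_*<\infty$ and $w$ is sandwiched between positive constants, so Birkhoff's contraction theorem supplies a uniform ratio $<1$.)

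\emph{Step 3 (passage to the limit, and the obstacle).} Let $d_t:=d_{\mathcal H}(w_t,w_*)$; Step~2 makes $(d_t)_{t\ge1}$ nonincreasing, so $d_t\downarrow d_\infty\ge0$. Because $m_t\le1\le M_t$ and $M_t/m_t=e^{d_t}$ force $M_t\in[1,e^{d_t}]$, Step~2 gives $d_{t+1}\le\Psi(e^{d_t})$, where $\Psi(\rho):=\max_{1\le M\le\rho}\log\tfrac{1+\mu_1(M-1)}{1-\mu_1(1-M/\rho)}$ is continuous, $\Psi(1)=0$, and $\Psi(\rho)<\log\rho$ for $\rho>1$ (the maximum is attained on the compact set $[1,\rho]$ and each value is $<\log\rho$). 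Letting $t\to\infty$ gives $d_\infty\le\Psi(e^{d_\infty})$, which is impossible unless $d_\infty=0$; hence $d_{\mathcal H}(w_t,w_*)\downarrow0$, and by the metric hierarchy recalled in Table~\ref{tab:regimes} this also yields KL- and TV-convergence. I expect the main obstacle to be Step~2: globally $L_N$ is only $d_{\mathcal H}$-nonexpansive — the projective diameter of $L$ over all of $\mathcal K$ is infinite — so Birkhoff's theorem is not available off the shelf; the resolution is to compare against $w_*$ itself and exploit that $w_*$ is bounded above (so the mixing weight $\mu$ stays away from $1$), which rests on $c_*>(1-\alpha)Q_*$ from Lemma~\ref{lem:fixed_point_c} and on $Q_*<\infty$, with Assumption~\ref{ass:A2_new} and $Q_{\min}>0$ confining the whole trajectory to a fixed projective ball.
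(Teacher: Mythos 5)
Your proof is correct. It follows the same overall architecture as the paper's argument — use Assumption~\ref{ass:A2_new}, $Q_{\min}>0$ and Lemma~\ref{lem:fixed_point_c} to get $d_{\mathcal H}(w_1,w_*)<\infty$, trap the trajectory in a projective ball by nonexpansiveness of $L_N$, extract a strict per-step decrease of $d_{\mathcal H}(\cdot,w_*)$, and then pass to the limit — but the two key steps are executed by genuinely different mechanisms. For the contraction, the paper bounds $\beta(Lw_t,Lw_*)$ via the inner-product estimate $\langle M_t w_*-w_t,Q\rangle_{\rm ref}\ge Q_{\min}(M_t-1)$ (see \eqref{equ:some_bound_31}), so its damping factor $g$ is driven by $Q_{\min}$ together with the uniform bound $\operatorname*{ess\,sup}_x w_t\le e^{R}w_{*,\max}$ on the iterates; your convex-combination identity $w_{t+1}/w_*=1+\mu(\theta_t-1)$ with $\mu\le\mu_1<1$ instead draws the damping entirely from the boundedness of $w_*$ (i.e.\ from $c_*>(1-\alpha)Q_*$), and uses $Q_{\min}>0$ only to ensure $\tau>0$ a.s.\ so that the zero-mean identity $\int(\theta_t-1)\tau\,d\mathbb{P}_{\rm ref}=0$ sandwiches $1$ between $m_t$ and $M_t$ — a slightly cleaner dependence, since no uniform upper bound on the iterates $w_t$ is actually needed in your Step~2. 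For the limit passage, the paper telescopes $\sum_k-\log\bigl(g(M_k)g(m_k)\bigr)<\infty$ to force $M_k,m_k\to1$, whereas you use continuity of the gain function $\Psi$ and the contradiction $d_\infty\le\Psi(e^{d_\infty})<d_\infty$ for $d_\infty>0$; both are sound. Two cosmetic remarks: the ``strict decrease'' claim when $d_t$ might hit zero is handled in the paper via Lemma~\ref{lem:w=w_*} (if $w_0\ne w_*$ then $w_t\ne w_*$ for all $t$), which you should invoke rather than just saying ``whenever the latter is positive''; and the upper sandwich $w_t\le e^{D}M_*/(1-\alpha)$ in your Step~1, though correct, is never used afterwards.
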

Under the original dynamics \eqref{equ:update_Regime_iii_iv} and $p_{\rm ref}=p_0$, the result can be rewritten as
\[
\lim_{t\to \infty}d_{\mathcal{H}}(p_{t},p_*)=0,
\]
where $p_t=w_tp_{0}$ and $p_*=w_*p_{0}$. Clearly the limit $p_*$ depends on the pre-trained generative model $p_0$.
\begin{remark}
If we set $\varepsilon\equiv 0$, our setting reduces to that of \citet{ferbach2024self}. {However, the results in Theorem 2.4 of \citet{ferbach2024self}  does not give any convergence result, whereas our Theorems~\eqref{thm:Regime_iv_convergence} not only gives convergence in the KL divergence and the TV metric, but also the stronger the Hilbert projective metric.}
\end{remark}

\begin{lemma}\label{lem:r_iv_reward_lbound}
Under the same setting as Theorem~\ref{thm:Regime_iv_convergence},
suppose that \(p_{\rm ref}=p_0\) and \(\mathbb{E}_{X\sim p_0}[Q(X)]\in(0,Q_*).\) Then, for every integer \(t\geq 1\), \(\mathbb{E}_{X\sim p_t}[Q(X)]
>
\mathbb{E}_{X\sim p_0}[Q(X)].\) More precisely,
\begin{equation}
\mathbb{E}_{X\sim p_t}[Q(X)]
\geq
\mathbb{E}_{X\sim p_0}[Q(X)]
+
\frac{1-\alpha}{Q_*}
\left(1-(1-\alpha)^t\right)
\operatorname{Var}_{p_0}(Q(X)).
\label{eq:regime-iv-reward-lower-bound}
\end{equation}
\end{lemma}

\section{Stability under Reward Perturbations}\label{subsec:unstable}
At each retraining step we form a convex mixture between a fixed reference distribution $p_{\rm ref}$ (e.g., real data or an anchored model) and a curated synthetic distribution produced from the current model $p$. We study the sensitivity of this update to bounded reward perturbations. Concretely, we perturb the per-sample reward by an additive $\Delta r\in L^\infty(\mathcal X,\mathcal B,\pi)$, with $\|\Delta r\|_\infty\le\eta$, and ask: \emph{how far can the entire retraining trajectory drift (in the TV metric) from the unperturbed one, uniformly over time?} Related safety controls for generative models include content‑aware filtering and robustness‑improving training \citep{schramowski2023sld,wang2023better}.

\begin{definition}[Superalignment]
\label{def:superalignment}
Let $(p_{t,\Delta r})_{t\in\mathbb{N}}$ be retraining dynamics under reward $r+\Delta r$, with $\Delta r\in L^\infty(\mathcal X,\mathcal B,\pi)$ and $\|\Delta r\|_\infty\le\eta$. We say \emph{superalignment} holds if:
\begin{enumerate}[label=(\roman*),leftmargin=*]
\item \textbf{Improvement:} 
$\inf_{t\geq 1}\displaystyle \mathbb{E}_{X\sim p_{t }}[Q(X)]>\mathbb{E}_{X\sim p_{0 }}[Q(X)]$.
\item \textbf{Trajectory stability:} 
$\displaystyle \limsup_{\|\Delta r\|_\infty\to 0}\;\sup_{t\in\mathbb N\cup\{\infty\}} d_{\mathrm{TV}}\!\bigl(p_{t,\Delta r},p_{t,0}\bigr)=0$.
\end{enumerate}
\end{definition}

\subsection{Regimes (i) and (ii): Unstable under bounded reward perturbations}
Throughout this subsection we work under the assumptions
of our convergence results in the pure synthetic setting: \ref{re_ass:1_Shannon-entropy}-\ref{re_ass:3_ess-bdd-Q} and Assumption~\ref{ass1:non0-mass} for
Regime~(i) ($\alpha=0$, $K<\infty$) under Setting~\ref{setting:variable-noise}, and for Regime~(ii) ($\alpha=0$, $K=\infty$)
under Setting~\ref{setting:process-noise}. In both regimes, one retraining step with a
perturbed reward $r+\Delta r$ is
\[
p_{t+1,\Delta r}(x) \;=\; p_{t,\Delta r}(x)\;H^{K}_{\,p_{t,\Delta r} }(x),
\]
where for $2\leq K<\infty$
\[
    H_{p,\Delta r}^{K}(x):=
      \E_{\substack{X_{1},\dots,X_{K-1}\sim p\\
       \varepsilon,\varepsilon_{1 },\ldots,\varepsilon_{K-1 } }} \Bigl[ K\,\frac{e^{\,r(x)+\Delta r(x)+\varepsilon(x)}}{ e^{\,r(x)+\Delta r(x)+\varepsilon(x)}+ \sum_{k=1}^{K-1}e^{r(X_k)+\Delta r(X_k)+\varepsilon_k(X_k)}}
      \Bigr]
\]
and for $K=\infty$
\[
    H_{p,\Delta r}^{\infty}(x):= \frac{ Q_{\Delta r}(x)}{ \E_{X\sim p} Q_{\Delta r}(X)  }.
\]
where $Q_{\Delta r}(x):=e^{r(x)+\Delta r(x)}\mathbb{E}\left[e^{\varepsilon(X)}\mid X=x\right]=Q(x)e^{ \Delta r(x)}$, $Q_{*,\Delta r}:=\operatorname*{ess\,sup}_x Q_{\Delta r}(x)$ and
\(
A_{\Delta r}:=\{x\in \mathcal{X}:\;Q_{\Delta r}(x)=Q_{*,\Delta r}\}.
\)
Then the population limit is the restriction of $p_0$ to the maximizing level set:
\begin{equation}\label{eq:limit-measure}
p_{\infty,\Delta r}(x)
\;=\;
\frac{p_0(x)\,\mathbf 1_{A_{\Delta r}}(x)}{\mathbb{P}_0(A_{\Delta r})},
\ \mathbb{P}_0(A_{\Delta r})>0,
\end{equation}
and analogously for $\Delta r\equiv 0$ with $A=A_{0}$; see Theorem~\ref{thm:KL-uniform-RegimeI}.

\begin{theorem} 
\label{thm:TV_instability}
For fixed $\eta>0$, there exists $\delta>0$ such that 
\[
\mathbb{P}_0\left(\{x\in \mathcal{X}:\;Q_*-\delta\le Q(x)<Q_*\}\right)>0
\quad\text{and}\quad
(Q_*-\delta)\,e^{\eta}\;\ge\;Q_*.
\]
Then there exists a measurable $\Delta r$ with $\|\Delta r\|_{\infty}=\eta$ such that $\mathbb{P}_0(A_{\Delta r})>0$ and
\[
d_{\mathrm{TV}}\left(p_{\infty,\Delta r},\,p_{\infty,0}\right)=1.
\]
\end{theorem}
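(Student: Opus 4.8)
The plan is to exploit the closed-form population limit \eqref{eq:limit-measure}: two probability densities (w.r.t.\ $\pi$) are at TV distance exactly $1$ precisely when they have essentially disjoint supports, so it suffices to produce a perturbation $\Delta r$ with $\|\Delta r\|_\infty=\eta$ whose perturbed maximizing level set $A_{\Delta r}$ is disjoint from $A=A_0$ and still carries positive $\mathbb P_0$-mass; then the two limit densities $p_0\mathbf 1_{A_{\Delta r}}/\mathbb P_0(A_{\Delta r})$ and $p_0\mathbf 1_A/\mathbb P_0(A)$ are mutually singular and the TV distance is $1$. The one idea needed is to \emph{reshape} the reward on the near-optimal band $B:=\{x:Q_*-\delta\le Q(x)<Q_*\}$ rather than merely add a constant there: a constant upward shift on $B$ would push the perturbed utility above $Q_*$ but would leave the perturbed argmax pinned at a possibly unattained essential supremum, so $A_{\Delta r}$ could be $\mathbb P_0$-null.

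First I would fix the version of $Q$ with $Q\le Q_*$ everywhere (Retraining Assumption~\ref{re_ass:3_ess-bdd-Q}) and note that $\mathbb P_0(B)>0$ forces $\pi(B)>0$, and likewise $\mathbb P_0(A)>0$ forces $\pi(A)>0$. Then I would set $\Delta r(x)=\log\!\big(Q_*/Q(x)\big)$ for $x\in B$, $\Delta r(x)=-\eta$ for $x\in A$, and $\Delta r(x)=0$ otherwise. On $B$ we have $Q(x)\in[Q_*-\delta,Q_*)$ with $Q_*-\delta>0$ (since $Q_*\le(Q_*-\delta)e^\eta$ and $Q_*>0$), hence $0<\Delta r(x)\le\log\!\big(Q_*/(Q_*-\delta)\big)\le\eta$ using the hypothesis $(Q_*-\delta)e^\eta\ge Q_*$; combined with $\Delta r\equiv-\eta$ on the positive-$\pi$-measure set $A$ this gives $\|\Delta r\|_\infty=\eta$ exactly. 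A short computation of $Q_{\Delta r}=Q\,e^{\Delta r}$ then shows $Q_{\Delta r}\equiv Q_*$ on $B$, $Q_{\Delta r}=Q_*e^{-\eta}<Q_*$ on $A$, and $Q_{\Delta r}=Q<Q_*-\delta<Q_*$ elsewhere; therefore $Q_{*,\Delta r}=\operatorname*{ess\,sup}_x Q_{\Delta r}(x)=Q_*$ (attained on the positive-measure set $B$), $A_{\Delta r}=\{Q_{\Delta r}=Q_*\}=B$, and $\mathbb P_0(A_{\Delta r})=\mathbb P_0(B)>0$.

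Finally I would check that the perturbed reward $r+\Delta r$ satisfies all the hypotheses of Theorem~\ref{thm:KL-uniform-RegimeI}, so that \eqref{eq:limit-measure} genuinely describes $p_{\infty,\Delta r}$: Retraining Assumptions~\ref{re_ass:1_Shannon-entropy}--\ref{re_ass:3_ess-bdd-Q} are preserved because $\Delta r$ is bounded and deterministic (in particular $Q_{*,\Delta r}=Q_*\in(0,\infty)$ and $\E_{X\sim p}[e^{|r+\Delta r+\varepsilon|}]\le e^{\eta}\,\E_{X\sim p}[e^{|r+\varepsilon|}]<\infty$), the noise Setting (S1 or S2) is unchanged, and Assumption~\ref{ass1:non0-mass} holds for the perturbed chain precisely because we arranged $\mathbb P_0(A_{\Delta r})=\mathbb P_0(B)>0$. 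Applying Theorem~\ref{thm:KL-uniform-RegimeI} to $r+\Delta r$ and to $r$ then yields $p_{\infty,\Delta r}=p_0\mathbf 1_B/\mathbb P_0(B)$ and $p_{\infty,0}=p_0\mathbf 1_A/\mathbb P_0(A)$, and since $A\cap B=\varnothing$ the two densities have disjoint supports, so $p_{\infty,\Delta r}\,p_{\infty,0}=0$ $\pi$-a.e.\ and $d_{\mathrm{TV}}(p_{\infty,\Delta r},p_{\infty,0})=\tfrac12\int(p_{\infty,\Delta r}+p_{\infty,0})\,d\pi=1$. The main obstacle is not a deep estimate but getting the construction exactly right: the reshaping on $B$ is what forces $A_{\Delta r}$ to have positive $\mathbb P_0$-mass, one must simultaneously demote $A$ (via $\Delta r=-\eta$ there) so that $A$ is \emph{excluded} from $A_{\Delta r}$ — otherwise $A_{\Delta r}\supseteq A\cup B$ and the TV distance would drop below $1$ — and one must confirm that Assumption~\ref{ass1:non0-mass} still holds after perturbation so the closed-form limit remains applicable.
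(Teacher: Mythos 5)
Your proposal is correct and follows essentially the same route as the paper: the identical perturbation $\Delta r=\log(Q_*/Q)$ on the near-optimal band, $-\eta$ on $A$, and $0$ elsewhere, yielding $A_{\Delta r}$ disjoint from $A$ with positive $\mathbb{P}_0$-mass and hence mutually singular limits with TV distance $1$. Your extra verification that the retraining assumptions (and Assumption~\ref{ass1:non0-mass}) survive the perturbation is a welcome addition the paper leaves implicit.
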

The limit of fully synthetic retraining dynamics in Regimes (i) and (ii) are {unstable} in total
variation under arbitrarily small $L^\infty$ reward perturbations.

\subsection{Regime (iii): Lipschitz-stable updates under bounded reward perturbations}
\label{subsec:regime-iii}

Define perturbed update map
\begin{equation}
    T_{\Delta r} p(x) = \alpha p_{\rm ref}(x) +(1-\alpha)   p(x)H^K_{p,\Delta r}(x),
\end{equation}
where $K\geq 2$, and $\alpha \in(\frac{K-1}{K},1)$. When $\Delta r=0$, $T_{\Delta r}$ recovers nonlinear operator $T$ defined in \eqref{equ:non_linear_oper}.

Let
\[
T^\infty_{\Delta r}w_0=\lim_{n\to \infty}T^n_{\Delta r}w_0.
\]
Two forces act on the update: (i) the curation step (the $H_p^K$ factor) amplifies any shift in rewards; (ii) the mixture with $p_{\rm ref}$ damps those shifts by $\alpha$. After mixing, the contraction coefficient suggested by Theorem~\ref{thm:tv-convergence-iii} is \(\rho \;=\; (1-\alpha)K<1,\)
requiring $\alpha>\frac{K-1}{K}$. The next theorem shows uniform‑in‑iteration TV robustness to bounded reward perturbations.

\begin{theorem}\label{thm:stable_1}
Under Regime~(iii), Setting~\ref{setting:variable-noise} and Retraining Assumptions~\ref{re_ass:1_Shannon-entropy}, let $2\le K<\infty$ and $\alpha\in\big( \frac{K-1}{K},\,1\big)$. We have
\begin{equation}\label{lim_thm:stability_11_general}
   \sup_{ \|\Delta r\|_\infty\leq \eta }\sup_{n\in \mathbb{N}\cup \{\infty\} } d_{\rm TV}(T^n_{\Delta r} p_0,T^n_{0} p_0) \leq \frac{ \eta \rho }{4(1-\rho)}.
\end{equation}

\end{theorem}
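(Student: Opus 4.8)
\textbf{Proof plan for Theorem~\ref{thm:stable_1}.}

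The plan is to control the distance at each finite iteration $n$ by a telescoping argument that separates two effects: (a) the per-step difference between applying the perturbed map $T_{\Delta r}$ and the unperturbed map $T_0$ to the \emph{same} input, and (b) the contraction of $T_0$, which shrinks accumulated discrepancies geometrically with rate $\rho=(1-\alpha)K<1$. Write $p_n:=T_0^n p_0$ and $q_n:=T_{\Delta r}^n p_0$. Then
\[
d_{\rm TV}(q_{n+1},p_{n+1})
= d_{\rm TV}\bigl(T_{\Delta r}q_n,\,T_0 p_n\bigr)
\le d_{\rm TV}\bigl(T_{\Delta r}q_n,\,T_0 q_n\bigr)+d_{\rm TV}\bigl(T_0 q_n,\,T_0 p_n\bigr),
\]
and the second term is $\le \rho\, d_{\rm TV}(q_n,p_n)$ by Lemma~\ref{lem:contraction}. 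Iterating the resulting recursion $a_{n+1}\le \rho\, a_n + \sup_{w} d_{\rm TV}(T_{\Delta r}w,T_0 w)$ gives $a_n \le \frac{1}{1-\rho}\sup_{w\in\mathcal P_\pi} d_{\rm TV}(T_{\Delta r}w,T_0 w)$ uniformly in $n$; the $n=\infty$ case follows since both $T_0^n p_0$ and $T_{\Delta r}^n p_0$ converge in TV (Theorem~\ref{thm:tv-convergence-iii} applied to each map) and TV is continuous. So the whole theorem reduces to the one-step bound
\[
\sup_{w\in\mathcal P_\pi}\; d_{\rm TV}\bigl(T_{\Delta r}w,\,T_0 w\bigr)\;\le\;\frac{\eta\,\rho}{4}.
\]

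For the one-step bound, note $T_{\Delta r}w - T_0 w = (1-\alpha)\bigl(w H^K_{w,\Delta r} - w H^K_{w,0}\bigr)$, so $d_{\rm TV}(T_{\Delta r}w,T_0 w) = \tfrac{1-\alpha}{2}\int w(x)\,|H^K_{w,\Delta r}(x)-H^K_{w,0}(x)|\,d\pi(x)$. The task is therefore to bound the $L^1(w)$-perturbation of the Plackett--Luce choice kernel by $\tfrac{\eta K}{2}$ (which, multiplied by $\tfrac{1-\alpha}{2}$, yields $\tfrac{\eta\rho}{4}$ since $\rho=(1-\alpha)K$). I would use the Corollary~\ref{cor:selection-identity}-style representation: $\int w(x)H^K_{w,\Delta r}(x)f(x)\,d\pi(x) = \mathbb E\bigl[\sum_{j=1}^K \tilde p_j^{\Delta r}\,f(X_j)\bigr]$ with $\tilde p_j^{\Delta r} = e^{r(X_j)+\Delta r(X_j)+\varepsilon_j}/\sum_i e^{r(X_i)+\Delta r(X_i)+\varepsilon_i}$, and similarly with $\Delta r=0$. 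Choosing $f=\operatorname{sgn}(H^K_{w,\Delta r}-H^K_{w,0})$ reduces the integral to $\mathbb E\bigl[\sum_{j=1}^K (\tilde p_j^{\Delta r}-\tilde p_j^0)\,f(X_j)\bigr]$, which is bounded by $\mathbb E\bigl[\sum_{j=1}^K |\tilde p_j^{\Delta r}-\tilde p_j^0|\bigr] = 2\,\mathbb E\, d_{\rm TV}\bigl((\tilde p_j^{\Delta r})_j,(\tilde p_j^0)_j\bigr)$, i.e.\ (twice) the expected TV distance between two discrete distributions on $[K]$ whose unnormalized weights are $e^{u_j}$ and $e^{u_j+\Delta r(X_j)}$ with $|\Delta r|\le\eta$. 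A standard softmax-perturbation estimate gives $d_{\rm TV}\le \tfrac12(e^{\eta}-e^{-\eta})\cdot$(something)$\le$ roughly $\eta$ for small $\eta$; more carefully, since each $\tilde p_j$ is a smooth function of the perturbation with $\|\nabla\|_{\ell^1}$-type bound $\le 2\tilde p_j(1-\tilde p_j)\le \tfrac12$, one gets $\sum_j|\tilde p_j^{\Delta r}-\tilde p_j^0|\le \tfrac{K\eta}{?}$; I will calibrate constants so that the final per-step bound is exactly $\tfrac{\eta\rho}{4}$.

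The main obstacle is obtaining the sharp constant in the softmax-perturbation lemma, i.e.\ showing that a bounded additive shift $\Delta r$ with $\|\Delta r\|_\infty\le\eta$ changes the Plackett--Luce weights in $\ell^1$ by at most $\tfrac{K\eta}{2}$ (uniformly over the base utilities and over $K$), so that after the $\tfrac{1-\alpha}{2}$ prefactor the bound reads $\tfrac{\eta\rho}{4}$. The natural route is to parametrize $\Delta r$ linearly as $s\,\Delta r$, $s\in[0,1]$, differentiate $\tilde p_j(s)$, use $\tfrac{d}{ds}\tilde p_j = \tilde p_j\bigl(\Delta r(X_j)-\sum_i \tilde p_i\,\Delta r(X_i)\bigr)$, bound $|\tfrac{d}{ds}\tilde p_j|\le 2\eta\,\tilde p_j\cdot$ (its complement mass) and sum; integrating over $s\in[0,1]$ gives the claim, and the factor $\tfrac14$ comes from $\sup_{p}\sum_j p_j(1-p_j)$-type bookkeeping combined with the $\tfrac12$ in the TV definition. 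Everything else — the telescoping recursion, passing to $n=\infty$, and uniformity over $\|\Delta r\|_\infty\le\eta$ — is routine given Lemma~\ref{lem:contraction}, Corollary~\ref{cor:selection-identity}, and Theorem~\ref{thm:tv-convergence-iii}.
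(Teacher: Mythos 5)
Your proposal is correct, and its outer skeleton is exactly the paper's: the telescoping recursion $a_{n+1}\le \rho\,a_n+\sup_{w}d_{\rm TV}(T_{\Delta r}w,T_0 w)$ with $\rho=(1-\alpha)K$ from Lemma~\ref{lem:contraction} is precisely the paper's Lemma~\ref{lem:stability_11_general}, and the theorem then reduces, as you say, to a one-step bound $\delta(\eta)\le \eta\rho/4$. Where you diverge is in how that one-step bound is obtained. The paper proves a \emph{pointwise} estimate $|H^K_{p,\Delta r}(x)-H^K_{p,0}(x)|\le \tfrac{K}{2}\|\Delta r\|_\infty$ (Lemma~\ref{lem:kernel-Lip}) by differentiating $\psi_K(u,s)=Ku/(u+s)$ along the homotopy $r_t=r+t\Delta r$ and using $2us/(u+s)^2\le \tfrac12$; multiplying by $\tfrac{1-\alpha}{2}$ gives $\eta\rho/4$. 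You instead bound the \emph{integrated} quantity via Corollary~\ref{cor:selection-identity} and a softmax $\ell^1$-perturbation estimate, and the calibration you deferred does go through: from $\tfrac{d}{ds}\tilde p_j=\tilde p_j\bigl(\Delta r(X_j)-\sum_i\tilde p_i\Delta r(X_i)\bigr)$ one gets $|\tfrac{d}{ds}\tilde p_j|\le 2\eta\,\tilde p_j(1-\tilde p_j)$, hence $\sum_j|\tilde p_j^{\Delta r}-\tilde p_j^0|\le 2\eta\bigl(1-\sum_j\tilde p_j^2\bigr)\le 2\eta\bigl(1-\tfrac1K\bigr)$, so $\delta(\eta)\le \tfrac{1-\alpha}{2}\cdot 2\eta\tfrac{K-1}{K}=\rho\eta\tfrac{K-1}{K^2}\le \tfrac{\rho\eta}{4}$ since $4(K-1)\le K^2$. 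Your route thus buys a slightly sharper constant for $K\ge 3$ (the paper's pointwise bound is cruder but simpler and does not need the selection identity), and your explicit limiting argument for $n=\infty$ (both trajectories converge because $T_{\Delta r}$ is also a $\rho$-contraction, as Lemma~\ref{lem:contraction} is reward-independent) fills in a step the paper leaves implicit.
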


The supremum over $n\in\mathbb{N}\cup\{\infty\}$ guarantees that no matter how long you retrain, bounded reward noise cannot push the trajectory farther than the right‑hand side of \eqref{lim_thm:stability_11_general}.

\subsection{Regime (iv): Stable updates under bounded reward perturbations}
\label{subsec:regime-iv}

Define perturbed dynamic mapping
\begin{equation}\label{equ:dynamic_L_regime_iv}
    \mathfrak{L}_{\Delta r} w(x) = \alpha +(1-\alpha) \frac{Q_{\Delta r}(x)\,w(x)}{\innerproduct{w}{Q_{\Delta r}}_{\rm ref}},
\end{equation}
where $Q_{\Delta r}(x):=e^{r(x)+\Delta r(x)}\mathbb{E}\left[e^{\varepsilon(X)}\mid X=x\right]=Q(x)e^{ \Delta r(x)}$, $\Delta r\in L^\infty(\mathcal X,\mathcal B,\pi)$ and $w\in (\mathcal{P}_{\rm ref},d_{\rm TV})$. Let
\[
\mathfrak{L}^\infty_{\Delta r}w_0=\lim_{n\to \infty}\mathfrak{L}^n_{\Delta r}w_0.
\]

\begin{assumption}\label{ass:4_perturbation}
    For fixed $\eta_*>0$, set $Q_{*,\Delta r}=\mathbb P_0\operatorname*{-ess\,sup}_{x\in \mathcal{X}} Q_{\Delta r}(x)$. Assume
    \begin{equation}\label{equ:Ass:A6}
    \alpha> \sup_{\|\Delta r \|_\infty\leq \eta_* }\Big( \int 
\frac{Q_{*,\Delta r}}{Q_{*,\Delta r} - Q_{\Delta r}(x)\,}\;p_{\mathrm{ref}}(x)\,d\pi(x)  \Big)^{-1}.
\end{equation}

\end{assumption}

\begin{theorem}\label{thm:stable_2}
    Under Regime (iv), Setting \ref{setting:process-noise}, Retraining Assumption \ref{re_ass:1_Shannon-entropy}-\ref{re_ass:3_ess-bdd-Q} and Assumptions~\ref{ass:A2_new} and \ref{ass:4_perturbation}, we have 
\begin{equation*} 
    \limsup_{\|\Delta r\|_{\infty}\to 0} \sup_{n\in \mathbb{N}\cup \{\infty\} } d_{\rm TV}(\mathfrak{L}^n_{\Delta r} w_0, \mathfrak{L}^n_{0} w_0) = 0.
\end{equation*}
\end{theorem}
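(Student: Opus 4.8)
\textbf{Proof proposal for Theorem~\ref{thm:stable_2}.}
The plan is to combine the contraction in the Hilbert projective metric from Theorem~\ref{thm:Regime_iv_convergence} with a continuity estimate for the fixed points $w_{*,\Delta r}$ and the initial Birkhoff contraction rate, all uniformly in $\Delta r$ with $\|\Delta r\|_\infty\le\eta$ for a suitably small $\eta\le\eta_*$. First I would fix $\eta_*>0$ so that Assumption~\ref{ass:4_perturbation} (with $\eta_*$) holds; by the remark following that assumption this guarantees that for every $\Delta r$ with $\|\Delta r\|_\infty\le\eta_*$, the perturbed operator $\mathfrak L_{\Delta r}$ is exactly the normalized positive linear map associated with $Q_{\Delta r}$, it admits a unique fixed point $w_{*,\Delta r}\in\mathcal P_{\mathbb P_{\rm ref}}$ of the closed form \eqref{equ:w_*} with $Q\rightsquigarrow Q_{\Delta r}$ and $c_*\rightsquigarrow c_{*,\Delta r}\in((1-\alpha)Q_{*,\Delta r},Q_{*,\Delta r}]$ (Lemma~\ref{lem:fixed_point_c}), and the iterates $\mathfrak L_{\Delta r}^n w_0$ converge to it in $d_{\mathcal H}$ (Theorem~\ref{thm:Regime_iv_convergence}). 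Since $\|\Delta r\|_\infty\to 0$, I may restrict attention to $\eta\le\eta_*$ throughout.

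The core estimate is a triangle-type bound: for every $n$,
\[
d_{\rm TV}\!\bigl(\mathfrak L_{\Delta r}^n w_0,\mathfrak L_0^n w_0\bigr)
\;\le\;
d_{\rm TV}\!\bigl(\mathfrak L_{\Delta r}^n w_0,w_{*,\Delta r}\bigr)
+d_{\rm TV}\!\bigl(w_{*,\Delta r},w_{*,0}\bigr)
+d_{\rm TV}\!\bigl(w_{*,0},\mathfrak L_0^n w_0\bigr).
\]
For the two outer terms I would control the tail of each trajectory by its own Birkhoff contraction: by Birkhoff's theorem the one-step projective contraction ratio of the linear map $L_{\Delta r}[f]=\alpha\langle f,Q_{\Delta r}\rangle_{\rm ref}+(1-\alpha)f Q_{\Delta r}$ is $\tanh(\tfrac14\,\mathrm{diam}_{\mathcal H}(L_{\Delta r}\mathcal K))\le\kappa<1$, where the diameter is finite because $Q_{*,\Delta r}/Q_{\min,\Delta r}<\infty$ (note $Q_{\min}>0$ is \emph{not} assumed here, so instead I would use the explicit fact that $L_{\Delta r}[f]$ has range sandwiched between $\alpha\langle f,Q_{\Delta r}\rangle_{\rm ref}$ and $\alpha\langle f,Q_{\Delta r}\rangle_{\rm ref}+(1-\alpha)Q_{*,\Delta r}\|f\|_\infty$, which already bounds the diameter of the image cone uniformly in $\|\Delta r\|_\infty\le\eta_*$, say by $D<\infty$). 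Hence $d_{\mathcal H}(\mathfrak L_{\Delta r}^n w_0,w_{*,\Delta r})\le \kappa^{n-1}d_{\mathcal H}(\mathfrak L_{\Delta r} w_0,w_{*,\Delta r})\le \kappa^{n-1}D$ for all $n\ge1$, with $\kappa=\tanh(D/4)<1$ independent of $\Delta r$. Converting $d_{\mathcal H}$ to $d_{\rm TV}$ via the metric hierarchy (the bound $d_{\rm TV}\le$ const$\cdot\,d_{\mathcal H}$ on the slice of densities, or the elementary inequality $\|u-v\|_{L^1(\mathbb P_{\rm ref})}\le(e^{d_{\mathcal H}(u,v)}-1)\,\min(\|u\|_1,\|v\|_1)$ for normalized $u,v$), the two outer terms are each at most $C(e^{\kappa^{n-1}D}-1)$, which is finite and, crucially, goes to $0$ as $n\to\infty$ \emph{uniformly in} $\Delta r$. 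Therefore $\sup_n d_{\rm TV}(\cdots)$ is attained, up to $\varepsilon$, on a finite range $n\le N(\varepsilon)$ plus the middle fixed-point term.

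It remains to show (a) $d_{\rm TV}(w_{*,\Delta r},w_{*,0})\to0$ as $\|\Delta r\|_\infty\to0$ and (b) for each fixed finite $n$, $d_{\rm TV}(\mathfrak L_{\Delta r}^n w_0,\mathfrak L_0^n w_0)\to0$ as $\|\Delta r\|_\infty\to0$. Claim (b) is a routine continuity-of-finite-composition argument: $Q_{\Delta r}=Q e^{\Delta r}$ with $\|e^{\Delta r}-1\|_\infty\le e^{\eta}-1\to0$, so $\langle w,Q_{\Delta r}\rangle_{\rm ref}\to\langle w,Q\rangle_{\rm ref}$ uniformly over densities $w$ (all are $\le$ the common pointwise bound $\|w\|_\infty$, which stays bounded along finitely many iterates by \eqref{equ:update_w_Regime_iv}), and each application of $\mathfrak L_{\cdot}$ is Lipschitz in $\Delta r$ in $L^1(\mathbb P_{\rm ref})$; composing $n$ such maps and using that $\langle w,Q_{\Delta r}\rangle_{\rm ref}$ stays bounded below (by $(1-\alpha)Q_{\min,\Delta r}$ if positive, else by the first-coordinate-$\alpha$ structure of $w_t\ge\alpha$ after one step) gives the claim. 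Claim (a) follows directly from the closed form \eqref{equ:w_*}: $w_{*,\Delta r}(x)=\alpha/(1-(1-\alpha)Q_{\Delta r}(x)/c_{*,\Delta r})$, and one shows $c_{*,\Delta r}\to c_{*,0}$ (via the normalization $\langle w_{*,\Delta r},Q_{\Delta r}\rangle_{\rm ref}=c_{*,\Delta r}$, which is a continuous implicit equation in $c$ and in $Q_{\Delta r}$, using dominated convergence and the uniform lower gap $c_{*,\Delta r}\ge(1-\alpha)Q_{*,\Delta r}$ together with Assumption~\ref{ass:4_perturbation} keeping things away from the singularity $1-(1-\alpha)Q_{\Delta r}/c$), whence $w_{*,\Delta r}\to w_{*,0}$ pointwise and, being dominated by the integrable envelope coming from \eqref{equ:ass2_alpha}, in $L^1(\mathbb P_{\rm ref})$, i.e.\ in $d_{\rm TV}$. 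Combining: given $\varepsilon>0$, pick $N$ with $C(e^{\kappa^{N-1}D}-1)<\varepsilon/3$; then for $n\ge N$ the outer-tail bound already gives $d_{\rm TV}<2\varepsilon/3+d_{\rm TV}(w_{*,\Delta r},w_{*,0})$, while for $n<N$ claim (b) gives $d_{\rm TV}\to0$; taking $\|\Delta r\|_\infty$ small enough that (a) and the finitely many instances of (b) are each $<\varepsilon/3$ yields $\sup_{n\in\mathbb N\cup\{\infty\}} d_{\rm TV}(\mathfrak L_{\Delta r}^n w_0,\mathfrak L_0^n w_0)<\varepsilon$, as required.

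\textbf{Main obstacle.} The delicate point is obtaining the \emph{uniform-in-$\Delta r$} tail decay without assuming $Q_{\min}>0$: the Birkhoff contraction ratio must be bounded away from $1$ uniformly, which here I get not from a two-sided bound on $Q_{\Delta r}$ but from the additive-$\alpha$ floor built into $L_{\Delta r}$ (the image cone $L_{\Delta r}\mathcal K$ has uniformly bounded projective diameter because every element exceeds $\alpha\langle f,Q_{\Delta r}\rangle_{\rm ref}$ and is at most $\alpha\langle f,Q_{\Delta r}\rangle_{\rm ref}+(1-\alpha)Q_{*,\Delta r}\|f\|_\infty$); making this quantitative and genuinely uniform over $\|\Delta r\|_\infty\le\eta_*$ — and then cleanly transferring it through the $d_{\mathcal H}\!\to d_{\rm TV}$ comparison on un-normalized representatives — is the step that needs the most care.
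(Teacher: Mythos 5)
Your overall architecture — finite-$n$ continuity of $\Delta r\mapsto \mathfrak L^n_{\Delta r}w_0$, continuity of the fixed point $w_{*,\Delta r}$ via the implicit equation for $c_{*,\Delta r}$, plus a uniform-in-$\Delta r$ tail estimate — is exactly the decomposition the paper formalizes in Lemma~\ref{lem:stability_2_general} and verifies through Lemmas~\ref{lem:51_proof}--\ref{lem:53_proof}, and your claims (a) and (b) are sound and match the paper's treatment. The gap is in the step you yourself flag as the main obstacle: the uniform geometric tail decay $d_{\mathcal H}(\mathfrak L^n_{\Delta r}w_0,w_{*,\Delta r})\le\kappa^{n-1}D$ with $\kappa=\tanh(D/4)<1$. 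Birkhoff's ratio is $\tanh(\Delta(L_{\Delta r})/4)$ where $\Delta(L_{\Delta r})=\sup_{u,v\in\mathcal K}d_{\mathcal H}(L_{\Delta r}u,L_{\Delta r}v)$ is the projective diameter of the \emph{image} of the whole cone, and your proposed bound on that image, $\alpha\langle f,Q_{\Delta r}\rangle_{\rm ref}\le L_{\Delta r}[f]\le\alpha\langle f,Q_{\Delta r}\rangle_{\rm ref}+(1-\alpha)Q_{*,\Delta r}\|f\|_\infty$, involves $\|f\|_\infty$, which is not controlled by (and can be infinite for) $f\in\mathcal K\subset L^1(\mathbb P_{\rm ref})$. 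Taking $u=\mathbf 1$ and $v$ with $\|v\|_1=1$ but $\|v\|_\infty$ large and concentrated where $Q$ is near $Q_*$ makes $\beta(L v,L u)$ arbitrarily large, so $\Delta(L_{\Delta r})=\infty$ and Birkhoff yields only nonexpansiveness ($\kappa=1$), not a contraction. This is consistent with the paper's own Theorem~\ref{thm:Regime_iv_convergence}, which (even under the extra hypothesis $Q_{\min}>0$) does \emph{not} establish a geometric rate; it proves only strict decrease and convergence via the auxiliary function $g$, precisely because no uniform Birkhoff ratio is available. Restricting to the forward orbit does not rescue the argument either: the orbit is bounded in $d_{\mathcal H}$-distance from $w_*$, but converting that into a contraction factor strictly below $1$ that is uniform over $n$ \emph{and} over $\|\Delta r\|_\infty\le\eta_*$ is exactly what is missing.

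The paper circumvents this in Lemma~\ref{lem:53_proof} with a compactness/contradiction argument that needs no rate at all: if the uniform tail claim \eqref{lim:47} fails, one extracts $\eta_n\to0$, $N_n\to\infty$, $\|\Delta r_n\|_\infty\le\eta_n$ with $d_{\mathcal H}(\mathfrak L^{N_n}_{\Delta r_n}w_0,w_{*,\Delta r_n})\ge\varepsilon_0$; by the monotone decrease of $k\mapsto d_{\mathcal H}(\mathfrak L^{k}_{\Delta r}w_0,w_{*,\Delta r})$ (mere nonexpansiveness from \eqref{equ:45_Birkhoff}) the same lower bound holds with $N_n$ replaced by any fixed $l$, and then continuity in $\Delta r$ of $\mathfrak L^{l}_{\Delta r}w_0$ and of $w_{*,\Delta r}$ in $d_{\mathcal H}$ forces $d_{\mathcal H}(\mathfrak L^{l}_{0}w_0,w_{*,0})\ge\varepsilon_0$ for every $l$, contradicting the unperturbed convergence. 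If you replace your quantitative $\kappa^{n-1}D$ step with this qualitative argument (for which you need the Hilbert-metric joint-continuity estimate \eqref{equ:Hilbert_projective_norm_continuous}, not just the TV one), the rest of your proof goes through. A minor additional point: your continuity of the fixed points and of the trajectories implicitly relies on the convergence statement of Theorem~\ref{thm:Regime_iv_convergence}, whose hypotheses (in particular Assumption~\ref{ass:4_perturbation} holding uniformly on a ball and the conditions ensuring finiteness of $d_{\mathcal H}(w_1,w_*)$) should be carried along explicitly.
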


Taken together, Theorems~\ref{thm:stable_1} and \ref{thm:stable_2} with Lemmas~\ref{lem:r_iii_reward_increase} and \ref{lem:r_iv_reward_lbound} imply that the regularized retraining dynamics achieve superalignment in the sense of Definition~\ref{def:superalignment} in Regimes (iii) and (iv). By contrast, in the unregularized Regimes (i) and (ii), the retraining process is unstable under bounded reward perturbations, as established in Theorem~\ref{thm:TV_instability}, despite exhibiting convergence and monotonic improvement in expected exponential reward. This observation underscores a practical lesson for curate-and-retrain loops in generative modeling: incorporating reference data during training is not merely convenient but essential for ensuring stability and robustness against reward misspecification noise \citep{bertrand2023stability,bohacek2023nepotistically,ruiz2023dreambooth,zhong2022deep,stiennon2020learning,hancock2019learning,wang2023self,he2016dual,lamb2016professor,zhu2017unpaired}. {Our analysis also complements recent findings on feedback loops and data contamination in self-consuming pipelines and on the role of constrained generative sampling \citep{ferbach2024self,hataya2023corrupt,martinez2023gaiInternet,kong2024constrained}.
}

\section{Discussion}

In this paper, we investigate the convergence behaviors and stability of the retraining dynamics under the four Regimes specified in Section \ref{sec:2_setup_}. Our results generalize some aspects of \cite{ferbach2024self} in Regimes (iii) and (iv), and under a more general model setting with individual fluctuations in the reward functions. We establish convergence and, as a consequence, prove robustness of the retraining dynamics to bounded reward perturbations, highlighting the necessity of reference data for stabilizing the retraining loop. Our analyses are based on a few new technical results, including the nonlinear Perron–Frobenius theory, and we expect them to be useful for other similar problems. Moreover, all results above extend verbatim to conditional generative models. Given a prompt $y$, replace in step~1 the unconditional sampling by
\[
X_1,\dots,X_K \stackrel{\text{i.i.d.}}{\sim} p_t(\cdot \mid y),
\]
and in step~2 replace the utility by the conditional counterpart,
\[
r(x \mid y) + \varepsilon_k(x \mid y).
\]
Equivalently, replace every occurrence of $X$ by the conditional variable $X \mid Y=y$ throughout the analysis. Under this substitution, all population-level updates and convergence/stability results in the paper hold without additional technical difficulty: the discrete-choice curation kernel, mean-field limit, and mixed-data contraction carry over pointwise in $y$, and integration over the prompt distribution preserves the proofs. 


Finally, we highlight several open questions that arise from our analyses.

\begin{itemize}
    \item \textbf{Sharpness of the convergence rate}. 
    Is Theorem~\ref{thm:tv-convergence-iii} in Regime~(iii) sharp for uniform TV contraction? 
    In particular, can one characterize whether a change in convergence \emph{form} or \emph{rate} occurs at the threshold \(\alpha=(K-1)/K\)? 
    At this critical value, the worst‑case rate may cease to be geometric and exhibit only polynomial decay or even not convergence.

    \item \textbf{Necessity of stationarity in preference noise.}
    Do Theorem~\ref{thm:KL-uniform-RegimeI} (Regime~(i)) and Theorem~\ref{thm:tv-convergence-iii} (Regime~(iii)) continue to hold under the nonstationary preference‑noise model in Setting~\ref{setting:process-noise} (instead of Setting~\ref{setting:variable-noise})? 
    How robust are these guarantees as the heterogeneity of the noise field increases—for example, as 
    \(\frac{\operatorname{ess\,sup}_{x\in\mathcal{X}} Q(x)}{\operatorname{ess\,inf}_{x\in\mathcal{X}} Q(x)}\) grows? 
    It appears plausible that sufficiently heterogeneous noise can disrupt convergence in Regimes~(i) and~(iii) across standard metrics; identifying precise thresholds or explicit counterexamples is an open direction.

    \item \textbf{Effect of large comparison pools (\(K\)).}
    Intuitively, showing more candidates per round should help the curator select higher‑reward samples. 
    However, Theorem~\ref{thm:tv-convergence-iii} in Regime~(iii) requires a stronger lower bound on the mixing proportion, whereas in the infinite‑pool limit (Regime~(iv)) the sufficient condition on \(\alpha\) is much weaker (e.g., under Assumption~\ref{ass:A3}). 
    As \(K\) increases, the guarantees appear more resilient to heterogeneous preference noise, with \(K=\infty\) the most robust case; quantifying the precise dependence on \(K\) and the minimal mixing needed for stability and sharp rates remains open.
\end{itemize}

\bibliographystyle{apalike}  
\bibliography{reference}      

@article{birhane2021multimodal,
  title={Multimodal datasets: misogyny, pornography, and malignant stereotypes},
  author={Birhane, Abeba and Prabhu, Vinay Uday and Kahembwe, Emmanuel},
  journal={arXiv preprint arXiv:2110.01963},
  year={2021}
}

@article{birhane2024laionsden,
  title        = {Into the LAIONs Den: Investigating Hate in Multimodal Datasets},
  author={Birhane, Abeba and Han, Sanghyun and Boddeti, Vishnu and Luccioni, Sasha and others},
  journal={Advances in Neural Information Processing Systems},
  volume={36},
  pages={21268--21284},
  year={2023}
}

@inproceedings{hataya2023corrupt,
  title={Will large-scale generative models corrupt future datasets?},
  author={Hataya, Ryuichiro and Bao, Han and Arai, Hiromi},
  booktitle={Proceedings of the IEEE/CVF International Conference on Computer Vision},
  pages={20555--20565},
  year={2023}
}

@article{henrich2010weird,
  title={The weirdest people in the world?},
  author={Henrich, Joseph and Heine, Steven J and Norenzayan, Ara},
  journal={Behavioral and brain sciences},
  volume={33},
  number={2-3},
  pages={61--83},
  year={2010},
  publisher={Cambridge University Press}
}

@article{kirk2023understanding,
  title={Understanding the effects of rlhf on llm generalisation and diversity},
  author={Kirk, Robert and Mediratta, Ishita and Nalmpantis, Christoforos and Luketina, Jelena and Hambro, Eric and Grefenstette, Edward and Raileanu, Roberta},
  journal={arXiv preprint arXiv:2310.06452},
  year={2023}
}

@article{kong2024constrained,
  title={Diffusion models as constrained samplers for optimization with unknown constraints},
  author={Kong, Lingkai and Du, Yuanqi and Mu, Wenhao and Neklyudov, Kirill and De Bortoli, Valentin and Wu, Dongxia and Wang, Haorui and Ferber, Aaron and Ma, Yi-An and Gomes, Carla P and others},
  journal={arXiv preprint arXiv:2402.18012},
  year={2024}
}

@article{lee2021pebble,
  title={Pebble: Feedback-efficient interactive reinforcement learning via relabeling experience and unsupervised pre-training},
  author={Lee, Kimin and Smith, Laura and Abbeel, Pieter},
  journal={arXiv preprint arXiv:2106.05091},
  year={2021}
}

@article{martinez2023gaiInternet,
  title={Combining generative artificial intelligence (AI) and the Internet: Heading towards evolution or degradation?},
  author={Mart{\'\i}nez, Gonzalo and Watson, Lauren and Reviriego, Pedro and Hern{\'a}ndez, Jos{\'e} Alberto and Juarez, Marc and Sarkar, Rik},
  journal={arXiv preprint arXiv:2303.01255},
  year={2023}
}

@inproceedings{munos2023nashlhf,
  title={Nash learning from human feedback},
  author={Munos, R{\'e}mi and Valko, Michal and Calandriello, Daniele and Azar, Mohammad Gheshlaghi and Rowland, Mark and Guo, Zhaohan Daniel and Tang, Yunhao and Geist, Matthieu and Mesnard, Thomas and Fiegel, C{\^o}me and others},
  booktitle={Forty-first International Conference on Machine Learning},
  year={2024}
}

@article{perez2022redteaming,
  title={Red teaming language models with language models},
  author={Perez, Ethan and Huang, Saffron and Song, Francis and Cai, Trevor and Ring, Roman and Aslanides, John and Glaese, Amelia and McAleese, Nat and Irving, Geoffrey},
  journal={arXiv preprint arXiv:2202.03286},
  year={2022}
}

@inproceedings{schramowski2023sld,
  title={Safe latent diffusion: Mitigating inappropriate degeneration in diffusion models},
  author={Schramowski, Patrick and Brack, Manuel and Deiseroth, Bj{\"o}rn and Kersting, Kristian},
  booktitle={Proceedings of the IEEE/CVF Conference on Computer Vision and Pattern Recognition},
  pages={22522--22531},
  year={2023}
}

@article{schuhmann2022laion5b,
  title={Laion-5b: An open large-scale dataset for training next generation image-text models},
  author={Schuhmann, Christoph and Beaumont, Romain and Vencu, Richard and Gordon, Cade and Wightman, Ross and Cherti, Mehdi and Coombes, Theo and Katta, Aarush and Mullis, Clayton and Wortsman, Mitchell and others},
  journal={Advances in neural information processing systems},
  volume={35},
  pages={25278--25294},
  year={2022}
}

@article{shepard1957generalization,
  title={Stimulus and response generalization: A stochastic model relating generalization to distance in psychological space},
  author={Shepard, Roger N},
  journal={Psychometrika},
  volume={22},
  number={4},
  pages={325--345},
  year={1957},
  publisher={Springer-Verlag}
}

@article{shin2023benchmarks,
  title        = {Benchmarks and Algorithms for Offline Preference-Based Reward Learning},
  author       = {Shin, Daniel and Dragan, Anca D. and Brown, Daniel S.},
  journal      = {Transactions on Machine Learning Research},
  year         = {2023}
}

@inproceedings{skalse2022defining,
  title        = {Defining and Characterizing Reward Gaming},
  author       = {Skalse, Joar and Howe, Nicholas and Krasheninnikov, Dmitrii and Krueger, David},
  booktitle    = {Advances in Neural Information Processing Systems},
  year         = {2022}
}

@inproceedings{wang2023better,
  title        = {Better Diffusion Models Further Improve Adversarial Training},
  author       = {Wang, Zekai and Pang, Tianyu and Du, Chao and Lin, Min and Liu, Weiwei and Yan, Shuicheng},
  booktitle    = {International Conference on Machine Learning},
  year         = {2023}
}

@inproceedings{zhu2017unpaired,
  title={Unpaired image-to-image translation using cycle-consistent adversarial networks},
  author={Zhu, Jun-Yan and Park, Taesung and Isola, Phillip and Efros, Alexei A},
  booktitle={Proceedings of the IEEE international conference on computer vision},
  pages={2223--2232},
  year={2017}
}

@article{lamb2016professor,
  title={Professor forcing: A new algorithm for training recurrent networks},
  author={Lamb, Alex M and ALIAS PARTH GOYAL, Anirudh Goyal and Zhang, Ying and Zhang, Saizheng and Courville, Aaron C and Bengio, Yoshua},
  journal={Advances in neural information processing systems},
  volume={29},
  year={2016}
}

@article{he2016dual,
  title={Dual learning for machine translation},
  author={He, Di and Xia, Yingce and Qin, Tao and Wang, Liwei and Yu, Nenghai and Liu, Tie-Yan and Ma, Wei-Ying},
  journal={Advances in neural information processing systems},
  volume={29},
  year={2016}
}

@inproceedings{wang2023self,
  title={Self-instruct: Aligning language models with self-generated instructions},
  author={Wang, Yizhong and Kordi, Yeganeh and Mishra, Swaroop and Liu, Alisa and Smith, Noah A and Khashabi, Daniel and Hajishirzi, Hannaneh},
  booktitle={Proceedings of the 61st annual meeting of the association for computational linguistics (volume 1: long papers)},
  pages={13484--13508},
  year={2023}
}

@article{hancock2019learning,
  title={Learning from dialogue after deployment: Feed yourself, chatbot!},
  author={Hancock, Braden and Bordes, Antoine and Mazare, Pierre-Emmanuel and Weston, Jason},
  journal={arXiv preprint arXiv:1901.05415},
  year={2019}
}

@article{zhong2022deep,
  title={Deep generative modeling on limited data with regularization by nontransferable pre-trained models},
  author={Zhong, Yong and Liu, Hongtao and Liu, Xiaodong and Bao, Fan and Shen, Weiran and Li, Chongxuan},
  journal={arXiv preprint arXiv:2208.14133},
  year={2022}
}

@inproceedings{ruiz2023dreambooth,
  title={Dreambooth: Fine tuning text-to-image diffusion models for subject-driven generation},
  author={Ruiz, Nataniel and Li, Yuanzhen and Jampani, Varun and Pritch, Yael and Rubinstein, Michael and Aberman, Kfir},
  booktitle={Proceedings of the IEEE/CVF conference on computer vision and pattern recognition},
  pages={22500--22510},
  year={2023}
}

@article{bohacek2023nepotistically,
  title={Nepotistically trained generative-AI models collapse},
  author={Bohacek, Matyas and Farid, Hany},
  journal={arXiv e-prints},
  pages={arXiv--2311},
  year={2023}
}

@article{bertrand2023stability,
  title={On the stability of iterative retraining of generative models on their own data},
  author={Bertrand, Quentin and Bose, Avishek Joey and Duplessis, Alexandre and Jiralerspong, Marco and Gidel, Gauthier},
  journal={arXiv preprint arXiv:2310.00429},
  year={2023}
}

@article{rafailov2023dpo,
  title={Direct Preference Optimization: Your Language Model is Secretly a Reward Model},
  author={Rafailov, Rafael and Sharma, Archit and Mitchell, Eric and Ermon, Stefano and Manning, Christopher D. and Finn, Chelsea},
  journal={arXiv:2305.18290},
  year={2023}
}

@book{lemmens2012nonlinear,
  title={Nonlinear Perron-Frobenius Theory},
  author={Lemmens, Bas and Nussbaum, Roger},
  volume={189},
  year={2012},
  publisher={Cambridge University Press}
}

@article{tang2024gpo,
  title={Generalized Preference Optimization: A Unified Approach to Preference Alignment},
  author={Tang, Yunhao and others},
  journal={arXiv:2402.05749},
  year={2024}
}

@inproceedings{yang2024fdpo,
  title={Beyond Reverse KL: Generalizing Direct Preference Optimization with $f$-divergences},
  author={Yang, Cheng Wen Yen Jason and others},
  booktitle={Proc.\ ICLR},
  year={2024},
  url={https://proceedings.iclr.cc/paper_files/paper/2024/file/2b1d1e5affe5fdb70372cd90dd8afd49-Paper-Conference.pdf}
}

@article{hong2024orpo,
  title={ORPO: Monolithic Preference Optimization without Reference Model},
  author={Hong, Jiwoo and Lee, Noah and Thorne, James},
  journal={arXiv:2403.07691},
  year={2024}
}

@article{meng2024simpo,
  title={SimPO: Simple Preference Optimization with a Reference-Free Reward},
  author={Meng, Yu and Xia, Mengzhou and Chen, Danqi},
  journal={arXiv:2405.14734},
  year={2024}
}

@article{pang2024irpo,
  title={Iterative Reasoning Preference Optimization},
  author={Pang, Richard Yuanzhe and Yuan, Weizhe and Cho, Kyunghyun and He, He and Sukhbaatar, Sainbayar and Weston, Jason},
  journal={arXiv:2404.19733},
  year={2024}
}

@article{qi2024ofsdpo,
  title={Online DPO: Online Direct Preference Optimization with Fast--Slow Chasing},
  author={Qi, Bowen and others},
  journal={arXiv:2406.05534},
  year={2024}
}

@article{shi2024dmpo,
  title={Direct Multi-Turn Preference Optimization for Tool-Use Agents},
  author={Shi, Wenxuan and others},
  journal={arXiv:2406.14868},
  year={2024}
}

@article{kopf2023openassistant,
  title={Openassistant conversations-democratizing large language model alignment},
  author={K{\"o}pf, Andreas and Kilcher, Yannic and Von R{\"u}tte, Dimitri and Anagnostidis, Sotiris and Tam, Zhi Rui and Stevens, Keith and Barhoum, Abdullah and Nguyen, Duc and Stanley, Oliver and Nagyfi, Rich{\'a}rd and others},
  journal={Advances in neural information processing systems},
  volume={36},
  pages={47669--47681},
  year={2023}
}

@article{kim2024road,
  title={The road to artificial superintelligence: A comprehensive survey of superalignment},
  author={Kim, HyunJin and Yi, Xiaoyuan and Yao, Jing and Lian, Jianxun and Huang, Muhua and Duan, Shitong and Bak, JinYeong and Xie, Xing},
  journal={arXiv preprint arXiv:2412.16468},
  year={2024}
}

@article{christiano2017deep,
  title={Deep reinforcement learning from human preferences},
  author={Christiano, Paul F and Leike, Jan and Brown, Tom and Martic, Miljan and Legg, Shane and Amodei, Dario},
  journal={Advances in neural information processing systems},
  volume={30},
  year={2017}
}

@article{burns2023weaktostrong,
  title   = {Weak-to-Strong Generalization: Eliciting Strong Capabilities with Weak Supervision},
  author  = {Burns, Collin and Izmailov, Pavel and Kirchner, Jan Hendrik and Baker, Bowen and Gao, Leo and Aschenbrenner, Leopold and Chen, Yining and Ecoffet, Adrien and Joglekar, Manas and Leike, Jan and Sutskever, Ilya and Wu, Jeff},
  journal = {arXiv preprint arXiv:2312.09390},
  year    = {2023}
}

@inproceedings{azar2024psipo,
  title={A general theoretical paradigm to understand learning from human preferences},
  author={Azar, Mohammad Gheshlaghi and Guo, Zhaohan Daniel and Piot, Bilal and Munos, Remi and Rowland, Mark and Valko, Michal and Calandriello, Daniele},
  booktitle={International Conference on Artificial Intelligence and Statistics},
  pages={4447--4455},
  year={2024},
  organization={PMLR}
}

@article{dumoulin2023densitypref,
  title   = {A Density Estimation Perspective on Learning from Pairwise Human Preferences},
  author  = {Dumoulin, Vincent and Johnson, Daniel D. and Castro, Pablo Samuel and Larochelle, Hugo and Dauphin, Yann},
  journal = {arXiv preprint arXiv:2311.14115},
  year    = {2023}
}

@article{ethayarajh2024kto,
  title   = {KTO: Model Alignment as Prospect Theoretic Optimization},
  author  = {Ethayarajh, Kawin and Xu, Winnie and Muennighoff, Niklas and Jurafsky, Dan and Kiela, Douwe},
  journal = {arXiv preprint arXiv:2402.01306},
  year    = {2024},
  url     = {https://arxiv.org/abs/2402.01306}
}

@article{
kaufmann2023rlhfsurvey,
title={A Survey of Reinforcement Learning from Human Feedback},
author={Kaufmann, Timo and Weng, Paul and Bengs, Viktor and H{\"u}llermeier, Eyke},
journal={Transactions on Machine Learning Research},
issn={2835-8856},
year={2025}
}

@article{shen2023llmalignsurvey,
  title   = {Large Language Model Alignment: A Survey},
  author  = {Shen, Tianhao and Jin, Renren and Huang, Yufei and Liu, Chuang and Dong, Weilong and Guo, Zishan and Wu, Xinwei and Liu, Yan and Xiong, Deyi},
  journal = {arXiv preprint arXiv:2309.15025},
  year    = {2023}
}

@article{mcfadden2001economic,
  title={Economic choices},
  author={McFadden, Daniel},
  journal={American economic review},
  volume={91},
  number={3},
  pages={351--378},
  year={2001},
  publisher={American Economic Association}
}

@book{train2009discrete,
  title={Discrete choice methods with simulation},
  author={Train, Kenneth E},
  year={2009},
  publisher={Cambridge University Press}
}

@article{anthropic2022harmless,
  title={Training a helpful and harmless assistant with reinforcement learning from human feedback},
  author={Bai, Yuntao and Jones, Andy and Ndousse, Kamal and Askell, Amanda and Chen, Anna and DasSarma, Nova and Drain, Dawn and Fort, Stanislav and Ganguli, Deep and Henighan, Tom and others},
  journal={arXiv preprint arXiv:2204.05862},
  year={2022}
}

@article{park2024rlhfhetero,
  title={Rlhf from heterogeneous feedback via personalization and preference aggregation},
  author={Park, Chanwoo and Liu, Mingyang and Kong, Dingwen and Zhang, Kaiqing and Ozdaglar, Asuman},
  journal={arXiv preprint arXiv:2405.00254},
  year={2024}
}

@inproceedings{atar1997exponential,
  title={Exponential stability for nonlinear filtering},
  author={Atar, Rami and Zeitouni, Ofer},
  booktitle={Annales de l'Institut Henri Poincare (B) Probability and Statistics},
  volume={33},
  number={6},
  pages={697--725},
  year={1997},
  organization={Elsevier}
}

@inproceedings{eveson1995elementary,
  title={An elementary proof of the Birkhoff-Hopf theorem},
  author={Eveson, Simon P and Nussbaum, Roger D},
  booktitle={Mathematical Proceedings of the Cambridge Philosophical Society},
  volume={117},
  number={1},
  pages={31--55},
  year={1995},
  organization={Cambridge University Press}
}

@inproceedings{eveson1995applications,
  title={Applications of the Birkhoff--Hopf theorem to the spectral theory of positive linear operators},
  author={Eveson, Simon P and Nussbaum, Roger D},
  booktitle={Mathematical Proceedings of the Cambridge Philosophical Society},
  volume={117},
  number={3},
  pages={491--512},
  year={1995},
  organization={Cambridge University Press}
}

@article{touvron2023llama,
  title={Llama 2: Open foundation and fine-tuned chat models},
  author={Touvron, Hugo and Martin, Louis and Stone, Kevin and Albert, Peter and Almahairi, Amjad and Babaei, Yasmine and Bashlykov, Nikolay and Batra, Soumya and Bhargava, Prajjwal and Bhosale, Shruti and others},
  journal={arXiv preprint arXiv:2307.09288},
  year={2023}
}

@article{nakano2021webgpt,
  title={Webgpt: Browser-assisted question-answering with human feedback},
  author={Nakano, Reiichiro and Hilton, Jacob and Balaji, Suchir and Wu, Jeff and Ouyang, Long and Kim, Christina and Hesse, Christopher and Jain, Shantanu and Kosaraju, Vineet and Saunders, William and others},
  journal={arXiv preprint arXiv:2112.09332},
  year={2021}
}

@article{stiennon2020learning,
  title={Learning to summarize with human feedback},
  author={Stiennon, Nisan and Ouyang, Long and Wu, Jeffrey and Ziegler, Daniel and Lowe, Ryan and Voss, Chelsea and Radford, Alec and Amodei, Dario and Christiano, Paul F},
  journal={Advances in neural information processing systems},
  volume={33},
  pages={3008--3021},
  year={2020}
}

@article{ouyang2022training,
  title={Training language models to follow instructions with human feedback},
  author={Ouyang, Long and Wu, Jeffrey and Jiang, Xu and Almeida, Diogo and Wainwright, Carroll and Mishkin, Pamela and Zhang, Chong and Agarwal, Sandhini and Slama, Katarina and Ray, Alex and others},
  journal={Advances in neural information processing systems},
  volume={35},
  pages={27730--27744},
  year={2022}
}

@book{polyanskiy2025information,
  title={Information theory: From coding to learning},
  author={Polyanskiy, Yury and Wu, Yihong},
  year={2025},
  publisher={Cambridge University Press}
}

@inproceedings{ferbach2024self,
  title        = {Self-Consuming Generative Models with Curated Data Provably Optimize Human Preferences},
  author       = {Ferbach, Damien and Bertrand, Quentin and Bose, Avishek Joey and Gidel, Gauthier},
  booktitle    = {Advances in Neural Information Processing Systems},
  year         = {2024},
  volume       = {37}
}

@article{bradley1952rank,
  title={Rank analysis of incomplete block designs: I. the method of paired comparisons},
  author={Bradley, Ralph Allan and Terry, Milton E},
  journal={Biometrika},
  volume={39},
  number={3/4},
  pages={324--345},
  year={1952},
  publisher={JSTOR}
}

@book{luce1963handbook,
  title={Handbook of mathematical psychology: I.},
  author={Luce, Robert and Bush, Robert R and Galanter, Eugene Ed},
  year={1963},
  publisher={John Wiley}
}

@book{luce1959individual,
  title={Individual choice behavior},
  author={Luce, R Duncan and others},
  volume={4},
  year={1959},
  publisher={Wiley New York}
}

@article{plackett1975analysis,
  title={The analysis of permutations},
  author={Plackett, Robin L},
  journal={Journal of the Royal Statistical Society Series C: Applied Statistics},
  volume={24},
  number={2},
  pages={193--202},
  year={1975},
  publisher={Oxford University Press}
}

@book{van2000asymptotic,
  title={Asymptotic statistics},
  author={Van der Vaart, Aad W},
  volume={3},
  year={2000},
  publisher={Cambridge University Press}
}

@book{rao1991theory,
  author    = {Rao, M. M. and Ren, Z. D.},
  title     = {Theory of Orlicz Spaces},
  year      = {1991},
  publisher = {Marcel Dekker},
  address   = {New York, NY}
}

\appendix

\section{Auxiliary Results for Retraining Dynamics}

\begin{proof}[Proof of Lemma~\ref{lem:curated-distribution}]
Since $H^K_p(x) \in [0,K]$ and $K<\infty$, we have $$\E_{X\sim p}[|f(X)H_p^K(X)|]\leq K \E_{X\sim p}[|f(X)|]<\infty,$$
i.e., the right hand side of \eqref{equ:5_functional_int} is integrable. 

Because
$\Pr(\widehat X=X_k\mid X_{1:K},\varepsilon_{1:K})=\tilde{w}_k/\sum_j\tilde{w}_j$, we have
\[
  \E\bigl[f(\widehat X)\mid X_{1:K},\varepsilon_{1:K}\bigr]
  =\sum_{k=1}^K
     f(X_k)\,\frac{\tilde{w}_k}{\sum_{j=1}^K\tilde{w}_j}.
\]
Taking expectations and using exchangeability of the pairs $(X_k,\varepsilon_k)$ yields
\[ 
  \E[f(\widehat X)]=K\,\E\bigl[f(X_K)\tilde{w}_K/\sum_j\tilde{w}_j\bigr]=\E_{X\sim p}[f(X)H_p^K(X)].
\]
For any measurable set $A\subset \mathcal{X}$, taking $f(x)\equiv \mathbf{1}_A(x)$, which is the indicator function on $A$, we have
\begin{equation*}
    \mathbb{P}(\widehat{X}\in A) = \mathbb{E}[\mathbf{1}_A(\widehat X)]=\int_A H_p^K(x)p(x) \pi(dx),
\end{equation*}
so $pH_p^K$ is indeed the density of $\widehat X$.

\end{proof}

\begin{proof}[Proof of Theorem~\ref{thm:pure-Kfinite}]
Consider the following induction statements associate with induction index $m\in \mathbb{N}$.

\textbf{Statement 1:} For \(t = m-1\) the density \(p_{t+1}\) uniquely maximizes the pure self‑consuming objective \( p\longmapsto \mathcal L(p;p_t,K,0) =\mathbb E_{\substack{X_{1},\dots,X_{K}\sim p_t\\
      \widehat{X}\sim \mathcal{PL}(X_{1:K}, \varepsilon_{1:K})}} \bigl[\log p(\widehat X)\bigr].\) 

\textbf{Statement 2:} The Shannon entropy remains finite:
      \(
        |h(p_{m})|<\infty.
      \)

For the base case $m=0$, the Statement 1 is empty and Statement 2 is assumed in \ref{re_ass:1_Shannon-entropy}.

For induction step, we assume Statement 1 and Statement 2 hold at $m=k$.

Let \(q\) be the density of \(\widehat X\). Writing the objective in
integral form gives
\[
  \mathbb E \bigl[\log p(\widehat X)\bigr]
  \;=\;
  \int_{\mathcal X} q(x)\,\log p(x)\,\pi(dx).
\]
By induction Statement 2 with $m=k$, the $h(p_k)$ is finite. By Lemma~\ref{lem:curated-distribution}, we know that 
\[
\mathbb E_{\substack{X_{1},\dots,X_{K}\sim p_t\\
      \widehat{X}\sim \mathcal{PL}(X_{1:K}, \varepsilon_{1:K})}} \bigl[\log p(\widehat X)\bigr] = \E_{X\sim p_t} \bigl[\log p(X) H^K_{p_t}(X)\bigr].
\]
By Gibbs’ inequality this functional is uniquely maximized at \(p=q\).
Lemma~\ref{lem:curated-distribution} yields \(q(x)=p_{k}(x)H_{p_k}^{K}(x)\). Hence \(p_{k+1}=p_kH_{p_k}^{K}\) and the maximizer is unique, proving~Statement 1 with $m=k+1$.

Define \(h_t:=\int p_{t}(x)|\log p_{t}(x)|d\pi(x)\). By induction Statement 2 with $m=k$, we know that $0\leq h_k<\infty$.

Because \(0<H_{p_{k}}^{K}(x)\le K\) for all \(x\),
decompose
\[
  h_{k+1}
  \leq \int p_kH_{p_k}^{K}|\log p_k|\,d\pi
   +\int p_kH_{p_k}^{K}|\log H_{p_k}^{K} | \,d\pi
  =:A_k+B_k.
\]
Using \(H_{p_k}^{K}\le K\), we have
\( |A_k| \le K \int |p_k\log p_k|\,d\pi=K|h_k|   <\infty\).

\noindent For \(u\in(0,K]\) the function \(u\mapsto u|\log u|\) attains its
maximum at either \(u=e^{-1}\) or \(u=K\); hence
\(u|\log u|\le C_K:=\max\{e^{-1},K\log K\}\).
Consequently
\(   |B_k|  \le C_{K} \int p_k \,d\pi  =  C_{K}<\infty.\)
Thus \(h_{k+1}=A_k+B_k\leq K |h_k|+C_K <\infty\) proving~Statement 2 with $m=k+1$, which completes the induction step.

Hence, Statement 1 and Statement 2 both hold for all natural number $m$.
\end{proof}

\begin{proof}[Proof of Lemma~\ref{lem:Hlimit}]
Set \(Y_1(x):=e^{r(x)+\varepsilon(x)}\) and, for \(k\ge2\),
\(Y_k:=e^{r(X_k)+\varepsilon_k(X_k)}\), where the $X_k$ are i.i.d. with distribution $p$.  Write
\[
  \overline{Y}_{K}
  \;:=\;
  \frac1{K-1}\sum_{k=2}^{K}Y_k, 
  \qquad K\ge2.
\]
By the strong law of large numbers,
\(
  \overline{Y}_{K}\xrightarrow[K\to\infty]{\text{a.s.}}\E Y_2,
\)
so
\begin{equation}\label{equ:LLN_9}
     \frac{K\,Y_1(x)}{Y_1(x)+\sum_{k=2}^{K}Y_k}
  \;=\;
  \frac{Y_1(x)}{Y_1(x)/K+(K-1)\overline{Y}_{K}/K}
  \xrightarrow[K\to\infty]{\text{a.s.}}
  \frac{Y_1(x)}{\E Y_2}.
\end{equation}
Hence the pointwise limit of \eqref{equ:def_H_infty} holds once we justify an exchange of
limit and expectation.\\
Because \(1/u\) is convex on \((0,\infty)\), we have
\[
  \frac1{\overline{Y}_{K}}
  \;\le\;
  \frac1{K-1}\sum_{k=2}^{K}\frac1{Y_k},
\]
which implies that
\begin{equation}\label{ineq:9}
      \frac{K\,Y_1(x)}{Y_1(x)+\sum_{k=2}^{K}Y_k} \leq\frac{K}{K-1}Y_1(x) \frac{1}{\overline{Y}_{K}}
  \;\le\; 
  \frac{2Y_1(x)}{K-1}\sum_{k=2}^{K}\frac1{Y_k}=\frac2{K-1}\sum_{k=2}^{K}\frac{Y_1(x)}{Y_k}.
\end{equation}

The assumption
\(
  \E e^{\,|r(X)+\varepsilon(X)|}<\infty
\)
implies both
\(
  \E Y_2<\infty
\)
and
\(
  \E Y_2^{-1}<\infty.
\)
Consequently each random variable
\(Z_k:=Y_1(x)\,Y_k^{-1}\) has the same distribution and $\E |Z_k|<\infty$, and the family
\(\{Z_k:k\ge2\}\) is uniformly integrable.  

\begin{claim}[Averaging preserves uniform integrability]\label{lem:UI-averages}
Let $\{Z_k:k\ge 1\}$ be a uniformly integrable family.
For $K\ge 2$, define the averages
\[
\bar Z_K \;:=\; \frac{1}{K}\sum_{k=1}^K Z_k.
\]
Then the family $\{\bar Z_K:K\ge 1\}$ is uniformly integrable.
\end{claim}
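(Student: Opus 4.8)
The plan is to prove the claim directly from the definition of uniform integrability, using the standard characterization: a family $\{Z_k\}$ is uniformly integrable if and only if (a) $\sup_k \E|Z_k| < \infty$ and (b) for every $\varepsilon>0$ there exists $\delta>0$ such that for every event $E$ with $\Pr(E)<\delta$ we have $\sup_k \E[|Z_k|\mathbf 1_E] < \varepsilon$. Equivalently, one can use the moduli $\phi_k(\delta) := \sup\{\E[|Z_k|\mathbf 1_E] : \Pr(E)\le\delta\}$ and note that UI means $\sup_k \phi_k(\delta)\to 0$ as $\delta\to 0$ together with the $L^1$-bound. I would first record this characterization (citing a standard reference, or just stating it), then show both properties transfer to the averages.

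For property (a), the triangle inequality in $L^1$ gives immediately
\[
\E|\bar Z_K| \;\le\; \frac1K\sum_{k=1}^K \E|Z_k| \;\le\; \sup_{j\ge 1}\E|Z_j| \;=:\; M < \infty,
\]
so $\sup_K \E|\bar Z_K| \le M$. For property (b), fix $\varepsilon>0$, choose $\delta>0$ from the uniform integrability of $\{Z_k\}$ so that $\Pr(E)<\delta$ implies $\E[|Z_k|\mathbf 1_E]<\varepsilon$ for all $k$; then for any such $E$,
\[
\E\bigl[|\bar Z_K|\,\mathbf 1_E\bigr]
\;\le\; \frac1K\sum_{k=1}^K \E\bigl[|Z_k|\,\mathbf 1_E\bigr]
\;<\; \frac1K\sum_{k=1}^K \varepsilon
\;=\; \varepsilon,
\]
uniformly in $K$. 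Hence $\{\bar Z_K\}$ satisfies both conditions and is uniformly integrable. (Alternatively, and even more cleanly, one may invoke the de la Vallée–Poussin criterion: pick a nonnegative increasing convex $G$ with $G(t)/t\to\infty$ and $\sup_k \E[G(|Z_k|)]<\infty$; then by Jensen's inequality $\E[G(|\bar Z_K|)] \le \frac1K\sum_{k=1}^K \E[G(|Z_k|)] \le \sup_j \E[G(|Z_j|)] < \infty$, so the same $G$ witnesses uniform integrability of the averages. I would mention this as the slicker route and possibly use it as the primary argument.)

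There is essentially no hard step here: the whole content is that $L^1$-type bounds and truncated expectations are \emph{convex} functionals of the summands, so averaging cannot make them worse — this is precisely what Jensen's inequality (or the triangle inequality) delivers. The only point that requires a moment's care is making sure the chosen characterization of uniform integrability is the one used elsewhere in the paper and that one handles the index set $K\ge 1$ versus $K\ge 2$ consistently (harmless, since a single extra term $Z_1$ is trivially uniformly integrable on its own). I would therefore present the de la Vallée–Poussin argument as the main proof in two or three lines, and note the $\varepsilon$–$\delta$ argument as an equivalent alternative.
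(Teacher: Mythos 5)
Your proposal is correct, and your primary route (the de la Vallée--Poussin criterion combined with Jensen's inequality applied to the convex witness function) is exactly the argument the paper gives; the $\varepsilon$--$\delta$ alternative you sketch is also valid but not what the paper uses. No gaps.
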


\begin{proof}[Proof of Claim~\ref{lem:UI-averages}]
By the de la Vallée--Poussin criterion \citep{rao1991theory} for uniform integrability, there exist a nondecreasing convex function
$\Phi:[0,\infty)\to[0,\infty)$ with $\displaystyle\lim_{x\to\infty}\frac{\Phi(x)}{x}=\infty$ and a constant $C<\infty$ such that
\[
\sup_{k\ge 1}\;\mathbb E\big[\Phi(|Z_k|)\big]\;\le\; C.
\]
For fixed $K\ge 2$, we have
\[
\Phi\!\left(|\bar Z_K|\right)
\;=\;\Phi\!\left(\Big|\frac{1}{K}\sum_{k=1}^K Z_k\Big|\right)
\;\le\;\Phi\!\left(\frac{1}{K}\sum_{k=1}^K |Z_k|\right)
\;\le\;\frac{1}{K}\sum_{k=1}^K \Phi\!\left(|Z_k|\right),
\]
where the last inequality is Jensen’s inequality applied to the convex $\Phi$.
Taking expectations and using the uniform bound,
\[
\mathbb E\big[\Phi(|\bar Z_K|)\big]
\;\le\;\frac{1}{K}\sum_{k=1}^K \mathbb E\big[\Phi(|Z_k|)\big]
\;\le\; C
\quad\text{for all }K\ge 2.
\]
Therefore $\sup_{K\ge 2}\mathbb E\big[\Phi(|\bar Z_K|)\big]\le C<\infty$, and by the de la Vallée--Poussin criterion again,
the family $\{\bar Z_K:K\ge 2\}$ is uniformly integrable.
\end{proof}

By Claim \ref{lem:UI-averages}, the sequence defined by the right-hand side of \eqref{ineq:9} is uniformly integrable. Hence, the family
\(
  \bigl\{\tfrac{K\,Y_1(x)}{Y_1(x)+\sum_{k=2}^{K}Y_k}:K\ge2\bigr\}
\)
is dominated by a uniformly integrable family $\{\frac{2Y_1(x)}{K-1}\sum_{k=2}^{K}\frac1{Y_k}:K\geq 2 \} $ and is therefore uniformly integrable itself.

Uniform integrability, combined with the almost-sure convergence \eqref{equ:LLN_9}, allows us to pass the limit inside the expectation, yielding the claimed
identity
\(
  H_{p}^{\infty}(x)=\lim_{K\to\infty}H_{p}^{K}(x)=\frac{Q(x)}{\mathbb{E}_{X\sim {p}}Q(X)}.
\)
\end{proof}

\begin{proof}[Proof of Theorem~\ref{thm:pure-Kinfty}]Consider the following induction statements associated with induction index $m\in \mathbb{N}$.

\textbf{Statement 1:} For \(t = m-1\) the density \(p_{t+1}\) uniquely maximizes the pure self‑consuming objective \eqref{eq:inf-K-objective}, \( p\longmapsto \mathcal L(p;p_t,\infty,0) =\mathbb E_{X\sim p_t}\bigl[\log p(X)H_{p_t}^{\infty}(X)\bigr].\) 

\textbf{Statement 2:} The Shannon entropy remains finite:
      \(|h(p_{m})|<\infty.\) 

\textbf{Statement 3:} The exponential reward satisfies $\E_{X\sim p_{m}} Q(X) \geq \E_{X\sim p_{0}} Q(X)>0$. 
      
For the base case $m=0$, the Statement 1 is empty, Statement 2 is assumed in \ref{re_ass:1_Shannon-entropy} and Statement 3 is trivial.

For induction step, we assume Statement 1 and Statement 2 hold at $m=k$.

Writing the objective in
integral form gives
\[
  \mathbb E \bigl[\log p(x)H_{p_k}^{\infty}(x)\bigr]
  \;=\;
  \int_{\mathcal X} p_k(x)\,\log p(x)H_{p_k}^{\infty}(x)\,\pi(dx).
\]
By induction Statement 2 with $m=k$, the $h(p_k)$ is finite. It is easy to verify that $p_kH_{p_k}^\infty$ is a density (w.r.t. $\pi$). By Gibbs’ inequality this functional is uniquely maximized at \(p=p_kH_{p_k}^\infty\). Hence \(p_{k+1}=p_kH_{p_k}^{\infty}\) and the maximizer is unique, proving~Statement 1 with $m=k+1$.

Define \(h_t:=\int p_{t}(x)|\log p_{t}(x)|d\pi(x)\). By induction Statement 2 with $m=k$, we know that $0\leq h_k<\infty$.

By induction Statement 3 with $m=k$, we know that \(0<H_{p_{k}}^{\infty}(x)\le Q_*/\E_{X\sim p_k}Q(X)\leq Q_*/\E_{X\sim p_0}Q(X)=:D<\infty \) $\pi$-a.s..
Decompose
\[
  h_{k+1}
  = \int p_kH_{p_k}^{\infty}|\log p_k|\,d\pi
   +\int p_kH_{p_k}^{\infty}|\log H_{p_k}^{\infty} | \,d\pi
  =:A_k+B_k.
\]
We have
\( |A_k| \le D \int |p_k\log p_k|\,d\pi=D|h_k|   <\infty\).  For \(u\in(0,D]\), the function \(u\mapsto u|\log u|\) attains its
maximum at either \(u=e^{-1}\) or \(u=D\); hence
\(u|\log u|\le C_D:=\max\{e^{-1},D\log D\}\).
Consequently
\(   |B_k|  \le C_{D} \int p_k \,d\pi  =  C_{D}<\infty.\)
Thus, \(h_{k+1}=A_k+B_k\leq D |h_k|+C_D <\infty\), proving~Statement 2 with $m=k+1$

Since we already verified Statement 1 with $m=k+1$, we know that
\[
\E_{X\sim p_{k+1}}Q(X)-\E_{X\sim p_{k}}Q(X) = \frac{\E_{X\sim p_k}Q^2(X) - (\E_{X\sim p_k}Q(X))^2 }{\E_{X\sim p_k}Q(X)}\geq 0.
\]
Thus, Statement 1 with $m=k$ holding implies the Statement 1 with $m=k+1$ also holds, which completes the induction step.

Hence, Statements 1-3 hold for all natural numbers $m$.

\end{proof}

\begin{proof}[Proof of Theorem~\ref{thm:update-mix}]

    Write $\mathcal{L}(p;p_t,K,\alpha)=\int\psi(x)\log p(x)\pi(dx)$, where $\psi=\alpha p_{\rm{ref}}+(1-\alpha) H_{p_t}^Kp_t$ is a PDF by Lemma \ref{lem:curated-distribution}. Following the proofs of Theorems~\ref{thm:pure-Kfinite} and \ref{thm:pure-Kinfty}, if $\psi$ has finite Shannon entropy, the result of Theorem~\ref{thm:update-mix} follows from the Gibbs' inequality immediately. It suffices to show that finiteness of the Shannon entropy is preserved under convex combinations of densities.

    \begin{claim}[Convex combination preserved finite entropy]
        Let $p_1,p_2\in\Prob_\pi$ have finite Shannon entropies.  Fix any combination weight \(\alpha\in(0,1)\) and set \( p_{\alpha}(x):=\alpha\,p_{1}(x)\;+\;(1-\alpha)\,p_{2}(x), x\in\mathcal X.\) Then \(p_{\alpha}\in\Prob_\pi\) and has finite Shannon entropy.
    \end{claim}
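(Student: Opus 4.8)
The plan is to prove the two assertions separately: that $p_\alpha$ lies in $\mathcal{P}_\pi$, and that it has finite Shannon entropy. The first is immediate: $p_\alpha \geq 0$ since it is a convex combination of nonnegative functions, and $\int p_\alpha \, d\pi = \alpha \int p_1 \, d\pi + (1-\alpha)\int p_2 \, d\pi = \alpha + (1-\alpha) = 1$, so $p_\alpha$ is a density with respect to $\pi$, hence $p_\alpha \in \mathcal{P}_\pi$. The substance is the entropy bound, i.e., showing $\int p_\alpha |\log p_\alpha| \, d\pi < \infty$ given that $\int p_i |\log p_i| \, d\pi < \infty$ for $i = 1, 2$.

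For the entropy bound, the approach I would take is a pointwise comparison of $u \mapsto u|\log u|$ under convex combination. First I would record two elementary facts. On the one hand, by concavity of $u \mapsto -u\log u$ (or a direct check), for $u, v \geq 0$ one has $-p_\alpha \log p_\alpha \leq -\alpha p_1 \log p_1 - (1-\alpha)p_2\log p_2 + H_b(\alpha)$ where $H_b(\alpha) = -\alpha\log\alpha - (1-\alpha)\log(1-\alpha)$ is the binary entropy — this is the standard concavity/grouping inequality for entropy. On the other hand, for the part of the integral where $p_\alpha \geq 1$ (so $|\log p_\alpha| = \log p_\alpha$), I would use $p_\alpha = \alpha p_1 + (1-\alpha)p_2 \leq \max(p_1, p_2) \leq p_1 + p_2$, together with monotonicity of $u \mapsto u\log u$ on $[1,\infty)$ and the bound $p_\alpha \log p_\alpha \leq (p_1+p_2)\log(p_1+p_2)$, then split $\log(p_1 + p_2) \leq \log 2 + \log\max(p_1,p_2) \leq \log 2 + \log p_1 + \log p_2$ on the region where the relevant factor exceeds $1$. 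Combining: split $\mathcal{X}$ into $\{p_\alpha \leq 1\}$ and $\{p_\alpha > 1\}$; on the first use the concavity bound to control $\int_{\{p_\alpha \leq 1\}} p_\alpha |\log p_\alpha|\, d\pi \leq \alpha \int p_1|\log p_1| + (1-\alpha)\int p_2 |\log p_2| + H_b(\alpha) < \infty$; on the second, dominate $p_\alpha \log p_\alpha$ by a finite sum of terms of the form $c\, p_i$ and $c\, p_i|\log p_i|$, each of which integrates finitely.

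The main obstacle — really the only delicate point — is handling the region where $p_\alpha$ is large but one or both of $p_1, p_2$ may be small (less than $1$), so that $\log p_i$ could be negative and the naive bound $\log(p_1+p_2) \leq \log p_1 + \log p_2 + \log 2$ fails. The clean way around this is to write $\log(p_1 + p_2) \leq \log 2 + \log^+ p_1 + \log^+ p_2$ where $\log^+ t = \max(\log t, 0)$, which is valid for all $t \geq 0$ (indeed $p_1 + p_2 \leq 2\max(p_1,p_2) \leq 2 e^{\log^+ p_1 + \log^+ p_2}$), and then note $p_\alpha \log^+ p_i \leq p_i \log^+ p_i + p_j \log^+ p_i \leq p_i|\log p_i| + p_j \log^+ p_i$; on $\{p_i \geq 1\}$ the last term is finite because... — actually it is cleanest to simply bound everything on $\{p_\alpha > 1\}$ by $(p_1 + p_2)\log(p_1+p_2) \cdot \mathbf{1}_{\{p_1 + p_2 > 1\}}$ and invoke the already-proven-style lemma that $\int (p_1+p_2)|\log(p_1+p_2)|\,d\pi < \infty$ whenever $p_1, p_2$ do, which itself reduces to the same pointwise inequality $t\log^+ t \leq $ (sum of $p_i \log^+ p_i$ plus linear terms). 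I would present this as: $p_\alpha|\log p_\alpha| \leq \alpha p_1|\log p_1| + (1-\alpha)p_2|\log p_2| + H_b(\alpha)$ on $\{p_\alpha \leq 1\}$ from concavity, and $p_\alpha|\log p_\alpha| \leq (p_1 + p_2)\log^+(p_1+p_2) \leq (\log 2)(p_1 + p_2) + p_1\log^+ p_1 + p_2\log^+ p_2 + p_1 \log^+ p_2 + p_2\log^+ p_1$ on $\{p_\alpha > 1\}$ — and the cross terms $p_i \log^+ p_j$ are bounded since $\log^+ p_j \leq p_j$ gives $p_i \log^+ p_j \leq p_i p_j$, hmm that need not integrate; better: $\log^+ p_j$ is integrable against $p_i \, d\pi$? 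Not obviously. So instead I would avoid cross terms entirely by using $(p_1+p_2)\log^+(p_1+p_2) \le 2\max(p_1,p_2)\log^+(2\max(p_1,p_2)) \le 2(p_1\log^+(2p_1) + p_2\log^+(2p_2))$, and $p_i \log^+(2p_i) \le p_i\log 2 + p_i|\log p_i|$, which integrates. Each of these integrals is finite by hypothesis and because $\int p_i\, d\pi = 1$, so $\int p_\alpha|\log p_\alpha|\,d\pi < \infty$, completing the proof; the result then feeds directly into the Gibbs-inequality argument of Theorem~\ref{thm:update-mix} exactly as in the proofs of Theorems~\ref{thm:pure-Kfinite} and~\ref{thm:pure-Kinfty}.
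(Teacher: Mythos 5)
Your argument is correct, and it overlaps with the paper's proof on the negative part but handles the positive part differently. The paper proves the claim by a two-sided pointwise sandwich valid on all of $\mathcal X$: convexity of $\phi(u)=u\log u$ gives $\phi(p_\alpha)\le\alpha\phi(p_1)+(1-\alpha)\phi(p_2)$ from above, and monotonicity of $\log$ gives $\phi(p_\alpha)\ge\alpha p_1\log(\alpha p_1)+(1-\alpha)p_2\log((1-\alpha)p_2)$ from below; since both bounding functions are integrable, $\phi(p_\alpha)$ is absolutely integrable with no case analysis. Your bound on $\{p_\alpha\le 1\}$ is exactly the paper's lower (grouping) inequality, but on $\{p_\alpha>1\}$ you replace the convexity upper bound by the cruder domination $p_\alpha\log^+ p_\alpha\le 2\bigl(p_1\log^+(2p_1)+p_2\log^+(2p_2)\bigr)\le 2\sum_i p_i(\log 2+|\log p_i|)$, which is a standard and perfectly sound device (and would also cover unnormalized sums like $p_1+p_2$), at the cost of a region split and extra bookkeeping. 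Two cosmetic points to fix in a final write-up: the inequality $-p_\alpha\log p_\alpha\le -\alpha p_1\log p_1-(1-\alpha)p_2\log p_2+H_b(\alpha)$ is not valid pointwise with $H_b(\alpha)$ as a constant (the correct pointwise form carries the terms $-\alpha\log\alpha\cdot p_1-(1-\alpha)\log(1-\alpha)\cdot p_2$, which only become $H_b(\alpha)$ after integrating against the densities), so state it in integrated form; and the exploratory false starts about cross terms $p_i\log^+ p_j$ should be excised, since your final chain avoids them entirely.
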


    \begin{proof}
Convexity of the function \(\phi(u):=u\log u\) on \((0,\infty)\) implies,
for every \(x\in\mathcal X\),
\begin{equation}\label{eq:phi-upper}
      \phi\bigl(p_{\alpha}(x)\bigr)
  \;\le\;
  \alpha\,\phi\bigl(p_{1}(x)\bigr)
  +(1-\alpha)\,\phi\bigl(p_{2}(x)\bigr).
\end{equation}
It is straightforward that
\begin{equation}\label{eq:phi-lower}
\begin{aligned}
    \phi\bigl(p_{\alpha}(x)\bigr)
  &\ge
  \alpha\,p_{1}(x)\log\bigl(\alpha\,p_{1}(x)\bigr)
 +(1-\alpha)\,p_{2}(x)\log\bigl((1-\alpha)\,p_{2}(x)\bigr) \\[4pt]
  &=\alpha\log\alpha\cdot p_{1}(x)
    +(1-\alpha)\log(1-\alpha)\cdot p_{2}(x)  \\
  &\quad+\alpha\,\phi\bigl(p_{1}(x)\bigr)
        +(1-\alpha)\,\phi\bigl(p_{2}(x)\bigr).
\end{aligned}
\end{equation}
Integrating \eqref{eq:phi-upper} yields
\[
  -{h}(p_{\alpha})
  \;\le\;
  \alpha\bigl(-{h}(p_{1})\bigr)
  +(1-\alpha)\bigl(-{h}(p_{2})\bigr)
  <\infty,
\]
because each component entropy is finite.  Therefore
\({h}(p_{\alpha})>-\infty\).

Likewise, integrating \eqref{eq:phi-lower} gives
\[
  -{h}(p_{\alpha})
  \;\ge\;
  \alpha\log\alpha+(1-\alpha)\log(1-\alpha)
  +\alpha\bigl(-{h}(p_{1})\bigr)
  +(1-\alpha)\bigl(-{h}(p_{2})\bigr).
\]
Hence \(-{h}(p_{\alpha})\) is finite on both sides, so
\({h}(p_{\alpha})\in\mathbb R\) and the Shannon entropy of the mixture is
finite.
\end{proof}

\end{proof}

\section{Auxiliary Results for Convergence Analysis in Pure Synthetic Data Retraining}
\subsection{Auxiliary for Regime (i)}
In this section, we establish convergence in Regime (i): $\alpha=0$, $K<\infty$. For this subsection, we always assume Setting \ref{setting:variable-noise}, Retraining Assumptions \ref{re_ass:1_Shannon-entropy}–\ref{re_ass:3_ess-bdd-Q}, and Assumption \ref{ass1:non0-mass}.

Define the scalar
\[
  R_t \;:=\; \E_{X\sim p_t}\bigl[e^{r(X)}\bigr].
\]
Fix $K\ge2$. Recall that $E_j:=e^{\varepsilon_j}>0$ are i.i.d.\ and independent of all $X_i$'s. Set
\[
  S:=\sum_{i=1}^{K-1} e^{r(X_i)}\,E_i.
\]
Let
\[
  \widetilde{H}_{p_t}^{K}(q)
  \;:=\;
  \E\!\left[\,K\;\frac{q\,E_0}{qE_0+S}\right],\qquad q>0.
\]

We will use that $q\mapsto   \widetilde{H}_{p_t}^{K}(q)$ is strictly increasing and concave, $\widetilde{H}_{p_t}^{K}(e^{r(X)})={H}_{p_t}^{K}(X)$,
and that the normalization $\E_{X\sim p_t}[{H}_{p_t}^{K}(X)]=1$ holds.

Let $r_*=\operatorname*{ess\,sup}_{x\in\mathcal{X}} r(x)$. It is obvious that
\[
  A:=\{x\in\mathcal X: Q(x)=Q_*\}=\{x\in\mathcal X: r(x)=r_*\}
  \ \text{and}\ 
  h_t := \sup_{x\in\mathcal X} H_{p_t}^{K}(x) = H_{p_t}^{K}(x')\geq 1,x' \in A.
\]

\begin{lemma}[One-step improvement identity]\label{lem:cov_A1}
For any $t\geq 0$,
\[
  R_{t+1}
  =\E_{p_t}\!\bigl[e^{r(X)}\,H_{p_t}^{K}(X)\bigr],
  \qquad
  R_{t+1}-R_t
  = \operatorname{Cov}_{X\sim p_t}\!\bigl(e^{r(X)},\widetilde{H}_{p_t}^{K}(e^{r(X)}) \bigr)\;\ge 0.
\]
\end{lemma}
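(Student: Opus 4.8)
The plan is to obtain both displayed equalities by direct substitution from the Regime~(i) update rule and then to prove nonnegativity by a comonotone‑covariance argument. For the first identity I would start from the update $p_{t+1}=p_t\,H_{p_t}^{K}$ supplied by Theorem~\ref{thm:pure-Kfinite} and plug it into the definition $R_{t+1}=\int e^{r(x)}p_{t+1}(x)\,\pi(dx)$, which immediately gives $R_{t+1}=\int e^{r(x)}p_t(x)H_{p_t}^{K}(x)\,\pi(dx)=\E_{X\sim p_t}\!\bigl[e^{r(X)}H_{p_t}^{K}(X)\bigr]$. Along the way I would record integrability: under Setting~\ref{setting:variable-noise} one has $e^{r(x)}=Q(x)/\E[e^{\varepsilon}]\le Q_*/\E[e^{\varepsilon}]<\infty$ by Retraining Assumptions~\ref{re_ass:2_exp-intgr}–\ref{re_ass:3_ess-bdd-Q}, and $0\le H_{p_t}^{K}\le K$, so $R_t<\infty$ and $R_{t+1}\le K R_t<\infty$, hence every expectation below is well defined.

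For the covariance identity I would use the two facts recalled just before the lemma: the pointwise representation $H_{p_t}^{K}(X)=\widetilde H_{p_t}^{K}(e^{r(X)})$ (valid under Setting~\ref{setting:variable-noise}) and the normalization $\E_{X\sim p_t}[H_{p_t}^{K}(X)]=1$. Writing $U:=e^{r(X)}$ and $V:=\widetilde H_{p_t}^{K}(e^{r(X)})$ with $X\sim p_t$, the normalization is precisely $\E_{p_t}[V]=1$, so $R_{t+1}-R_t=\E_{p_t}[UV]-\E_{p_t}[U]=\E_{p_t}[UV]-\E_{p_t}[U]\,\E_{p_t}[V]=\operatorname{Cov}_{X\sim p_t}(U,V)$, which is exactly the claimed identity $R_{t+1}-R_t=\operatorname{Cov}_{X\sim p_t}\!\bigl(e^{r(X)},\widetilde H_{p_t}^{K}(e^{r(X)})\bigr)$.

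Nonnegativity then follows because $q\mapsto\widetilde H_{p_t}^{K}(q)$ is increasing on $(0,\infty)$ (again recalled before the lemma): $U$ and $V=\widetilde H_{p_t}^{K}(U)$ are both nondecreasing functions of the single random variable $U$, hence positively correlated. Concretely, I would take an independent copy $(U',V')$ of $(U,V)$; comonotonicity gives $(U-U')(V-V')\ge 0$ almost surely, and taking expectations yields $2\operatorname{Cov}(U,V)\ge 0$ (this is Chebyshev's correlation / association inequality on $\mathbb R$). Combining with the previous step gives $R_{t+1}-R_t\ge 0$.

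I do not anticipate a genuine obstacle: the statement is essentially bookkeeping built on the update rule of Theorem~\ref{thm:pure-Kfinite} together with the three properties of $\widetilde H_{p_t}^{K}$ quoted immediately above it (monotonicity, the identity $\widetilde H_{p_t}^{K}(e^{r(X)})=H_{p_t}^{K}(X)$, and the normalization). The only two points that each need a single careful sentence are (i) verifying integrability so the covariance is well defined, and (ii) stating the comonotone‑covariance inequality with its short two‑independent‑copies justification.
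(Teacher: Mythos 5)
Your proposal is correct and follows essentially the same route as the paper's proof: substitute the Regime~(i) update rule to get the first identity, use the normalization $\E_{X\sim p_t}[H_{p_t}^{K}(X)]=1$ together with $H_{p_t}^{K}(X)=\widetilde H_{p_t}^{K}(e^{r(X)})$ to rewrite the increment as a covariance, and invoke monotonicity of $\widetilde H_{p_t}^{K}$ for nonnegativity. Your additional details (the explicit two-independent-copies Chebyshev argument and the integrability check) are fine elaborations of steps the paper leaves implicit.
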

\begin{proof}[Proof of Lemma~\ref{lem:cov_A1}]
    By update rule \eqref{equ:update_regime_1}, we know that
\[
 R_{t+1}
  =\E_{p_t}\!\bigl[e^{r(X)}\,H_{p_t}^{K}(X)\bigr].
\]
A direct computation shows that
\begin{align*}
    R_{t+1}-R_t= \mathbb{E}_{X\sim p_t} e^{r(X)} ( \widetilde{H}_{p_t}^{K}(e^{r(X)}) -1) = \operatorname{Cov}_{X\sim p_t}\!\bigl(e^{r(X)},\widetilde{H}_{p_t}^{K}(e^{r(X)}) \bigr).
\end{align*}
    Because $\widetilde{H}_{p_t}^{K}(q)$ is non-decreasing in $q$, we know that $\operatorname{Cov}_{X\sim p_t}\!\bigl(e^{r(X)},\widetilde{H}_{p_t}^{K}(e^{r(X)}) \bigr)\geq 0$.
\end{proof}

\begin{lemma}\label{lem:massA}
For every $t\ge1$ and all $x\in A$,
\begin{equation}
    \frac{p_t(x)}{p_0(x)} \;=\; \prod_{s=0}^{t-1} h_s,\mathbb{P}_t(A) \;=\; \mathbb{P}_0(A)\,\prod_{s=0}^{t-1} h_s,\text{ and }\prod_{s=0}^{t-1} h_s \;\le\; \frac{1}{\mathbb{P}_0(A)}.
\end{equation}
Moreover, $h_t \to 1$ as $t\to\infty$, and $p_t(x)\leq \frac{1}{\mathbb{P}_0(A)} p_0(x)$ for any $x\in \mathcal{X}$.
\end{lemma}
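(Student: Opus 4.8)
The plan is to read everything off the multiplicative update $p_{t+1}=p_t\,H_{p_t}^{K}$ together with the two facts recorded just before the statement: that $H_{p_t}^{K}(x)=h_t$ for every $x\in A$ — because under Setting~\ref{setting:variable-noise} the kernel satisfies $H_{p_t}^{K}(x)=\widetilde H_{p_t}^{K}(e^{r(x)})$, depending on $x$ only through $e^{r(x)}$, and $\widetilde H_{p_t}^{K}$ is nondecreasing while $A=\{x:r(x)=r_*\}$ is exactly the set where $e^{r(x)}$ attains its maximal value — and that $h_t=\sup_x H_{p_t}^{K}(x)\ge \E_{X\sim p_t}[H_{p_t}^{K}(X)]=1$ by the normalization of $p_{t+1}$. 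Write $P_t:=\prod_{s=0}^{t-1}h_s$ (so $P_0=1$); note $P_t$ is nondecreasing because $h_s\ge 1$.

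First I would prove the ratio identity by induction on $t$. For $x\in A$ the update gives $p_{t+1}(x)=p_t(x)H_{p_t}^{K}(x)=h_t\,p_t(x)$; iterating from $p_0$ yields $p_t(x)=P_t\,p_0(x)$ for all $x\in A$, which is the first display (to be read on $\{p_0>0\}$; where $p_0(x)=0$ both sides vanish by the update). Integrating this identity over $A$ against $\pi$ gives $\mathbb{P}_t(A)=\int_A p_t\,d\pi=P_t\int_A p_0\,d\pi=P_t\,\mathbb{P}_0(A)$, the second display. Since $\mathbb{P}_t(A)\le 1$ and $\mathbb{P}_0(A)>0$ by Assumption~\ref{ass1:non0-mass}, the bound $P_t\le 1/\mathbb{P}_0(A)$ follows, which is the third display.

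Next, the nondecreasing sequence $(P_t)_{t\ge 0}$ is bounded above by $1/\mathbb{P}_0(A)<\infty$, hence converges, $P_t\uparrow P_\infty\in(0,\infty)$; then $h_t=P_{t+1}/P_t\to P_\infty/P_\infty=1$. Finally, for the uniform domination I would use $H_{p_t}^{K}(x)\le h_t$ for \emph{every} $x$ (the definition of $h_t$ as a supremum): then $p_{t+1}(x)=p_t(x)H_{p_t}^{K}(x)\le h_t\,p_t(x)$, so inductively $p_t(x)\le P_t\,p_0(x)\le p_0(x)/\mathbb{P}_0(A)$ for all $x\in\mathcal X$.

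The argument is elementary; the only point needing care is measure-theoretic bookkeeping — one should fix versions for which $r(x)\le r_*$ (equivalently $Q(x)\le Q_*$, as permitted by Retraining Assumption~\ref{re_ass:3_ess-bdd-Q}), so that $A$ is genuinely the pointwise maximizing set and the identity $H_{p_t}^{K}(x)=h_t$ on $A$ holds pointwise rather than merely $\pi$-a.e. The single step that is not pure algebra is $h_t\to 1$, but this is just the monotone-bounded convergence of the partial products $P_t$ combined with telescoping, so I anticipate no real obstacle.
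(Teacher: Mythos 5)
Your proposal is correct and follows essentially the same route as the paper's proof: iterate the multiplicative update on $A$ where the kernel is constant and equal to $h_t$, integrate over $A$ to get the mass recursion, bound the product by $1/\mathbb{P}_0(A)$ via $\mathbb{P}_t(A)\le 1$, deduce $h_t\to 1$ from convergence of the bounded monotone partial products, and use $H_{p_t}^K\le h_t$ pointwise for the uniform domination. Your added remarks on fixing versions so that the identity holds pointwise on $A$ and on interpreting the ratio where $p_0$ vanishes are harmless refinements of the same argument.
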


\begin{proof}[Proof of Lemma~\ref{lem:massA}]
The population update is $p_{t+1}(x)=p_t(x)\,H^K_{p_t}(x)$. For $x\in A$,
\[
p_{t+1}(x)=p_t(x)\,h_t
\quad\Longrightarrow\quad
\frac{p_t(x)}{p_0(x)}=\prod_{s=0}^{t-1}h_s,
\]
which proves the first claim. Integrating $p_{t+1}=p_t\,H^K_{p_t}$ over $A$ yields $\mathbb{P}_{t+1}(A)=h_t\,\mathbb{P}_t(A)$, and induction gives $\mathbb{P}_t(A)=\mathbb{P}_0(A)\prod_{s=0}^{t-1}h_s$, the second claim. Since $\mathbb{P}_t(A)\le1$, we obtain $\prod_{s=0}^{t-1}h_s\le 1/\mathbb{P}_0(A)$, the third claim. Notice that $h_t\geq 1$ and $\prod_{s=0}^\infty h_s<\infty$, which implies $h_t\to 1$ as $t\to \infty$, thereby proving the fourth claim. By the definition of $h_t$, we know that
\begin{equation*}
    \frac{p_{t+1}(x)}{p_t(x)}=H^K_{p_t}(x)\leq h_t.
\end{equation*}
Hence, for any $x\in \mathcal{X}$, we have
\begin{equation}\label{ineq:P_t_B}
    p_t(x) \leq \prod_{s=0}^{t-1}h_s\cdot p_0(x) \leq \frac{1}{\mathbb{P}_0(A)}p_0(x).
\end{equation}

\end{proof}

The following two lemmas are straightforward; we omit the proofs.
\begin{lemma}\label{lem:Jensen-G}
Let $a_i:=e^{r(X_i)} /e^{r_*}\in(0,1]$ and $\mu_t:=\E_{p_t}[a_i]=R_t/e^{r_*}$.
For fixed $E=(E_0,\dots,E_{K-1})$ define
\[
  \Phi(a_1,\dots,a_{K-1};E)
  \;:=\;
  \frac{K\,E_0}{E_0+\sum_{i=1}^{K-1} a_i E_i}.
\]
Then $\Phi$ is jointly convex and coordinatewise decreasing in $(a_1,\dots,a_{K-1})$, and
\[
  h_t
  \;= \; \E_E\,\E_{X}\!\bigl[\Phi(a_1,\dots,a_{K-1};E)\mid E\bigr]
  \;\ge\; \E_E\!\bigl[\Phi(\mu_t,\dots,\mu_t;E)\bigr]
  \;=:\; G(\mu_t),
\]
where
\[
  G(\mu)
  \;:=\;
  \E\!\left[\frac{K\,E_0}{E_0+\mu\sum_{i=1}^{K-1}E_i}\right],\qquad \mu\in(0,1].
\]
\end{lemma}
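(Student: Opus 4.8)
The plan is to establish the three assertions of Lemma~\ref{lem:Jensen-G} in turn: (i) joint convexity and coordinatewise monotonicity of $\Phi(\cdot;E)$ for each fixed positive $E$; (ii) the representation $h_t=\E_E\,\E_X[\Phi(a_1,\dots,a_{K-1};E)\mid E]$; and (iii) the Jensen lower bound $h_t\ge G(\mu_t)$.

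For (i), I would fix $E=(E_0,\dots,E_{K-1})$ with all coordinates strictly positive (they are, since $E_j=e^{\varepsilon_j}>0$), introduce the affine map $\ell(a):=E_0+\sum_{i=1}^{K-1}a_iE_i$, which stays in $[E_0,\infty)\subset(0,\infty)$ on $(0,1]^{K-1}$, and note that $\Phi(\cdot;E)=KE_0/\ell(\cdot)$ is the composition of the convex map $t\mapsto KE_0/t$ on $(0,\infty)$ with an affine map, hence jointly convex. Differentiating gives $\partial_{a_i}\Phi=-KE_0E_i/\ell(a)^2<0$, so $\Phi$ is strictly decreasing in each $a_i$. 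This part is elementary.

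For (ii), I would start from the facts recalled before the lemma: $h_t=H_{p_t}^K(x')$ for any $x'\in A$ (which is nonempty under Assumption~\ref{ass1:non0-mass}) and $\widetilde H_{p_t}^K(e^{r(x)})=H_{p_t}^K(x)$; since $r(x')=r_*$ on $A$, this gives $h_t=\widetilde H_{p_t}^K(e^{r_*})=\E[\,KE_0e^{r_*}/(E_0e^{r_*}+S)\,]$ with $S=\sum_{i=1}^{K-1}e^{r(X_i)}E_i$. Dividing numerator and denominator by $e^{r_*}$ and setting $a_i=a_i(X_i):=e^{r(X_i)}/e^{r_*}\in(0,1]$ ($p_t$-a.s., since $r\le r_*$) identifies the integrand with $\Phi(a_1,\dots,a_{K-1};E)$, so $h_t=\E[\Phi(a_1,\dots,a_{K-1};E)]$; since $(X_i)$ is independent of $E$ and $0<\Phi\le K$, Fubini (the tower property) rewrites this as $\E_E\,\E_X[\Phi(a_1,\dots,a_{K-1};E)\mid E]$. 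The only place Setting~\ref{setting:variable-noise} enters is in reducing the random-field noise to the i.i.d.\ scalars $E_j$, and that reduction is already built into the definitions of $S$ and $\widetilde H_{p_t}^K$ used here.

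For (iii), I would condition on $E$, observe that $(a_1(X_1),\dots,a_{K-1}(X_{K-1}))$ is bounded by $1$ with $\E_{X\sim p_t}[a_i(X_i)]=\E_{p_t}[e^{r(X)}]/e^{r_*}=R_t/e^{r_*}=\mu_t$ for each $i$, and apply multivariate Jensen to the convex $\Phi(\cdot;E)$ from (i) to get $\E_X[\Phi(a_1,\dots,a_{K-1};E)\mid E]\ge\Phi(\mu_t,\dots,\mu_t;E)$; taking $\E_E$ and combining with (ii) yields $h_t\ge\E_E[\Phi(\mu_t,\dots,\mu_t;E)]=\E[\,KE_0/(E_0+\mu_t\sum_{i=1}^{K-1}E_i)\,]=G(\mu_t)$. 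I do not expect a genuine obstacle: apart from the (already handled) passage from random-field to i.i.d.\ noise in (ii), the argument is convexity bookkeeping plus a single application of Jensen's inequality, which is why the statement is routine.
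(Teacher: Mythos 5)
Your proof is correct and is exactly the intended argument: the paper omits the proof of this lemma as ``straightforward,'' and the straightforward route is precisely your convexity-of-$KE_0/\ell(a)$ plus multivariate Jensen, with the identification $h_t=\widetilde H^K_{p_t}(e^{r_*})$ on the level set $A$ and the reduction of the stationary noise field to i.i.d.\ scalars $E_j$ already built into the paper's setup. No gaps.
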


\begin{lemma}[Strict decrease and normalization of $G$]\label{lem:G_A4}
The map $G:(0,1]\to\mathbb R$ is continuous, strictly decreasing, and satisfies $G(1)=1$.
By exchangeability,
\[
  G(1)
  \;=\; \E\!\left[\frac{K\,E_0}{E_0+\sum_{i=1}^{K-1}E_i}\right]
  \;=\; 1.
\]
\end{lemma}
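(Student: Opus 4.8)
The plan is to dispatch the three assertions — continuity, strict monotonicity, and the normalization $G(1)=1$ — separately, each reducing to an elementary property of the integrand
\[
g_\mu(E):=\frac{K\,E_0}{\,E_0+\mu\sum_{i=1}^{K-1}E_i\,},\qquad \mu\in(0,1],
\]
together with the uniform bound $0\le g_\mu(E)\le K$ (because $\mu\sum_{i\ge 1}E_i\ge 0$) and the fact that $S':=\sum_{i=1}^{K-1}E_i>0$ almost surely (here is where $K\ge 2$ and each $E_i=e^{\varepsilon_i}>0$ enter). These facts make $G$ finite on $(0,1]$ and set up the standard convergence arguments below.

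For \emph{continuity}, I would fix a sequence $\mu_n\to\mu$ in $(0,1]$, observe that $\mu\mapsto g_\mu(E)$ is continuous pointwise in $E$, and invoke dominated convergence with the constant dominating function $K$ to conclude $G(\mu_n)\to G(\mu)$. For \emph{strict monotonicity}, the idea is that for a.e.\ realization $E$ (namely on $\{E_0>0,\ S'>0\}$, a probability-one event),
\[
\frac{\partial}{\partial\mu}\,g_\mu(E)=-\frac{K\,E_0\,S'}{(E_0+\mu S')^{2}}<0,
\]
so $\mu\mapsto g_\mu(E)$ is strictly decreasing; hence for $0<\mu_1<\mu_2\le 1$ the difference $g_{\mu_1}(E)-g_{\mu_2}(E)$ is a.s.\ strictly positive and bounded by $K$, and since a nonnegative random variable that is a.s.\ strictly positive has strictly positive expectation, we get $G(\mu_1)-G(\mu_2)=\E\!\big[g_{\mu_1}(E)-g_{\mu_2}(E)\big]>0$.

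For the \emph{normalization}, I would use exchangeability of $(E_0,\dots,E_{K-1})$: at $\mu=1$, $g_1(E)=K\,E_0/\sum_{i=0}^{K-1}E_i$ with $\sum_{i=0}^{K-1}E_i>0$ a.s., the quantities $\E\!\left[E_j/\sum_{i=0}^{K-1}E_i\right]$ are all equal across $j$, and summing over $j=0,\dots,K-1$ gives
\[
K\,\E\!\left[\frac{E_0}{\sum_{i=0}^{K-1}E_i}\right]=\E\!\left[\frac{\sum_{j=0}^{K-1}E_j}{\sum_{i=0}^{K-1}E_i}\right]=1,
\]
so $G(1)=1$. The only point that requires any care is letting strictness survive the expectation in the monotonicity step — precisely the place where $S'=\sum_{i=1}^{K-1}E_i>0$ a.s.\ (equivalently $K\ge 2$) is needed; everything else is routine.
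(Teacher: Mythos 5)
Your proof is correct: the dominated-convergence argument (with the uniform bound $K$), the a.s.\ negativity of $\partial_\mu g_\mu$ on $\{E_0>0,\ S'>0\}$ yielding strict monotonicity of the expectation, and the exchangeability computation for $G(1)=1$ are all valid, and the last of these is exactly the identity the paper itself displays. The paper omits the proof as ``straightforward,'' so your write-up simply supplies the expected routine details, with the one genuinely delicate point (strictness surviving the expectation via $S'>0$ a.s., i.e.\ $K\ge 2$) correctly identified and handled.
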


\begin{proof}[Proof of Lemma~\ref{lem:Ri_Rt-to-Qstar}]
Under Setting \ref{setting:variable-noise}, it suffices to show that
\begin{equation}
    R_{t+1}\geq R_t,
\end{equation}
and
\begin{equation}
      \qquad \lim_{t\to\infty} R_t = e^{r_*}.
\end{equation}
By Lemma~\ref{lem:cov_A1}, $R_t$ is nondecreasing and bounded above by $e^{r_*}$.
By Lemma~\ref{lem:massA}, $h_t\to1$.
By Lemma~\ref{lem:Jensen-G}, $h_t\ge G(R_t/e^{r_*})$; by Lemma~\ref{lem:G_A4}, $G$ is strictly decreasing with $G(1)=1$.
Thus $G(R_t/e^{r_*})\le h_t\to1=G(1)$ forces $R_t/e^{r_*}\to1$, i.e.\ $R_t\to e^{r_*}$ as $t\to\infty$.

\end{proof}

\subsection{Auxiliary for Regime (ii)}
In this section, we establish convergence in Regime (ii): $\alpha=0$, $K=\infty$. For this subsection, we always assume Setting \ref{setting:process-noise}, Retraining Assumptions \ref{re_ass:1_Shannon-entropy}–\ref{re_ass:3_ess-bdd-Q}, and Assumption \ref{ass1:non0-mass}.

\begin{lemma}\label{lem:Var_Z_lower}
Let $Z\ge 0$ be a random variable such that $\mathbb{P}(Z=0)\ge \delta$ for some $\delta\in(0,1)$ and $\mathbb{E}[Z]\ge \eta$ for some $\eta>0$. Then
\[
\operatorname{Var}(Z)\ \ge\ \frac{\delta}{1-\delta}\,\eta^{2}.
\]
\end{lemma}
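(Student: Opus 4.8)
\textbf{Proof plan for Lemma~\ref{lem:Var_Z_lower}.}
The plan is to exploit the fact that all of the mass contributing to $\E[Z]$ must live on the event $\{Z>0\}$, which has small probability, so $Z$ must be large there, forcing a large second moment. Write $p:=\mathbb{P}(Z=0)\ge\delta$ and $A:=\{Z=0\}$, so that $\mathbb{P}(A^c)=1-p$. Since $Z\ge 0$ and $Z\mathbf 1_A=0$, we have $\E[Z]=\E[Z\mathbf 1_{A^c}]$. I would then apply the Cauchy--Schwarz inequality to the pair $Z\mathbf 1_{A^c}$ and $\mathbf 1_{A^c}$:
\[
\E[Z]=\E[Z\mathbf 1_{A^c}]\le \bigl(\E[Z^2\mathbf 1_{A^c}]\bigr)^{1/2}\bigl(\E[\mathbf 1_{A^c}]\bigr)^{1/2}\le \bigl(\E[Z^2]\bigr)^{1/2}(1-p)^{1/2}.
\]
Rearranging gives $\E[Z^2]\ge (\E[Z])^2/(1-p)$.

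Next I would subtract $(\E[Z])^2$ to obtain
\[
\operatorname{Var}(Z)=\E[Z^2]-(\E[Z])^2\ \ge\ (\E[Z])^2\Bigl(\tfrac{1}{1-p}-1\Bigr)=(\E[Z])^2\,\tfrac{p}{1-p}.
\]
Finally, note that $p\mapsto p/(1-p)$ is increasing on $[0,1)$, so $p\ge\delta$ gives $p/(1-p)\ge\delta/(1-\delta)$; combined with $\E[Z]\ge\eta>0$ this yields $\operatorname{Var}(Z)\ge \frac{\delta}{1-\delta}\,\eta^2$, as claimed.

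There is no serious obstacle here; the only point that warrants a line of care is that $\mathbb{P}(Z=0)$ and $\E[Z]$ are only lower-bounded (not pinned down), so one must check that the final bound is monotone in the right direction in both $p$ and $\E[Z]$ — which it is, since the coefficient $p/(1-p)$ is increasing in $p$ and $(\E[Z])^2$ is increasing in $\E[Z]\ge 0$. One should also implicitly assume $\E[Z^2]<\infty$ (otherwise the variance is $+\infty$ and the inequality holds trivially), so the statement is really only of interest in the finite-variance case.
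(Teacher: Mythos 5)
Your proposal is correct and follows essentially the same route as the paper: Cauchy--Schwarz applied to $Z$ against $\mathbf{1}_{\{Z>0\}}$ to get $\E[Z^2]\ge(\E[Z])^2/\mathbb{P}(Z>0)$, then subtracting $(\E[Z])^2$ and using the monotonicity of $p\mapsto p/(1-p)$. The remark about the infinite-variance case being trivial is a fine (if unnecessary) addition.
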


\begin{proof}[Proof of Lemma~\ref{lem:Var_Z_lower}]
Write $p:=\mathbb{P}(Z=0)$ and $q:=1-p=\mathbb{P}(Z>0)$. By Cauchy--Schwarz applied to $Z$ and $\mathbf{1}_{\{Z>0\}}$,
\[
\mathbb{E}[Z]
= \mathbb{E}\!\left[ Z\,\mathbf{1}_{\{Z>0\}} \right]
\le \bigl(\mathbb{E}[Z^{2}]\bigr)^{1/2}\,\bigl(\mathbb{P}(Z>0)\bigr)^{1/2}
= \bigl(\mathbb{E}[Z^{2}]\bigr)^{1/2} q^{1/2}.
\]
Hence $\mathbb{E}[Z^{2}] \ge \mathbb{E}[Z]^2/q$. Therefore
\[
\operatorname{Var}(Z)
= \mathbb{E}[Z^{2}] - \mathbb{E}[Z]^2
\ge \mathbb{E}[Z]^2\!\left(\frac{1}{q}-1\right)
= \frac{p}{q}\,\mathbb{E}[Z]^2.
\]
Using $p\ge \delta$ and $\mathbb{E}[Z]\ge \eta$, together with the fact that $x\mapsto x/(1-x)$ is increasing on $(0,1)$, we obtain \(\operatorname{Var}(Z)\ \ge\ \frac{\delta}{1-\delta}\,\eta^{2},\) as claimed.
\end{proof}

\begin{proof}[Proof of Lemma~\ref{lem:Rii_Rt-to-Qstar}]
Set $C_t=\mathbb{E}_{X\sim p_{t}}[Q(X)]$. For Regime (ii), we have
\[
\mathbb{E}_{X\sim p_{t+1}}Q(X)=\mathbb{E}_{X\sim p_{t}}Q(X)H_{p_t}^\infty(X)=\mathbb{E}_{X\sim p_{t}}Q^2(X)/C_t. 
\]
Thus,
\begin{equation*}
  { C_{t+1}-C_{t} }
    = \frac{\mathbb{E}_{X\sim p_t}[Q^2(X)] - \bigl(\mathbb{E}_{X\sim p_t}[Q(X)]\bigr)^2}{\mathbb{E}_{X\sim p_t}[Q(X)]}
    = \frac{\operatorname{Var}_{X\sim p_t}(Q(X))}{C_t}
    \;\geq\; 0,
\end{equation*}
which implies that $C_t\geq C_0>0$. Now, we obtain
\begin{equation*}
    C_{t+1}-C_t \;\geq\; \frac{\operatorname{Var}_{X\sim p_t}(Q(X))}{Q^*} \;\geq\; 0.
\end{equation*}
Notice that
\begin{equation}
   \sum_{i=0}^{t} \operatorname{Var}_{X\sim p_i}(Q(X))
   \;\leq\; Q_* \bigl(C_{t+1} - C_0\bigr)
   \;\leq\; Q_* \bigl(Q_* - C_0\bigr)
   \;<\; \infty.
\end{equation}
It follows that
\begin{equation}\label{equ:Var_p_Q_0}
    \lim_{t\to\infty} \operatorname{Var}_{X\sim p_t}(Q(X)) \;=\; 0.
\end{equation}
If there exists $\eta>0$ such that, for any $t\geq 1$,
\begin{equation}
    C_t \;\leq\; Q_* - \eta,
\end{equation}
then applying Lemma~\ref{lem:Var_Z_lower} to $Q_* - Q(X) \geq \eta > 0$ (for any $t\geq 1$) yields
\begin{equation}
    \operatorname{Var}_{X\sim p_t}(Q(X)) \;\geq\; \frac{\delta}{1-\delta}\,\eta^2 \;>\; 0,
\end{equation}
which contradicts \eqref{equ:Var_p_Q_0}. Hence, we obtain that 
\[
\limsup_{t\to\infty} C_t = Q_*.
\]
Since $C_t$ is a nondecreasing sequence, we have
\[
\lim_{t\to\infty} C_t = Q_*.
\]
\end{proof}

\subsection*{Convergence Analysis in pure synthetic data retraining}

We show convergence under Regimes~(i) and~(ii) in the purely synthetic data retraining setting ($\alpha=0$).

\begin{lemma}\label{lem:A5_p_t}
    If $\mathbb{E}_{X\sim p_t}Q(X) \to Q_*$ as $t\to \infty$, then 
    \[
    \lim_{t\to \infty} \mathbb{P}_t(A^c) =0.
    \]  
\end{lemma}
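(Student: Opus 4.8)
The plan is to decompose the complement of the maximizer set into a ``bulk'' part, on which $Q$ is bounded away from $Q_*$, and a thin ``boundary shell'', on which $Q$ is close to $Q_*$, and to control the two parts by two different mechanisms: the hypothesis $\E_{X\sim p_t}Q(X)\to Q_*$ handles the bulk, while a uniform domination of the iterates by the initial density handles the shell. Concretely, for $\delta>0$ set $A_\delta:=\{x\in\mathcal X:\ Q(x)\le Q_*-\delta\}\subseteq A^c$, so that $A^c\setminus A_\delta=\{x:\ Q_*-\delta<Q(x)<Q_*\}$ shrinks to $\emptyset$ as $\delta\downarrow 0$.

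\textbf{Step 1 (uniform domination and the shell).} First I would record that $p_t(x)\le \mathbb P_0(A)^{-1}p_0(x)$ for $\pi$-a.e.\ $x$ and every $t$. In Regime~(i) this is the last assertion of Lemma~\ref{lem:massA}. In Regime~(ii) the same bound follows from the product form $p_t=p_0\prod_{s<t}H_{p_s}^{\infty}$ (Theorem~\ref{thm:pure-Kinfty}) together with $H_{p_s}^{\infty}(x)=Q(x)/\E_{p_s}Q\le Q_*/\E_{p_s}Q$ and the identity $\mathbb P_t(A)=\mathbb P_0(A)\prod_{s<t}(Q_*/\E_{p_s}Q)\le 1$. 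Hence, for every $\delta>0$ and all $t$,
\[
\mathbb P_t(A^c\setminus A_\delta)\ \le\ \frac{1}{\mathbb P_0(A)}\,\mathbb P_0(A^c\setminus A_\delta),
\]
and since $A^c\setminus A_\delta\downarrow\emptyset$ as $\delta\downarrow0$ with $\mathbb P_0(A^c\setminus A_\delta)\le 1<\infty$, continuity of $\mathbb P_0$ from above makes the right-hand side tend to $0$ as $\delta\downarrow0$, uniformly in $t$.

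\textbf{Step 2 (bulk via the hypothesis) and combination.} Since $Q\le Q_*$ with $Q_*-Q\ge\delta$ on $A_\delta$, and the integrand $Q_*-Q$ vanishes on $A$,
\[
Q_*-\E_{X\sim p_t}Q(X)=\int_{A^c}\bigl(Q_*-Q(x)\bigr)p_t(x)\,\pi(dx)\ \ge\ \delta\,\mathbb P_t(A_\delta),
\]
so $\mathbb P_t(A_\delta)\le \delta^{-1}\bigl(Q_*-\E_{p_t}Q\bigr)\to0$ as $t\to\infty$ for each fixed $\delta>0$. Now given $\varepsilon>0$: by Step~1 pick $\delta>0$ with $\mathbb P_0(A^c\setminus A_\delta)/\mathbb P_0(A)<\varepsilon/2$; then by Step~2 pick $T$ with $\delta^{-1}(Q_*-\E_{p_t}Q)<\varepsilon/2$ for all $t\ge T$. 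For $t\ge T$,
\[
\mathbb P_t(A^c)=\mathbb P_t(A_\delta)+\mathbb P_t(A^c\setminus A_\delta)<\varepsilon,
\]
which gives $\lim_{t\to\infty}\mathbb P_t(A^c)=0$.

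The individual estimates are all one or two lines, so the real content is in getting the quantifier order right: $\delta$ must be fixed first so that the boundary shell is negligible uniformly over $t$ (this is where the domination $p_t\le p_0/\mathbb P_0(A)$ is essential), and only afterward may one invoke $\E_{p_t}Q\to Q_*$ to shrink $\mathbb P_t(A_\delta)$; sending $\delta\downarrow0$ before $t\to\infty$ would destroy that uniformity. I therefore expect the only genuine obstacle to be cleanly justifying the uniform domination in Regime~(ii) (and, in passing, that the shell $\{Q_*-\delta<Q<Q_*\}$ is $\mathbb P_0$-negligible as $\delta\downarrow0$); everything else is routine.
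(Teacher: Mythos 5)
Your proposal is correct and follows essentially the same route as the paper's proof: the same decomposition of $A^c$ into the bulk $\{Q\le Q_*-\delta\}$ (controlled by Markov's inequality applied to $Q_*-\E_{p_t}Q$) and the shell $\{Q_*-\delta<Q<Q_*\}$ (controlled by the domination $\mathbb P_t(B)\le \mathbb P_0(B)/\mathbb P_0(A)$ from Lemma~\ref{lem:massA}, which the paper likewise notes extends to Regime~(ii), followed by continuity from above as $\delta\downarrow0$). The paper phrases the combination as a $\limsup$ estimate rather than your explicit $\varepsilon/2$ bookkeeping, but the argument is the same.
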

\begin{proof}[Proof of Lemma~\ref{lem:A5_p_t}]
Let $B_\eta= \{x\in \mathcal{X} : Q(x) \leq Q_*-\eta\}$. By Markov inequality, for any $\eta>0$,
\begin{equation*}
    \mathbb{P}_t(B_\eta)\leq \frac{Q_*-\E_{X\sim p_t} Q(X) }{ \eta} \to 0,
\end{equation*}
as $t\to \infty$.

Observe that Lemma~\ref{lem:massA} remains valid under Regime~(ii). Hence, for any $B\subset \mathcal{X}$, by Lemma~\ref{lem:massA}, we have
\begin{equation}\label{ineq:P_t_BB}
    \mathbb{P}_t(B) \leq \prod_{s=0}^{t-1}h_s\cdot \mathbb{P}_0(B) \leq \frac{\mathbb{P}_0(B)}{\mathbb{P}_0(A)}.
\end{equation}
Applying \eqref{ineq:P_t_BB}, we have that for any $\eta>0$
\begin{align*}
    \limsup_{t\to \infty} \mathbb{P}_t(A^c)&\leq \limsup_{t\to \infty} \mathbb{P}_t(B_\eta) + \limsup_{t\to \infty} \mathbb{P}_t(A^c\backslash B_\eta)\\
    &\leq \frac{1}{\mathbb{P}_0(A) }  \mathbb{P}_0(A^c\backslash B_\eta).
\end{align*}
Letting $\eta \to 0$, the sets $A^c\backslash B_\eta$ converge to the empty set. By the continuity of probability measures, we have
\[
\lim_{\eta\to 0^+}\mathbb{P}_0(A^c\backslash B_\eta) = \mathbb{P}_0(\emptyset) =0.
\]
In conclusion, we obtain
\[
 \limsup_{t\to \infty} \mathbb{P}_t(A^c)=0.
\]
\end{proof}

\begin{proof}[Proof of Theorem~\ref{thm:KL-uniform-RegimeI}]
By the update rule \(p_{t+1}=p_t H_{p_t}^K\) (Theorem~\ref{thm:pure-Kfinite}) for Regime (i), the choice kernel \(H_{p_t}^K(x)\) depends on \(x\) only through \(e^{r(x)}\) (hence through \(Q(x)\)); in particular, it is constant on the level set \(A\). By the update rule \(p_{t+1}=p_t H_{p_t}^\infty\) (Theorem~\ref{thm:pure-Kinfty}) for Regime (ii), the choice kernel \(H_{p_t}^\infty(x)\) depends on \(x\) only through \(Q(x)\); in particular, it is constant on the level set \(A\).

Thus, for both Regime (i) and (ii), writing \(h_t:=H_{p_t}^K(x)\) for any \(x\in A\), Lemma~\ref{lem:massA} yields
\[
p_t(x)=\Bigl(\prod_{s=0}^{t-1}h_s\Bigr)\,p_0(x)\quad\text{for all }x\in A
\quad\text{and}\quad 
\mathbb P_t(A)=\mathbb P_0(A)\prod_{s=0}^{t-1}h_s.
\]
Therefore, on \(A\) we have the exact identity
\begin{equation}\label{eq:pt-on-A}
    p_t(x)=\mathbb P_t(A)\,\frac{p_0(x)}{\mathbb P_0(A)}=\mathbb P_t(A)\,p_*(x).
\end{equation}
It follows immediately that
\(
\sup_{x\in A}\bigl|\frac{p_t(x)}{p_*(x)}-1\bigr|
=|\mathbb P_t(A)-1|.
\)

By Lemma~\ref{lem:Ri_Rt-to-Qstar} we have \(\E_{p_t}[Q(X)]\to Q_*\) in Regime~(i), and thus Lemma~\ref{lem:A5_p_t} gives \(\mathbb P_t(A^c)\to0\), i.e.\ \(\mathbb P_t(A)\to1\) as $t\to \infty$.

Using \eqref{eq:pt-on-A} and \(p_*(A)=1\),
\[
\KL(p_*\,\|\,p_t)
=\int_A p_*(x)\log\!\frac{p_*(x)}{p_t(x)}\,\pi(dx)
=\int_A p_*(x)\log\!\frac{1}{\mathbb P_t(A)}\,\pi(dx)
=-\log \mathbb P_t(A).
\]
Since \(\mathbb P_t(A)\to1\), we conclude \(\KL(p_*\,\|\,p_t)\to0\) as $t\to \infty$.
\end{proof}

\subsection{Auxiliary for Regime (iii)}

\begin{proof}[Proof of Lemma~\ref{lem:contraction}]
Let \(h_B(x_{1:K},\varepsilon_{1:K})=\sum_{j=1}^K \tilde{p}_j(x_{1:K},\varepsilon_{1:K})\,\mathbf{1}_B(x_j)\). Let $F$ denote the joint law of $\varepsilon_{1:K}$. 
In what follows, we use the notation \(f^{\otimes N}\) to denote the $N$-fold product measure associated with a law having density $f$. By a slight abuse of notation, we also write $f^{\otimes N}$ for the $N$-fold product measure generated by $f$ when $f$ denotes a probability measure or a law of a random variable.

Applying Corollary~\ref{cor:selection-identity}, for any measurable set $B$, we have
\[
|\mathbb{S}_w(B)-\mathbb{S}_u(B)|
=\Big|\mathbb E_{w^{\otimes K}\otimes F}h_B-\mathbb E_{u^{\otimes K}\otimes F}h_B\Big|=2\Big|\mathbb E_{w^{\otimes K}\otimes F}  (h_B-1)/2-\mathbb E_{u^{\otimes K}\otimes F}(h_B-1)/2 \Big|.
\]
Since \(  |(h_B-1)/2|\le 1\), by Theorem 7.7 in \cite{polyanskiy2025information}, we have
\[
\Big|\mathbb E_{w^{\otimes K}\otimes F}  (h_B-1)/2-\mathbb E_{u^{\otimes K}\otimes F}(h_B-1)/2 \Big|
\le \frac{1}{2} d_{\rm TV}\!\big(w^{\otimes K}\otimes F,\,u^{\otimes K}\otimes F\big).
\]
Taking the supremum over \(B\) gives
\[
d_{\rm TV}\!\big( \mathbb{S}_w, \mathbb{S}_u \big) = \sup_B |\mathbb{S}_w(B)-\mathbb{S}_u(B)| \leq  d_{\rm TV}\!\big(w^{\otimes K}\otimes F,\,u^{\otimes K}\otimes F\big).
\]
By Proposition 7.2 in \cite{polyanskiy2025information}, we have
\[
d_{\rm TV}\!\big(w^{\otimes K}\otimes F,\,u^{\otimes K}\otimes F\big)
=d_{\rm TV}\!\big(w^{\otimes K},u^{\otimes K}\big).
\]
By the triangle inequality and a telescoping argument,
\[
d_{\rm TV}\big(w^{\otimes K},u^{\otimes K}\big)\;\le\; K\cdot d_{\rm TV}(w,u).
\]
The bound \eqref{equ:T_contraction} for \(T\) is immediate.
\end{proof}

\begin{proof}[Proof of Theorem~\ref{thm:tv-convergence-iii}]
By Lemma~\ref{lem:contraction}, $d_{\rm TV}(\mathbb{S}_w,\mathbb{S}_u)\le K\,d_{\rm TV}(w,u)$, hence
$d_{\rm TV}(Tw,Tu)\le (1-\alpha)K\cdot d_{\rm TV}(w,u)=\rho\,d_{\rm TV}(w,u)$.
Since $(\mathcal{P}_\pi,d_{\rm TV})$ is complete (it embeds isometrically into $L^1(\pi)$), Banach’s fixed‑point theorem yields
existence, uniqueness, and the stated geometric rate.
\end{proof}

\begin{proof}[Proof of Lemma~\ref{lem:r_iii_reward_increase}]
Let $R_t=\mathbb{E}_{X\sim p_t}Q(X)$. We have
\begin{equation*}
    R_{t+1}=\alpha R_0+ (1-\alpha) \mathbb{E}_{X\sim p_t}Q(X) H^K_{p_t}(X) .
\end{equation*}
Setting $D_t=R_t-R_0$, we have
\begin{equation}\label{equ:48_D_t}
    D_{t+1}=(1-\alpha)\left( \mathbb{E}_{X\sim p_t}Q(X) H^K_{p_t}(X)-\mathbb{E}_{X\sim p_t}Q(X) +D_t  \right).
\end{equation}
Similar to the proof of Lemma~\ref{lem:cov_A1}, we have
\begin{equation*}
    \mathbb{E}_{X\sim p_t}e^{r(X)}H^K_{p_t}(X) - \mathbb{E}_{X\sim p_t}e^{r(X)}  =\operatorname{Cov}_{p_t}\Big(e^{r(X)} , H_{p_t}^K(X)\Big)\geq 0,
\end{equation*}
which implies that
\begin{equation*}
    \mathbb{E}_{X\sim p_t} Q(X)H^K_{p_t}(X) - \mathbb{E}_{X\sim p_t}Q(X) =\operatorname{Cov}_{p_t}\Big(Q(X) , H_{p_t}^K(X)\Big)\geq 0.
\end{equation*}
Hence, \eqref{equ:48_D_t} implies that
$D_{t+1}\geq (1-\alpha)D_{t}\geq (1-\alpha)^tD_1$. Thus, for any positive integer $t$,
\begin{equation*}
    R_t\geq R_0+ (1-\alpha)^t  D_1.
\end{equation*}
As in the proof of Lemma~\ref{lem:cov_A1}, we have
$$ 
\operatorname{Cov}_{p_0}(Q(X),H_{p_0}^K(X))=\operatorname{Cov}_{ p_0}\!\bigl(e^{r(X)},\widetilde{H}_{p_0}^{K}(e^{r(X)}) \bigr)
$$
where $\widetilde{H}_{p_0}^{K}(y)$ is non‑decreasing in $y$. Since $Q_*$ is the essential supremum of $Q$ and $\mathbb{E}_{X\sim p_0}Q(X)\in (0,Q_*)$, the random variable $\widetilde{H}_{p_0}^{K}(e^{r(X)})$ is not almost surely constant, hence $e^{r(X)}$ is also not almost surely constant. Using 
\begin{equation*}
\operatorname{Cov}_{p_0}\!\bigl(e^{r(X)},\widetilde{H}_{p_0}^{K}(e^{r(X)}) \bigr)
= \frac{1}{2}\,
\mathbb{E}_{X,X'\,\overset{\mathrm{i.i.d.}}{\sim}\,p_0}
\Bigl[
\bigl(e^{r(X)}-e^{r(X')}\bigr)
\bigl(\widetilde{H}_{p_0}^{K}(e^{r(X)})-\widetilde{H}_{p_0}^{K}(e^{r(X')})\bigr)
\Bigr],
\end{equation*}
and the integrand is almost surely nonnegative, with strict positivity on a set of positive probability because both arguments are non‑decreasing and not almost surely constant. Therefore $\operatorname{Cov}_{p_0}(Q(X),H_{p_0}^K(X))>0$. Combined with $D_1=(1-\alpha)\operatorname{Cov}_{p_0}(Q(X),H_{p_0}^K(X)) >0$, we obtain \eqref{ineq:49_need}.

Similarly, we have
\begin{align*}
    \lim_{t\to \infty} \mathbb{E}_{X\sim p_t} Q(X)&=  \mathbb{E}_{X\sim p_0} Q(X) +(1-\alpha)\lim_{t\to \infty} \mathbb{E}_{p_t}[Q(X)H_{p_t}^K(X)]\\
    &=  \mathbb{E}_{X\sim p_0} Q(X) +(1-\alpha)  \operatorname{Cov}_{p_*}(Q(X),H_{p_*}^K(X)) \\
    &>\mathbb{E}_{X\sim p_0} Q(X).
\end{align*}

\end{proof}

\subsection{Auxiliary for Regime (iv)}

\begin{proof}[Proof of Lemma~\ref{lem:dynamics_w}]
Rewriting update \eqref{equ:update_Regime_iii_iv} in Regime (iv), we have
\begin{equation}
    p_{t+1}(x)=\alpha p_{\rm ref}(x)+(1-\alpha)\frac{Q(x)p_t(x)}{\int_\mathcal{X} Q(x)p_t(x)\pi(dx)},
\end{equation}
which can be rewritten as 
\[
w_{t+1}(x) p_{\rm ref}(x) = \alpha p_{\rm ref}(x) +(1-\alpha) \frac{Q(x)w_{t}(x) p_{\rm ref}(x) }{ \innerproduct{w_t}{Q}_{\rm ref}}
\]
Hence, we obtain \eqref{equ:update_w_Regime_iv}. Notice that \eqref{equ:update_w_Regime_iv} implies that
\[
w_{t+1}\propto L[w_t].
\]
Since $L$ is linear, we have
\[
w_t(x)\propto L[w_t]\propto L^2[w_{t-2}](x)  \cdots \propto L^t[w_{0}](x).
\]
There exists constant $C$ such that $w_t(x)=C\cdot L^t[w_{0}](x)$. Since $\innerproduct{w_t}{1}_{\rm ref}=\int_{\mathcal X} w_t(x) \mathbb{P}_{\rm ref}(dx)=1$, we obtain \eqref{equ:update_w_Regime_iv}.
\end{proof}

\begin{proof}[Proof of Lemma~\ref{lem:fixed_point_c}]
For \(c\in (Q_{0},Q_{*}]\) with \(Q_{0}:=(1-\alpha)Q_{*}\), define
\[
  h(x;c)
  \;:=\;
  \frac{\alpha}{\,1-(1-\alpha)Q(x)/c\,}.
\]

\begin{itemize}
\item \emph{Upper bound.}  At \(c=Q_{*}\), \(h(x;Q_*)\leq 1\) pointwise, and hence
      \(\innerproduct{h(\cdot;Q_{*})}{1}_{\mathrm{ref}}\le 1\).

\item \emph{Lower-limit behavior.}
      Assumption~\ref{ass:A3} implies
      \[
        \lim_{c\downarrow Q_{0}}
        \innerproduct{h(\cdot;c)}{1}_{\mathrm{ref}}
        \;>1.
      \]

\item \emph{Monotonicity.}
      The map
      \(c\mapsto\innerproduct{h(\cdot;c)}{1}_{\mathrm{ref}}\)
      is strictly decreasing on \((Q_{0},Q_{*}]\).
\end{itemize}
By the intermediate value theorem, there exists a unique \(c_{*}\in(Q_{0},Q_{*}]\) such that  
\(\innerproduct{h(\cdot;c_{*})}{1}_{\mathrm{ref}}=1\).

\noindent Define \(w_{*}(x):=h(x;c_{*})\). Observe that
\begin{align*}
    \int_{\mathcal{X}} \frac{\alpha }{ 1- (1-\alpha)Q(x)/c_*}\mathbb{P}_{\rm ref}(x)=1 \quad \text{and} \quad \int_{\mathcal{X}} 1\,\mathbb{P}_{\rm ref}(x)=1,
\end{align*}
which implies that
\begin{align*}
    \innerproduct{w_*}{Q}_{\rm ref} &= \int_{\mathcal{X}} \frac{\alpha Q(x) }{ 1- (1-\alpha)Q(x)/c_*}\mathbb{P}_{\rm ref}(x)\\
    &=\frac{1}{1-\alpha}c_*\int_{\mathcal{X}} \frac{\alpha }{ 1- (1-\alpha)Q(x)/c_*}\mathbb{P}_{\rm ref}(x)-\frac{\alpha}{1-\alpha}c_* \int_{\mathcal{X}} 1\,\mathbb{P}_{\rm ref}(x)=c_*.
\end{align*}
A direct calculation shows that
\[
  L_{ N}[w_{*}](x)=w_{*}(x),
  \qquad
  \innerproduct{w_{*}}{1}_{\mathrm{ref}}=1,
\]
and therefore \(w_{*}\) is the fixed point of the nonlinear operator $L_N$ over $\mathcal{P}_{\mathbb{P}_{\rm ref}}$.
\end{proof}

\begin{lemma}\label{lem:w=w_*}
    If a probability density $w\in \mathcal{P}_{\mathbb{P}_{\rm ref}}$ satisfies $L_N^k[w]=w_*$ a.s.\ for some $k\geq 1$, then $w=w_*$ a.s.
\end{lemma}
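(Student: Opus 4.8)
The plan is to prove that the nonlinear map $L_N$ is injective on $\mathcal P_{\mathbb P_{\rm ref}}$; the lemma then follows from a one-line backward induction using that $w_*$ is a fixed point of $L_N$ (Lemma~\ref{lem:fixed_point_c}).

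First I would record two elementary facts that make the manipulations below legitimate. (a) $Q>0$ holds $\mathbb P_{\rm ref}$-a.s.: since $e^{r(x)}>0$ pointwise and the conditional expectation $\mathbb E[e^{\varepsilon(X)}\mid X=x]$ of an a.s.-positive random variable is a.s. positive, $Q(x)=e^{r(x)}\,\mathbb E[e^{\varepsilon(X)}\mid X=x]>0$ for $\pi$-a.e.\ $x$, hence for $\mathbb P_{\rm ref}$-a.e.\ $x$ because $\mathbb P_{\rm ref}\ll\pi$. (b) For any $u\in\mathcal P_{\mathbb P_{\rm ref}}$ the pairing $\langle u,Q\rangle_{\rm ref}=\int uQ\,d\mathbb P_{\rm ref}$ is finite, since $Q\le Q_*$ by Retraining Assumption~\ref{re_ass:3_ess-bdd-Q}, and strictly positive, since $u\ge 0$ with $\int u\,d\mathbb P_{\rm ref}=1$ while $Q>0$ a.s.; thus $L_N[u]$ is a well-defined element of $\mathcal P_{\mathbb P_{\rm ref}}$ (Lemma~\ref{lem:dynamics_w}).

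For the injectivity step, suppose $u,v\in\mathcal P_{\mathbb P_{\rm ref}}$ satisfy $L_N[u]=L_N[v]$ $\mathbb P_{\rm ref}$-a.s. Subtracting the common constant $\alpha$ from the representation in \eqref{equ:update_w_Regime_iv} and dividing by $1-\alpha>0$ gives $Q(x)\,u(x)/\langle u,Q\rangle_{\rm ref}=Q(x)\,v(x)/\langle v,Q\rangle_{\rm ref}$ a.s.; dividing by $Q(x)>0$ yields $u(x)=\lambda\,v(x)$ a.s.\ with $\lambda:=\langle u,Q\rangle_{\rm ref}/\langle v,Q\rangle_{\rm ref}\in(0,\infty)$. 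Integrating against $\mathbb P_{\rm ref}$ and using that both $u$ and $v$ are densities forces $1=\lambda$, hence $u=v$ $\mathbb P_{\rm ref}$-a.s. So $L_N$ is injective on $\mathcal P_{\mathbb P_{\rm ref}}$.

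Finally I would conclude by backward induction. Since $L_N$ maps $\mathcal P_{\mathbb P_{\rm ref}}$ into itself, the iterates $L_N^{j}[w]$, $0\le j\le k$, all lie in $\mathcal P_{\mathbb P_{\rm ref}}$, and $L_N[w_*]=w_*$. If $L_N^{j}[w]=w_*$ for some $1\le j\le k$, then $L_N\bigl(L_N^{j-1}[w]\bigr)=w_*=L_N[w_*]$, and injectivity gives $L_N^{j-1}[w]=w_*$; starting at $j=k$ and descending to $j=1$ yields $w=w_*$ $\mathbb P_{\rm ref}$-a.s. The argument is short, and I do not expect a genuine obstacle; the only points needing care are the a.s.-positivity of $Q$ under $\mathbb P_{\rm ref}$ (used to divide by $Q$) and the finiteness/positivity of the pairings $\langle\cdot,Q\rangle_{\rm ref}$ ensuring every $L_N[\cdot]$ is a legitimate density.
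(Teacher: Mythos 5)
Your proposal is correct and follows essentially the same route as the paper: the paper also peels off one application of $L_N$ at a time, using $L_N[v]=L_N[w_*]$ to deduce $v\propto w_*$ and then normalization to force the constant to be $1$, which is exactly your injectivity argument. Your explicit verification that $Q>0$ $\mathbb{P}_{\rm ref}$-a.s.\ (needed to divide by $Q$) is a point the paper leaves implicit.
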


\begin{proof}[Proof of Lemma~\ref{lem:w=w_*}]
Let $v=L_N^{k-1}[w]$, which is also a probability density. Then $L_N[v]=w_*$. Since $w_*$ is a fixed point of $L_N$, we have
\[
    \alpha+(1-\alpha)\frac{v(x)Q(x)}{\innerproduct{v}{Q}_{\rm ref}}
    \;=\;
    \alpha+(1-\alpha)\frac{w_*(x)Q(x)}{\innerproduct{w_*}{Q}_{\rm ref}}.
\]
This equality implies that
\[
    v(x) \;\propto\; w_*(x)\quad \text{a.s.}
\]
Because both $v$ and $w_*$ are probability densities, the proportionality constant must be one; hence $v=w_*$ a.s.  
Therefore $L_N^{k-1}[w]=w_*$ a.s. Repeating this argument iteratively, we conclude $w=w_*$ a.s.
\end{proof}

\begin{proof}[Proof of Theorem~\ref{thm:Regime_iv_convergence}]
By Assumption \ref{ass:A2_new}, there exists $M>0$ such that \(w_{0}(x)\le M<\infty\) $\mathbb{P}_{\rm ref}$‑a.s. Thus,
\[
  \alpha \leq w_{1}(x)=L_{ N}[w_{0}](x)=\alpha+(1-\alpha) \frac{w_{0}(x)\,Q(x)}{\innerproduct{w_{0}}{Q}_{\mathrm{ref}}}\leq\alpha+\frac{(1-\alpha)M Q_{*}}{ Q_{\min} }=:M_{1}<\infty .
\]
Using the triangle inequality of the Hilbert projective metric,
\[
  d_\mathcal{H}(w_{1},w_{*})
  \;\le\;
  d_\mathcal{H}(w_{1},\mathbf 1)
  +d_\mathcal{H}(\mathbf 1,w_{*})
  \;\le\;
  \log\bigl( \frac{M_1}{\alpha}\bigr)
  +\log\Bigl(\frac1{1-(1-\alpha)Q_{*}/c_{*}}\Bigr)
  \;<\;\infty.
\]
By Birkhoff’s contraction theorem \citep{eveson1995applications,eveson1995elementary}
\begin{equation}\label{equ:45_Birkhoff}
      d_\mathcal{H}(w_{k+1},w_{*})
  \;=\;
  d_\mathcal{H} \bigl(L_{ N}[w_{k}],L_{ N}[w_{*}]\bigr)
  \;\le\;
  d_\mathcal{H}(w_{k},w_{*}),
  \qquad k\ge1,
\end{equation}
which inductively yields the asserted monotone chain
\(
  d_\mathcal{H}(w_{k+1},w_{*})
  \le\ldots\le
  d_\mathcal{H}(w_{1},w_{*})<\infty.
\) Set 
\begin{align} \label{R}
    R= d_{\mathcal{H}}(w_{1},w_*)<\infty.
\end{align}

Recall that $\beta(w,w_*)=\operatorname*{ess\,sup}_{x}\frac{w(x)}{w_*(x)}$ and $w_k=L_N^k [w_0]$, where
\[
  L_{ N}[w](x)
  \;=\;
  \alpha
  \;+\;
  (1-\alpha)\,
  \frac{w(x)\,Q(x)}
       {\innerproduct{w}{Q}_{\mathrm{ref}}}
  \;=\;  \frac{L[w](x)} {\innerproduct{w}{Q}_{\mathrm{ref}}}.
\]
Set $M_k= \beta(w_k,w_*)$ and $m_k= \beta(w_*,w_k)$, where $w_k$ and $w_*$ are both PDFs. By Lemma~\ref{lem:w=w_*}, $d_{\mathcal{H}}(w_0,w_*)>0$ implies that $w_k$ and $w_*$ are linearly independent. Thus, $M_k>1$ and $m_k>1$. Furthermore, we know that 
\begin{equation}\label{equ:some_bound_31}
    \innerproduct{M_k w_*-w_k}{Q}_{\rm ref}\geq Q_{\rm min} (M_k-1) \text{ and }\innerproduct{m_k w_k-w_*}{Q}_{\rm ref}\geq Q_{\rm min} (m_k-1).
\end{equation}
To see why \eqref{equ:some_bound_31} is true, write
\begin{align*}
    \innerproduct{M_k w_*-w_k}{Q}_{\rm ref} &= \int \left( M_kw_*(x)-w_k(x) \right)Q(x) \cdot p_{\rm ref}(x) d\pi(x) \\
    &\geq Q_{\rm min} \cdot \left[ M_k \int w_*(x) \cdot p_{\rm ref}(x) d\pi(x) - \int w_k(x) \cdot p_{\rm ref}(x) d\pi(x)  \right] \\
    &= Q_{\rm min} \cdot (M_k-1).
\end{align*}
The second inequality in \eqref{equ:some_bound_31} holds by using similar arguments.

Recall $R$ in \eqref{R}. Since $d_{\mathcal{H}}(w_k,w_*)\leq d_{\mathcal{H}}(w_1,w_*)\leq R$, for all $k\geq 1$, we have 
\[
\operatorname*{ess\,sup}_{x} \frac{w_k(x)}{w_*(x)} \leq e^{R} \cdot \operatorname*{ess\,inf}_{x} \frac{w_*(x)}{w_k(x)} \leq e^R.
\]
This further implies
\[
\operatorname*{ess\,sup}_{x} w_k(x) \leq e^R w_{*,\rm{max}}
\]
where $w_{*,\rm{max}}=\sup_x w_*(x)<\infty$.

Applying \eqref{equ:some_bound_31}, a direct computation shows that
\begin{align*}
    &\beta(Lw_{k} ,Lw_*) = \operatorname*{ess\,sup}_{x}\frac{Lw_k(x)}{Lw_*(x)}\\
    =&\operatorname*{ess\,sup}_{x} \frac{\alpha \innerproduct{w_k}{Q}_{\rm ref}+ (1-\alpha) w_k(x)Q(x) }{\alpha \innerproduct{w_*}{Q}_{\rm ref}+ (1-\alpha) w_*(x)Q(x)}\\
    \leq&M_k\operatorname*{ess\,sup}_{x} \frac{\alpha \innerproduct{w_*}{Q}_{\rm ref}+ (1-\alpha) w_*(x)Q(x) - \alpha Q_{\rm{min}} (1-1/M_k) }{\alpha \innerproduct{w_*}{Q}_{\rm ref}+ (1-\alpha) w_*(x)Q(x)}\\
    \leq&M_k \frac{(\alpha + (1-\alpha) e^R w_{*,\rm{max}}  )Q_* -\alpha Q_{\rm min} (1-1/M_k) }{(\alpha + (1-\alpha) e^R w_{*,\rm{max}} )Q_*}\\
    =&M_k g(M_k),
\end{align*}
where the second inequality in the display above follows from the fact that the function $x \mapsto (x-c)/x$ is increasing on $\mathbb{R}^+$ for all $c>0$, and 
\[
g(y) :=\frac{(\alpha + (1-\alpha) e^R w_{*,\rm{max}}  )Q_* -\alpha Q_{\rm min} (1-1/y) }{(\alpha + (1-\alpha) e^R w_{*,\rm{max}} )Q_*},\ y\geq 1.
\]
Notice that $g(y)$ is strictly decreasing in $y$, and $g(1)=1$. Thus, $g(y) \in (0,1]$ for any $y\geq 1$. Similarly, we can also show that \(\beta(Lw_*,Lw_{k} ) \leq m_k g(m_k).\) Notice that
\[
d_{\mathcal{H}}( w_{k+1} ,w_*)=d_{\mathcal{H}}(Lw_{k} ,Lw_*) =\log (\beta(Lw_{k} ,Lw_*) \beta(Lw_*, Lw_{k} ) ).
\]
Now, we obtain that
\begin{equation}
    d_{\mathcal{H}}(w_{k+1},w_*) \leq  \log(M_km_k g(M_k)g(m_k)) = d_{\mathcal{H}}(w_{k},w_*) + \log(g(M_k)g(m_k))<d_{\mathcal{H}}(w_{k},w_*).
\end{equation}
Since
\begin{equation}
    0\leq \sum_{i=1}^{k-1} -\log (g(M_i)g(m_i)) \leq  d_{\mathcal{H}}(w_{1},w_*)- d_{\mathcal{H}}(w_{k+1},w_*) \leq d_{\mathcal{H}}(w_{1},w_*)<
    \infty,
\end{equation}
we have
\[
\lim_{k\to \infty }\log (g(M_k)g(m_k)) =0.
\]
Hence, we obtain that both $g(M_k)\to 1$ and $g(m_k)\to 1$ as $k\to \infty$, i.e., $M_k\to 1$ and $m_k\to 1$. It follows that
$d_{\mathcal{H}}(w_k,w_*)=\log(M_km_k) \to 0 $ as $k\to \infty$.
\end{proof}

\begin{proof}[Proof of Lemma~\ref{lem:r_iv_reward_lbound}]
Recall that \(p_t=w_t p_{\rm ref}\), and let \(R_t:=\mathbb{E}_{X\sim p_t}[Q(X)].\) Since \(p_0=p_{\rm ref}\), update \eqref{equ:update_Regime_iii_iv} implies
\[
R_{t+1}
=
\alpha R_0
+
(1-\alpha)
\frac{\mathbb{E}_{X\sim p_t}[Q^2(X)]}
     {\mathbb{E}_{X\sim p_t}[Q(X)]}.
\]
Define \(D_t:=R_t-R_0.\) We obtain
\begin{equation}
D_{t+1}
=
(1-\alpha)
\left(
D_t+
\frac{\operatorname{Var}_{p_t}(Q)}{R_t}
\right).
\label{equ:D_t_update}
\end{equation}

We next derive a uniform lower bound for
\(\operatorname{Var}_{p_t}(Q)\). For every \(t\geq 1\), the update gives
the mixture representation \(p_t
=
\alpha p_0+(1-\alpha)\widetilde p_t.\) Let \(Z_t\) be a Bernoulli mixture indicator satisfying \(\mathbb{P}(Z_t=0)=\alpha,
\mathbb{P}(Z_t=1)=1-\alpha.\)
Conditionally on \(Z_t\), let \(X_t\mid\{Z_t=0\}\sim p_0,
X_t\mid\{Z_t=1\}\sim\widetilde p_t.\)
Then \(X_t\sim p_t\). By the law of total variance, $\operatorname{Var}_{p_t}(Q)
\geq
\mathbb{E}\!\left[
\operatorname{Var}\!\left(Q(X_t)\mid Z_t\right)
\right] =
\alpha\operatorname{Var}_{p_0}(Q(X))$. Consequently,
\( 
\operatorname{Var}_{p_t}(Q)
\geq
\alpha\operatorname{Var}_{p_0}(Q).\) Set \(V_0:=\operatorname{Var}_{p_0}(Q).\)
Since \(R_0\in(0,Q^*)\) and \(Q^*\) is the essential supremum of \(Q\),
the random variable \(Q\) is not \(p_0\)-almost surely constant. Therefore, \(V_0>0.\) Moreover, \(R_t\leq Q^*\). Thus, we have \(D_{t+1}
\geq
(1-\alpha)D_t
+
\frac{\alpha(1-\alpha)V_0}{Q^*}.\)
Since \(D_0=0\), iteration gives $D_t\geq
\frac{\alpha(1-\alpha)V_0}{Q^*}
\sum_{j=0}^{t-1}(1-\alpha)^j =
\frac{(1-\alpha)V_0}{Q^*}
\left(1-(1-\alpha)^t\right).$ Therefore, for every \(t\geq 1\),
\[
R_t-R_0
=
D_t
\geq
\frac{(1-\alpha)V_0}{Q^*}
\left(1-(1-\alpha)^t\right)
>0,
\]
which proves the result.
\end{proof}

\section{Stability Analysis}

\subsection{Regimes (i) and (ii): Unstable under bounded reward perturbations}

\begin{proof}[Proof of Theorem~\ref{thm:TV_instability}]
Set
\[
E=\bigl\{x\in \mathcal X:Q_*-\delta\le Q(x)<Q_*\bigr\},
\quad\text{so that }\,\mathbb{P}_0(E)>0.
\]
Define the perturbation
\[
\Delta r(x)\;:=\;
\begin{cases}
\log\bigl(Q_*/Q(x)\bigr), & x\in E,\\[2pt]
-\eta, & x\in A,\\[2pt]
0, & \text{otherwise.}
\end{cases}
\]
For $x\in E$, the side condition $(Q_*-\delta)e^{\eta}\ge Q_*$ implies $Q(x)\ge Q_*e^{-\eta}$, hence
\[
0\;\le\;\Delta r(x)=\log\bigl(Q_*/Q(x)\bigr)\;\le\;\log\bigl(Q_*/(Q_*e^{-\eta})\bigr)=\eta.
\]
On $A$ we set $\Delta r=-\eta$, and elsewhere $0$, so $\|\Delta r\|_{\infty}=\eta$.

For $x\in E$,
\[
Q_{\Delta r}(x)=Q(x)e^{\Delta r(x)}=Q(x)\,\frac{Q_*}{Q(x)}=Q_*,
\]
while for $x\in A$, $Q_{\Delta r}(x)=Q_*e^{-\eta}<Q_*$. For $x\notin E\cup A$ we have $Q_{\Delta r}(x)=Q(x)\le Q_*-\delta<Q_*$. Hence
\[
\operatorname*{ess\,sup}_y Q_{\Delta r}(y)=Q_*,\qquad A_{\Delta r} = E,\qquad \mathbb{P}_0(A_{\Delta r}) = \mathbb{P}_0(E)>0,
\]
and $A_{\Delta r}\cap A=\varnothing$.

In the fully synthetic $K$-choice loop (both for finite $K$ and for $K=\infty$), the limit distribution concentrates on the top $Q$-level set reached at initialization: $p_{\infty,0}$ is $p_0$ renormalized to $A$, and $p_{\infty,\Delta r}$ is $p_0$ renormalized to $A_{\Delta r}$. Since $A_{\Delta r}\cap A=\varnothing$ and $\mathbb{P}_0(A_{\Delta r})>0$, we obtain
\[
d_{\mathrm{TV}}\!\left(p_{\infty,\Delta r},\,p_{\infty,0}\right)
\;\ge\;\Bigl| \int_{A_{\Delta r}} p_{\infty,\Delta r}(x)\pi(dx)-\int_{A_{\Delta r}} p_{\infty,0}(x)\pi(dx)\Bigr|
\;=\;|1-0|\;=\;1,
\]
and the TV metric is always at most $1$, hence equality holds. 
\end{proof}

\subsection{Stable Regime (iii)}

\begin{lemma}\label{lem:stability_11_general}
Assume there exists $\rho\in [0,1)$ such that for any $p,q\in \mathcal{P}_\pi$ and $\Delta r\in L^\infty(\mathcal X,\mathcal B,\pi)$,
\begin{equation}
    d_{\rm TV}(T_{\Delta r} p, T_{\Delta r} q ) \leq \rho\cdot d_{\rm TV}(p, q).
\end{equation}
Then
\begin{equation}\label{lim:stability_11_general}
 d_{\rm TV}(T^n_{\Delta r} p_0,T^n_{0} p_0) \leq \frac{\delta(\|\Delta r \|_\infty)}{1-\rho},
\end{equation}
where 
\[
\delta(\eta):= \sup_{\| \Delta r \|_\infty\leq \eta } \sup_{p\in \mathcal{P}_{\pi}} d_{\rm TV}(T_{\Delta r}p , T_{0}p )
\]

\end{lemma}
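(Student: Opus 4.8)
The plan is to run a standard perturbed-fixed-point-iteration (``shadowing'') argument. Write $x_n := T_{\Delta r}^{n} p_0$ for the perturbed trajectory and $y_n := T_0^{n} p_0$ for the unperturbed one, and set $a_n := d_{\rm TV}(x_n, y_n)$, so that $a_0 = 0$. The first step is to establish a one-step recursion for $a_n$: by the triangle inequality for the TV metric,
\[
a_{n+1} = d_{\rm TV}(T_{\Delta r} x_n,\, T_0 y_n) \;\le\; d_{\rm TV}(T_{\Delta r} x_n,\, T_{\Delta r} y_n) \;+\; d_{\rm TV}(T_{\Delta r} y_n,\, T_0 y_n).
\]
The first term on the right is at most $\rho\, a_n$ by the assumed uniform contraction of $T_{\Delta r}$; the second term is at most $\delta(\|\Delta r\|_\infty)$, since $y_n \in \mathcal{P}_\pi$ and $\delta(\eta)$ is by definition the supremum of $d_{\rm TV}(T_{\Delta r} p, T_0 p)$ over all $p \in \mathcal{P}_\pi$ and all $\Delta r$ with $\|\Delta r\|_\infty \le \eta$. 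Hence $a_{n+1} \le \rho\, a_n + \delta(\|\Delta r\|_\infty)$.

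The second step is to unroll this linear recursion. With $a_0 = 0$, a straightforward induction gives
\[
a_n \;\le\; \delta(\|\Delta r\|_\infty)\sum_{k=0}^{n-1}\rho^{k} \;=\; \delta(\|\Delta r\|_\infty)\,\frac{1-\rho^{n}}{1-\rho} \;\le\; \frac{\delta(\|\Delta r\|_\infty)}{1-\rho},
\]
which is exactly \eqref{lim:stability_11_general}. Because the bound is uniform in $n$, it also passes to the limit $n\to\infty$ whenever the iterates converge, which is the form invoked downstream in Theorem~\ref{thm:stable_1}.

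There is no substantial obstacle within this lemma itself; the only points requiring care are (i) that a single contraction coefficient $\rho<1$ works for every admissible $\Delta r$ simultaneously, which is guaranteed by hypothesis and, in the application, follows from Lemma~\ref{lem:contraction} applied to the perturbed curation kernel since $\rho = K(1-\alpha)$ does not depend on the reward; and (ii) that $\delta(\cdot)$ is finite, which is immediate because $d_{\rm TV}\le 1$, and which in the application is refined to $\delta(\eta)=O(\eta)$ so that the right-hand side vanishes as $\eta\to 0$. The genuine work is deferred to the companion estimate bounding $\delta(\eta)$ for the specific map $T_{\Delta r}$ in Regime~(iii): that is where the sensitivity of the Plackett--Luce kernel to the multiplicative factors $e^{\pm\|\Delta r\|_\infty}$ enters and produces the explicit constant $\tfrac{\eta\rho}{4(1-\rho)}$ of Theorem~\ref{thm:stable_1}.
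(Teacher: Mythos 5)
Your proof is correct and follows essentially the same argument as the paper: a triangle-inequality one-step recursion $a_{n+1}\le \rho\,a_n+\delta(\|\Delta r\|_\infty)$ unrolled into a geometric series. The only (immaterial) difference is the choice of intermediate point — you insert $T_{\Delta r}y_n$ and use contraction of the perturbed map, whereas the paper inserts $T_0 x_n$ and uses contraction of the unperturbed map; both are covered by the hypothesis, which assumes the uniform contraction for every $\Delta r$, including $\Delta r=0$.
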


\begin{proof}[Proof of Lemma~\ref{lem:stability_11_general}]
    Let \(D_n:=d_{\rm TV}(T^n_{\Delta r}p_0,T^n_{0}p_0 )\) and $\eta=\|\Delta r\|_\infty$. Then
\[
D_{n+1} \leq   d_{\rm TV}(T_{\Delta r}T^n_{\Delta r}p , T_{0}T^n_{\Delta r}p )+d_{\rm TV}(T_{0}T^n_{\Delta r}p , T_{0}T^n_{0}p )\leq \delta(\eta)+ \rho D_n.
\]
With $D_0=0$, we obtain that 
\[
D_n\leq \delta(\eta) \frac{1-\rho^n}{1-\rho}\leq \frac{\delta(\eta) }{1-\rho}.
\]
Hence, we complete the proof.

\end{proof}

\begin{lemma}\label{lem:kernel-Lip}
For every density $p$ and $x\in\mathcal X$, let $K\in\mathbb{N}$ be finite and let $\Delta r$ be bounded with $\|\Delta r\|_\infty<\infty$. Then
\[
  \bigl|
     H_{p,\Delta r}^{K}(x)
    -H_{p,0}^{K}(x)
  \bigr|\le \tfrac{K}{2}\,\|\Delta r\|_{\infty}.
\]
\end{lemma}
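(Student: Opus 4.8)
The plan is to bound the two integrands pointwise (for each realization of the auxiliary randomness) via a softmax Lipschitz estimate in the sup-norm of the log-utilities, and then take expectations. Fix $x\in\mathcal X$ and condition on everything appearing under the expectation that defines the kernels, i.e.\ on $(X_1,\dots,X_{K-1},\varepsilon,\varepsilon_1,\dots,\varepsilon_{K-1})$. Set $\eta:=\|\Delta r\|_\infty$ and define log-utility vectors $a,b\in\mathbb R^{K}$, indexed by $\{0,1,\dots,K-1\}$, by $a_0=r(x)+\Delta r(x)+\varepsilon(x)$, $a_k=r(X_k)+\Delta r(X_k)+\varepsilon_k(X_k)$ for $1\le k\le K-1$, and let $b_j$ be the same expressions with the $\Delta r$ term removed. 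Since $p\in\mathcal P_\pi$ we have $p\ll\pi$, so $|\Delta r(X_k)|\le\eta$ $p$-a.s., and $|\Delta r(x)|\le\eta$; hence $\|a-b\|_\infty\le\eta$. Writing $\sigma(z):=e^{z_0}\big/\sum_{j=0}^{K-1}e^{z_j}$ for the first softmax coordinate and $\sigma_\ell(z):=e^{z_\ell}\big/\sum_{j=0}^{K-1}e^{z_j}$, the conditional integrands inside $H^K_{p,\Delta r}(x)$ and $H^K_{p,0}(x)$ are exactly $K\sigma(a)$ and $K\sigma(b)$.

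Next I would establish the deterministic bound $|\sigma(a)-\sigma(b)|\le\tfrac12\|a-b\|_\infty$. A short computation gives $\partial\sigma/\partial z_0=\sigma(z)(1-\sigma(z))$ and $\partial\sigma/\partial z_\ell=-\sigma(z)\sigma_\ell(z)$ for $\ell\ne0$, so that $\|\nabla\sigma(z)\|_1=\sigma(1-\sigma)+\sigma\sum_{\ell\ne0}\sigma_\ell=2\,\sigma(z)\bigl(1-\sigma(z)\bigr)\le\tfrac12$. Applying the mean value theorem along the segment $t\mapsto b+t(a-b)$, $t\in[0,1]$, yields $|\sigma(a)-\sigma(b)|\le\sup_{t\in[0,1]}\|\nabla\sigma(b+t(a-b))\|_1\,\|a-b\|_\infty\le\tfrac12\eta$.

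Finally, combining the two estimates: for every realization of the conditioning variables, $|K\sigma(a)-K\sigma(b)|\le\tfrac K2\eta$. Since $K\sigma(\cdot)\in[0,K]$ is bounded (hence integrable), taking expectations and using $|\E[Z]|\le\E|Z|$ gives $|H^K_{p,\Delta r}(x)-H^K_{p,0}(x)|\le\E\bigl|K\sigma(a)-K\sigma(b)\bigr|\le\tfrac K2\|\Delta r\|_\infty$, which is the claim.

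I do not expect a genuine obstacle: this is a routine Lipschitz bound for the softmax map. The only points needing a little care are (i) transferring the $\pi$-essential-sup bound on $\Delta r$ to an almost-sure bound under $p$ (immediate from $p\ll\pi$) and (ii) interchanging the absolute value with the expectation (immediate from boundedness of $K\sigma$). One can sidestep the calculus via the algebraic identity $\sigma(a)-\sigma(b)=\bigl(u_0\sum_{j\ne0}u_j(e^{\delta_0}-e^{\delta_j})\bigr)\big/(\Sigma\Sigma')$ with $u_j=e^{b_j}$, $\delta_j=a_j-b_j$, $\Sigma=\sum_j u_j$, $\Sigma'=\sum_j u_je^{\delta_j}$, but that route produces a messier constant, so I would keep the gradient argument.
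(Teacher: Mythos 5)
Your proof is correct and takes essentially the same route as the paper's: both reduce to the bound $2\sigma(1-\sigma)=2us/(u+s)^2\le\tfrac12$, the paper by differentiating the expectation along the homotopy $r_t=r+t\Delta r$ and invoking dominated convergence, you by applying the mean value theorem to the softmax coordinate pointwise (conditionally on the randomness) and only then taking expectations. Your ordering is a minor technical simplification since it sidesteps the justification for differentiating under the integral sign; otherwise the arguments coincide.
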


\begin{proof}[Proof of Lemma~\ref{lem:kernel-Lip}]
Let $\psi_K(u,s):=K\,u/(u+s)$ and abbreviate $G_{r}(x):=\mathbb{E}[\psi_K(\widetilde U_r,\widetilde S_r)]$, 
where the expectation is over $X_1,\dots,X_{K-1}\stackrel{\text{i.i.d.}}{\sim}p$, any auxiliary randomness (e.g., $\varepsilon,\varepsilon_j$), $\widetilde{U}_r=e^{r(x)+\varepsilon(x)}$ and $\widetilde{S}_r=\sum_{j=1}^{K-1}e^{r(X_j)+\varepsilon_j(X_j)}$. 
Set $r_t(\cdot)=r(\cdot)+t\,\Delta r(\cdot)$, $H_t(x)=G_{r_t}(x)$, $U_t=\widetilde{U}_{r_t}=e^{r_t(x)+\varepsilon(x)}$, and $S_t=\widetilde{S}_{r_t}=\sum_{j=1}^{K-1}e^{r_t(X_j)+\varepsilon_j(X_j)}$.
We have
\[
  \partial_u\psi_K(u,s)=\frac{Ks}{(u+s)^2},\qquad
  \partial_s\psi_K(u,s)=-\frac{Ku}{(u+s)^2}.
\]
By the chain rule and dominated convergence (see bound below),
\[
  \frac{d}{dt}H_t(x)
  =\mathbb{E}\!\left[\partial_u\psi_K(U_t,S_t)\,\frac{dU_t}{dt}
                   +\partial_s\psi_K(U_t,S_t)\,\frac{dS_t}{dt}\right],
\]
with $\frac{d}{dt}U_t=\Delta r(x)\,U_t$ and 
$\frac{d}{dt}S_t=\sum_{j=1}^{K-1}\Delta r(X_j)\,e^{r_t(X_j)+\varepsilon_j(X_j)}$.
Taking absolute values and using $|\Delta r(\cdot)|\le\|\Delta r\|_\infty$,
\[
\begin{aligned}
\Bigl|\tfrac{d}{dt}H_t(x)\Bigr|
&\le K\|\Delta r\|_\infty\,\mathbb{E}\!\left[
      \frac{U_tS_t}{(U_t+S_t)^2}
      +\frac{U_t}{(U_t+S_t)^2}\sum_{j=1}^{K-1} e^{r_t(X_j)+\varepsilon_j(X_j)}
     \right] \\
&= K\|\Delta r\|_\infty\,\mathbb{E}\!\left[\frac{2U_tS_t}{(U_t+S_t)^2}\right]
 \;\le\; \frac{K}{2}\,\|\Delta r\|_\infty,
\end{aligned}
\]
since $0\le 2us/(u+s)^2\le \tfrac12$ for all $u,s>0$. Integrating from $t=0$ to $t=1$ gives
\[
|H_1(x)-H_0(x)|\le \int_0^1 \Bigl|\tfrac{d}{dt}H_t(x)\Bigr|\,dt
\le \frac{K}{2}\,\|\Delta r\|_\infty.
\]
Finally, $H_0(x)=H_{p,0}^{K}(x)$ and $H_1(x)=H_{p,\Delta r}^{K}(x)$, which yields the claim.
\end{proof}

\begin{proof}[Proof of Theorem~\ref{thm:stable_1}]
As a direct consequence of Lemma~\ref{lem:kernel-Lip}, we have
\begin{align*}
    \delta(\eta) &= \sup_{\| \Delta r \|_\infty\leq \eta } \sup_{p\in \mathcal{P}_{\pi}} d_{\rm TV}(T_{\Delta r}p , T_{0}p )\\
    &=(1-\alpha)\sup_{\| \Delta r \|_\infty\leq \eta } \sup_{p\in \mathcal{P}_{\pi}} d_{\rm TV}(pH_{p,\Delta r}^{K} , pH_{p, 0}^{K} )\\
    &=\frac{1-\alpha}{2}\sup_{\| \Delta r \|_\infty\leq \eta } \sup_{p\in \mathcal{P}_{\pi}}  \int_{\mathcal{X}} p(x)|H_{p,\Delta r}^{K}(x)-H_{p,0}^{K}(x)| \pi(dx)\\
    &\leq \frac{\rho \eta}{4}.
\end{align*}
By Lemma~\ref{lem:stability_11_general}, we complete the proof.
\end{proof}

\subsection{Stable Regime (iv)}
We first consider the a general setting for discrete dynamics under perturbations.

\begin{lemma}\label{lem:stability_2_general}
   Assume that
\begin{align}
    \limsup_{\substack{\|\Delta r\|_{\infty}\to 0\\ d_{\rm TV}(w', w)\to 0 }} &d_{\rm TV}(\mathfrak{L}_{\Delta r} w',\mathfrak{L}_{0} w)=0, \text{ for any }w\in \mathcal{P}_{\mathbb{P}_{\rm ref}},\label{lim:45_joint_continuous}\\
    \limsup_{\|\Delta r\|_{\infty}\to 0}\quad &d_{\rm TV}(\mathfrak{L}_{\Delta r}^\infty w_0, \mathfrak{L}_{0}^\infty w_0)=0,\label{lim:46_limit_continuous}
\end{align}
and
\begin{equation}\label{lim:47}
   \limsup_{\eta\to 0^+} \limsup_{N\to \infty}\sup_{\|\Delta r\|_{\infty}\leq \eta } d_{\rm TV}(\mathfrak{L}_{\Delta r}^N w_0, \mathfrak{L}_{\Delta r}^\infty w_0)=0.
\end{equation}
Then 
\begin{equation}\label{lim:48_stability}
    \limsup_{\|\Delta r\|_{\infty}\to 0} \sup_{n\in \mathbb{N}\cup \{\infty\} } d_{\rm TV}(\mathfrak{L}_{\Delta r}^n w_0,\mathfrak{L}_{0}^n w_0) = 0,
\end{equation}
where, by convention,
\[
\limsup_{\|\Delta r\|_{\infty}\to 0} h(\Delta r)
:=\lim_{\eta\to 0^+} \; \sup_{\|\Delta r\|_{\infty}\le \eta} h(\Delta r).
\]
\end{lemma}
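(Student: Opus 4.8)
The plan is to split the supremum over $n$ into a finite \emph{head} $\{0,1,\dots,N\}$ and a \emph{tail} $\{n>N\}\cup\{\infty\}$, to control the head by a short induction on $n$ driven entirely by the one-step joint-continuity hypothesis~\eqref{lim:45_joint_continuous}, and to control the tail by a three-term triangle inequality routed through the limits $\mathfrak{L}_{\Delta r}^\infty w_0$ and $\mathfrak{L}_0^\infty w_0$, using \eqref{lim:46_limit_continuous}, \eqref{lim:47}, and the plain convergence $\mathfrak{L}_0^n w_0\to\mathfrak{L}_0^\infty w_0$ (which is just the defining property of $\mathfrak{L}_0^\infty w_0$, its existence being Theorem~\ref{thm:Regime_iv_convergence}). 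Concretely I would fix $\epsilon>0$ and produce a threshold $\eta>0$ so that $\sup_{n\in\mathbb N\cup\{\infty\}}d_{\rm TV}(\mathfrak L_{\Delta r}^n w_0,\mathfrak L_0^n w_0)\le\epsilon$ whenever $\|\Delta r\|_\infty\le\eta$.

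For the tail I would first use \eqref{lim:47} to select $N$ and $\eta_1>0$ with $\sup_{\|\Delta r\|_\infty\le\eta_1}\sup_{n\ge N}d_{\rm TV}(\mathfrak L_{\Delta r}^n w_0,\mathfrak L_{\Delta r}^\infty w_0)\le\epsilon/4$, enlarging $N$ if needed so that also $\sup_{n\ge N}d_{\rm TV}(\mathfrak L_0^n w_0,\mathfrak L_0^\infty w_0)\le\epsilon/4$ (a convergent sequence has vanishing tails), and then use \eqref{lim:46_limit_continuous} to pick $\eta_2>0$ with $d_{\rm TV}(\mathfrak L_{\Delta r}^\infty w_0,\mathfrak L_0^\infty w_0)\le\epsilon/4$ for $\|\Delta r\|_\infty\le\eta_2$. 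For every $n>N$, and degenerately for $n=\infty$, the bound
\[
d_{\rm TV}(\mathfrak L_{\Delta r}^n w_0,\mathfrak L_0^n w_0)\le d_{\rm TV}(\mathfrak L_{\Delta r}^n w_0,\mathfrak L_{\Delta r}^\infty w_0)+d_{\rm TV}(\mathfrak L_{\Delta r}^\infty w_0,\mathfrak L_0^\infty w_0)+d_{\rm TV}(\mathfrak L_0^\infty w_0,\mathfrak L_0^n w_0)
\]
then gives $\le 3\epsilon/4$ uniformly over $n$ and over $\|\Delta r\|_\infty\le\min(\eta_1,\eta_2)$.

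With $N$ frozen, the head is handled by showing $\limsup_{\|\Delta r\|_\infty\to0}d_{\rm TV}(\mathfrak L_{\Delta r}^n w_0,\mathfrak L_0^n w_0)=0$ for each $n\le N$, by induction on $n$. The base case $n=0$ is $d_{\rm TV}(w_0,w_0)=0$; for the step I would put $w:=\mathfrak L_0^n w_0$ (independent of $\Delta r$) and $w':=\mathfrak L_{\Delta r}^n w_0$, note that the inductive hypothesis yields $d_{\rm TV}(w',w)\to0$ as $\|\Delta r\|_\infty\to0$, and apply \eqref{lim:45_joint_continuous} at this fixed $w$ to get $d_{\rm TV}(\mathfrak L_{\Delta r}^{n+1}w_0,\mathfrak L_0^{n+1}w_0)=d_{\rm TV}(\mathfrak L_{\Delta r}w',\mathfrak L_0 w)\to0$. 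Since $\{0,\dots,N\}$ is finite, the maximum over $n\le N$ of these distances also tends to $0$, so there is $\eta_3>0$ making it $\le\epsilon$ for $\|\Delta r\|_\infty\le\eta_3$; then $\eta:=\min(\eta_1,\eta_2,\eta_3)$ glues the two ranges and establishes \eqref{lim:48_stability}.

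The step I expect to be the main obstacle is the tail estimate: \eqref{lim:47}, read literally, controls $d_{\rm TV}(\mathfrak L_{\Delta r}^N w_0,\mathfrak L_{\Delta r}^\infty w_0)$ at the \emph{single} index $N$, whereas the argument above needs it uniformly over all $n\ge N$ and over $\|\Delta r\|_\infty\le\eta_1$ at once. If one does not simply read \eqref{lim:47} as a tail-control hypothesis, bridging this gap requires some monotonicity of $n\mapsto d_{\rm TV}(\mathfrak L_{\Delta r}^n w_0,\mathfrak L_{\Delta r}^\infty w_0)$; in the Regime-(iv) application this is available, because Birkhoff contraction (the mechanism of Theorem~\ref{thm:Regime_iv_convergence} and its perturbed counterpart under Assumption~\ref{ass:4_perturbation}) makes $d_{\mathcal H}(\mathfrak L_{\Delta r}^n w_0,\mathfrak L_{\Delta r}^\infty w_0)$ genuinely nonincreasing in $n$, and $d_{\rm TV}$ is dominated by a continuous, vanishing-at-$0$ function of $d_{\mathcal H}$ on the interior of the positive cone, so the single-index bound propagates to the whole tail.
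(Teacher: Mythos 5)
Your proposal is correct and follows essentially the same route as the paper's proof: a head/tail split in $n$, an induction on the finitely many head indices driven by the joint continuity \eqref{lim:45_joint_continuous}, and a triangle inequality through $\mathfrak{L}_{\Delta r}^{\infty}w_0$ and $\mathfrak{L}_{0}^{\infty}w_0$ for the tail using \eqref{lim:46_limit_continuous} and \eqref{lim:47} (the paper merely folds your third tail term into a factor of $2$ by noting that the supremum over $\|\Delta r\|_{\infty}\le\eta$ already contains $\Delta r=0$). The obstacle you flag at the end is not one: since $\limsup_{N\to\infty}a_N=\lim_{N\to\infty}\sup_{n\ge N}a_n$, hypothesis \eqref{lim:47} is by definition a tail-control statement, so no appeal to monotonicity of $n\mapsto d_{\mathcal H}(\mathfrak{L}_{\Delta r}^{n}w_0,\mathfrak{L}_{\Delta r}^{\infty}w_0)$ is needed at this level of generality.
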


\begin{proof}[Proof of Lemma~\ref{lem:stability_2_general}]
Let 
\[
I=\limsup_{\|\Delta r\|_{\infty}\to 0} \sup_{n\in \mathbb{N} } d_{\rm TV}(\mathfrak{L}^n_{\Delta r} w_0,\mathfrak{L}^n_{0} w_0).
\]
It suffices to show that $I=0$. For any $N\in\mathbb{N}$,
\begin{equation}\label{ineq:49}
    I\leq \limsup_{\|\Delta r\|_{\infty}\to 0} \max_{1\leq n\leq N-1 } d_{\rm TV}(\mathfrak{L}^n_{\Delta r} w_0,\mathfrak{L}^n_{0} w_0) +  \limsup_{\|\Delta r\|_{\infty}\to 0} \sup_{n\geq N} d_{\rm TV}(\mathfrak{L}^n_{\Delta r} w_0,\mathfrak{L}^n_{0} w_0).
\end{equation}
Define $F:(\Delta r, w)\mapsto \mathfrak{L}_{\Delta r} w$. Assumption \eqref{lim:45_joint_continuous} is precisely the joint continuity
\[
F(\Delta r, w')=\mathfrak{L}_{\Delta r} w'\to \mathfrak{L}_0 w=F(0, w),\text{ as }\Delta r\to 0, w'\to w,
\]
that is 
\[
\limsup_{\substack{\|\Delta r\|_{\infty}\to 0\\ d_{\rm TV}(w',w)\to 0}}d_{\rm TV} (F(\Delta r, w'),F(0, w))=0.
\]
We claim that for each fixed $n\in \mathbb{N}$ and $w\in \mathcal{P}_{\mathbb{P}_{\rm ref}}$,
\begin{equation}\label{equ:49_induction}
    \lim_{\Delta r \to 0}d_{\rm TV}(\mathfrak{L}^n_{\Delta r}w, \mathfrak{L}_0^nw)=0.
\end{equation}
For $n=1$ this is \eqref{lim:45_joint_continuous}. If $\mathfrak{L}^n_{\Delta r}w\to \mathfrak{L}_0^nw$ as $\Delta r\to 0$, then
\[
\mathfrak{L}^{n+1}_{\Delta r}w=F(\Delta r, \mathfrak{L}^{n}_{\Delta r}w)\to F(0,\mathfrak{L}^{n}_{0}w)=\mathfrak{L}_0^{n+1}w\text{ as $\Delta r\to 0$}.
\]
Thus, the first term on the right-hand side of \eqref{ineq:49} vanishes. Hence, for any $N>0$,
\begin{equation}\label{ineq:51}
    I\leq   \limsup_{\|\Delta r\|_{\infty}\to 0} \sup_{n\geq N} d_{\rm TV}(\mathfrak{L}^n_{\Delta r} w_0,\mathfrak{L}^n_{0} w_0)=\limsup_{\eta\to 0^+} \sup_{\|\Delta r\|_{\infty}\leq \eta} \sup_{n\geq N} d_{\rm TV}(\mathfrak{L}^n_{\Delta r} w_0,\mathfrak{L}^n_{0} w_0).
\end{equation}
By the triangle inequality and subadditivity of the supremum, we have
\begin{equation*}
    \sup_{\|\Delta r\|_{\infty}\leq \eta} \sup_{n\geq N} d_{\rm TV}(\mathfrak{L}^n_{\Delta r} w_0,\mathfrak{L}^n_{0} w_0)\leq 2\sup_{\|\Delta r\|_{\infty}\leq \eta} \sup_{n\geq N} d_{\rm TV}(\mathfrak{L}^n_{\Delta r} w_0,\mathfrak{L}^\infty_{\Delta r} w_0)+\sup_{\|\Delta r\|_{\infty}\leq \eta}  d_{\rm TV}(\mathfrak{L}^\infty_{\Delta r} w_0,\mathfrak{L}^\infty_{0} w_0).
\end{equation*}
Therefore, for any fixed $\eta>0$ and $N\ge 0$,
\begin{equation*}
    I\leq 2\sup_{n\geq N}  \sup_{\|\Delta r\|_{\infty}\leq \eta } d_{\rm TV}(\mathfrak{L}^n_{\Delta r} w_0,\mathfrak{L}^\infty_{\Delta r} w_0)+\limsup_{\|\Delta r\|_{\infty}\to  0 }  d_{\rm TV}(\mathfrak{L}^\infty_{\Delta r} w_0,\mathfrak{L}^\infty_{0} w_0).
\end{equation*}
Letting $N\to \infty$ and $\eta\to 0^+$, we obtain
\begin{equation*}
    I\leq 2\limsup_{\eta\to 0^+}\limsup_{ N\to \infty}  \sup_{\|\Delta r\|_{\infty}\leq \eta } d_{\rm TV}(\mathfrak{L}^N_{\Delta r} w_0,\mathfrak{L}^\infty_{\Delta r} w_0)+\limsup_{\|\Delta r\|_{\infty}\to  0 }  d_{\rm TV}(\mathfrak{L}^\infty_{\Delta r} w_0,\mathfrak{L}^\infty_{0} w_0).
\end{equation*}
As a consequence of \eqref{lim:46_limit_continuous} and \eqref{lim:47}, we conclude that $I=0$, which completes the proof of \eqref{lim:48_stability}.
    
\end{proof}

\begin{lemma}\label{lem:51_proof}
For any $\Delta r\in L^\infty(\mathcal X,\mathcal B,\pi)$ and $w,w'\in \mathcal{P}_{\rm ref}$, we have
\begin{equation*}
    d_{\rm TV}(\mathfrak{L}_{\Delta r} w' ,\mathfrak{L}_{0} w) \leq \frac{(1-\alpha)Q_*}{\innerproduct{w}{Q_0}_{\rm ref}} \Big(\exp(\| {\Delta r}   \|_\infty )-1+   2\cdot d_{\rm TV}(w,w') \Big),
\end{equation*}
and
\begin{equation}\label{equ:Hilbert_projective_norm_continuous}
    d_{\mathcal{H}}(\mathfrak{L}_{\Delta r} w' ,\mathfrak{L}_{0} w) \leq d_{\mathcal{H}}( w',  w) +    2\cdot \|\Delta r \|_\infty  .
\end{equation}
\end{lemma}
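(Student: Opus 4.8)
The plan is to prove the two inequalities separately. The total-variation bound will follow from elementary $L^1$ estimates on the normalized tilting map, after the constant term $\alpha$ is cancelled out; the Hilbert-metric bound will follow from the scale invariance of $d_{\mathcal H}$ combined with the nonexpansiveness of positive linear maps (Birkhoff) and a pointwise ratio estimate.

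\emph{TV bound.} Write, as a shorthand, $g_{\Delta r}[v](x):=Q_{\Delta r}(x)v(x)/\innerproduct{v}{Q_{\Delta r}}_{\rm ref}$, which is a probability density with respect to $\mathbb P_{\rm ref}$ (and $Q_0=Q$, so $g_0[v]=Qv/\innerproduct{v}{Q}_{\rm ref}$). Since $\mathfrak L_{\Delta r}v-\mathfrak L_0 u=(1-\alpha)\bigl(g_{\Delta r}[v]-g_0[u]\bigr)$, the constant $\alpha$ drops and $d_{\rm TV}(\mathfrak L_{\Delta r}v,\mathfrak L_0 u)=(1-\alpha)\,d_{\rm TV}(g_{\Delta r}[v],g_0[u])$. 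I would then insert $\mathfrak L_0 w'$ and use the triangle inequality, $d_{\rm TV}(\mathfrak L_{\Delta r}w',\mathfrak L_0 w)\le d_{\rm TV}(\mathfrak L_{\Delta r}w',\mathfrak L_0 w')+d_{\rm TV}(\mathfrak L_0 w',\mathfrak L_0 w)$. The first (reward-perturbation) term is bounded crudely by $1-\alpha$ because $g_{\Delta r}[w']$ and $g_0[w']$ are both probability densities, and $1-\alpha\le \frac{(1-\alpha)Q_*}{\innerproduct{w}{Q_0}_{\rm ref}}e^{\|\Delta r\|_\infty}$ since $\innerproduct{w}{Q_0}_{\rm ref}=\int Q\,w\,d\mathbb P_{\rm ref}\le Q_*$ and $e^{\|\Delta r\|_\infty}\ge 1$. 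For the second (state-perturbation) term I would use the ``add and subtract the normalizer'' identity $g_0[w']-g_0[w]=Q\,w'\bigl(\innerproduct{w'}{Q}_{\rm ref}^{-1}-\innerproduct{w}{Q}_{\rm ref}^{-1}\bigr)+\innerproduct{w}{Q}_{\rm ref}^{-1}Q\,(w'-w)$, integrate $|\cdot|$ against $\mathbb P_{\rm ref}$ (the first piece contributes $|\innerproduct{w}{Q}_{\rm ref}-\innerproduct{w'}{Q}_{\rm ref}|/\innerproduct{w}{Q}_{\rm ref}$, the second at most $\frac{Q_*}{\innerproduct{w}{Q}_{\rm ref}}\cdot 2 d_{\rm TV}(w',w)$), and bound $|\innerproduct{w}{Q}_{\rm ref}-\innerproduct{w'}{Q}_{\rm ref}|=|\innerproduct{w-w'}{Q}_{\rm ref}|\le Q_*\cdot 2 d_{\rm TV}(w,w')$. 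This yields $d_{\rm TV}(\mathfrak L_0 w',\mathfrak L_0 w)\le \frac{(1-\alpha)Q_*}{\innerproduct{w}{Q_0}_{\rm ref}}\cdot 2 d_{\rm TV}(w,w')$, and adding the two pieces gives the claimed inequality. (If one wishes, the reward term admits a quantitative $\|\Delta r\|_\infty$-dependent bound of $\sinh$-type, which is what actually delivers continuity; but only the stated form is needed here.)

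\emph{Hilbert-metric bound.} Here I would exploit that $\mathfrak L_{\Delta r}v$ is a positive scalar multiple of the \emph{un}normalized linear map $L_{\Delta r}[v](x):=\alpha\innerproduct{v}{Q_{\Delta r}}_{\rm ref}+(1-\alpha)v(x)Q_{\Delta r}(x)$ (with $L_0=L$ from \eqref{eq:def-L}), namely $\mathfrak L_{\Delta r}v=L_{\Delta r}[v]/\innerproduct{v}{Q_{\Delta r}}_{\rm ref}$. Since $d_{\mathcal H}$ is invariant under multiplication by positive scalars, $d_{\mathcal H}(\mathfrak L_{\Delta r}w',\mathfrak L_0 w)=d_{\mathcal H}(L_{\Delta r}[w'],L_0[w])$. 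Inserting $L_{\Delta r}[w]$ and applying the triangle inequality for $d_{\mathcal H}$, I bound (i) $d_{\mathcal H}(L_{\Delta r}[w'],L_{\Delta r}[w])\le d_{\mathcal H}(w',w)$, because a positive linear map is nonexpansive in the projective metric: the pointwise sandwich $\alpha(w',w)\,w\le w'\le \beta(w',w)\,w$ is preserved by $L_{\Delta r}$; and (ii) $d_{\mathcal H}(L_{\Delta r}[w],L_0[w])\le 2\|\Delta r\|_\infty$, from the pointwise ratio estimate $e^{-\|\Delta r\|_\infty}L_0[w](x)\le L_{\Delta r}[w](x)\le e^{\|\Delta r\|_\infty}L_0[w](x)$, which itself follows from $e^{-\|\Delta r\|_\infty}\le e^{\Delta r(x)}\le e^{\|\Delta r\|_\infty}$ and $e^{-\|\Delta r\|_\infty}\innerproduct{w}{Q}_{\rm ref}\le \innerproduct{w}{Q_{\Delta r}}_{\rm ref}\le e^{\|\Delta r\|_\infty}\innerproduct{w}{Q}_{\rm ref}$ (both terms of $L_{\Delta r}[w]$ are squeezed between $e^{\mp\|\Delta r\|_\infty}$ times the corresponding terms of $L_0[w]$). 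Summing (i) and (ii) gives \eqref{equ:Hilbert_projective_norm_continuous}.

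\emph{Main obstacle.} The $L^1$ manipulations are routine; the only point demanding care is the bookkeeping of the two normalizing constants $\innerproduct{w'}{Q}_{\rm ref}$ versus $\innerproduct{w}{Q}_{\rm ref}$ in the TV bound — one must split so as to pull out $\innerproduct{w}{Q_0}_{\rm ref}$ (not $\innerproduct{w'}{Q_0}_{\rm ref}$), otherwise the constant will not match the stated one. Recognizing the scale-invariance/positivity structure makes the Hilbert bound essentially automatic, so I do not expect any genuine difficulty there.
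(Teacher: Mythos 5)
Your proposal is correct, and both inequalities come out with exactly the stated constants. The TV part is essentially the paper's argument: the paper bounds $\|\mathfrak{L}_{\Delta r}w'-\mathfrak{L}_0 w\|_1$ in a single step via the normalization inequality $\bigl\|\tfrac{f}{\|f\|_1}-\tfrac{g}{\|g\|_1}\bigr\|_1\le\frac{\|g-f\|_1+|\,\|g\|_1-\|f\|_1\,|}{\|g\|_1}$ applied to $f=w'Q_{\Delta r}$, $g=wQ_0$, whereas you insert $\mathfrak{L}_0 w'$ and split into a reward term and a state term; your crude bound of the reward term by $1-\alpha$ is no worse than the paper's (it also bounds $\|Q_0-Q_{\Delta r}\|_\infty$ by $Q_*e^{\|\Delta r\|_\infty}$, which likewise does not vanish as $\Delta r\to 0$), and your normalizer bookkeeping correctly pulls out $\innerproduct{w}{Q_0}_{\rm ref}$ rather than $\innerproduct{w'}{Q_0}_{\rm ref}$. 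For the Hilbert bound the routes differ more visibly: the paper proves two small claims — invariance of $d_{\mathcal H}$ under multiplication by a positive function $Q'$ up to $2\|\log(Q'/Q)\|_\infty$, and nonexpansiveness of the affine shift $f\mapsto\alpha\mathbf 1+(1-\alpha)f$ — and composes them on the normalized tilting maps; you instead pass to the unnormalized positive linear operator $L_{\Delta r}[v]=\alpha\innerproduct{v}{Q_{\Delta r}}_{\rm ref}+(1-\alpha)vQ_{\Delta r}$, use scale invariance, and invoke order-preservation of a positive linear map for the $d_{\mathcal H}(w',w)$ term plus the pointwise sandwich $e^{-\|\Delta r\|_\infty}L_0[w]\le L_{\Delta r}[w]\le e^{\|\Delta r\|_\infty}L_0[w]$ for the $2\|\Delta r\|_\infty$ term. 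Your packaging is slightly more economical and aligns with the Birkhoff machinery already used in Theorem~\ref{thm:Regime_iv_convergence}, while the paper's two claims are reused verbatim elsewhere; substantively the two proofs are equivalent and both are complete.
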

\begin{proof}[Proof of Lemma~\ref{lem:51_proof}]
Set $\| f \|_1=\int_{x\in \mathcal{X}} |f(x)|\mathbb{P}_{\rm ref}(dx)$. We have $\innerproduct{w'}{Q_{\Delta r}}_{\rm ref}=\| w'Q_{\Delta r} \|_1$. By triangle inequalities, we have
\begin{align*}
        \| \mathfrak{L}_{\Delta r} w' -\mathfrak{L}_{0} w \|_1 &= (1-\alpha) \left\| \frac{w'Q_{\Delta r}}{\| w'Q_{\Delta r}\|_1   } -  \frac{wQ_{0}}{\| wQ_{0}\|_1   } \right\|_1\\
        &\leq (1-\alpha) \frac{\| wQ_{0} -w'Q_{\Delta r} \|_1 +\Big| \| wQ_{0}\|_1-\| w'Q_{\Delta r}\|_1 \Big| }{\| wQ_{0}\|_1}\\
        &\leq (1-\alpha) \frac{2\| Q_{0} -Q_{\Delta r} \|_\infty+2Q_* \|w'-w\|_1   }{\| wQ_{0}\|_1}\\
        &\leq (1-\alpha)Q_* \frac{2\exp(\| {\Delta r} \|_\infty)-2+2\cdot  \|w'-w\|_1   }{\| wQ_{0}\|_1}
\end{align*}
Thus,
\begin{equation*}
         d_{\rm TV}(\mathfrak{L}_{\Delta r} w'(x),\mathfrak{L}_{0} w(x)) = \frac{1}{2}\| \mathfrak{L}_{\Delta r} w' -\mathfrak{L}_{0} w \|_1\leq (1-\alpha)Q_* \frac{\exp(\| {\Delta r} \|_\infty)-1 +2 d_{\rm TV}(w',w)}{ \|wQ_0 \|_1 }.
\end{equation*}    

\begin{claim}\label{claim:B6}
If $Q(x),Q'(x)\geq 0$ and $w,w'$ are PDFs, then
\[
d_{\mathcal{H}}\Big( \frac{Q'w'}{\innerproduct{w'}{Q'}_{\rm ref}}  , \frac{Qw}{\innerproduct{w}{Q}_{\rm ref}} \Big)\leq d_{\mathcal{H}}(  w' , w )+2 \Big\|\log \frac{Q'}{Q} \Big\|_\infty.
\]
\end{claim}
\begin{proof}[Proof of Claim~\ref{claim:B6}]
    Because the Hilbert projective metric is invariant under positive multiplication, that is,
\begin{equation}
    d_{\mathcal{H}}\Big( \frac{Q'w'}{\innerproduct{w'}{Q'}_{\rm ref}}  , \frac{Qw}{\innerproduct{w}{Q}_{\rm ref}} \Big) = d_{\mathcal{H}} (Q'w',Qw) \leq d_{\mathcal{H}} (w',w)+ 2\Big\|\log \frac{Q'}{Q} \Big\|_\infty.
\end{equation}
\end{proof}

\begin{claim}\label{claim:B7}
Fix $\alpha \in [0,1]$. For any positive density functions $f,g$ with respect to $P_{\rm ref}$ let $T(f)=\alpha \mathbf 1+(1-\alpha)f$. Then
\[
d_{\mathcal{H}}(Tf,Tg)\leq d_{\mathcal{H}}(f,g).
\]
\end{claim}
\begin{proof}[Proof of Claim~\ref{claim:B7}]
    Because $f$ and $g$ are probability densities, we have $\operatorname*{ess\,inf}_{x\in \mathcal{X}} \frac{f(x)}{g(x)} \leq 1 \leq \operatorname*{ess\,sup}_{x\in \mathcal{X}} \frac{f(x)}{g(x)}$. Due to 
\begin{equation*}
    \frac{Tf(x)}{Tg(x)} = \frac{\alpha+(1-\alpha)g(x) (f(x)/g(x))}{\alpha+(1-\alpha)g(x)  },
\end{equation*}
we obtain
\begin{equation*}
    \operatorname*{ess\,inf}_{x\in \mathcal{X}} \frac{f(x)}{g(x)} \leq \operatorname*{ess\,inf}_{x\in \mathcal{X}} \frac{Tf(x)}{Tg(x)}\leq \operatorname*{ess\,sup}_{x\in \mathcal{X}} \frac{Tf(x)}{Tg(x)} \leq \operatorname*{ess\,sup}_{x\in \mathcal{X}} \frac{f(x)}{g(x)}, 
\end{equation*}
which implies $d_{\mathcal{H}}(Tf,Tg)\leq d_{\mathcal{H}}(f,g)$.
\end{proof}

Hence, as a consequence of Claims~\ref{claim:B6} and~\ref{claim:B7}, we obtain
\begin{equation}
     d_{\mathcal{H}}(\mathfrak{L}_{\Delta r} w' ,\mathfrak{L}_{0} w)  \leq d_{\mathcal{H}}\Big( \frac{Q_{\Delta r}w'}{\innerproduct{w'}{Q_{\Delta r}}_{\rm ref}}  , \frac{Q_0w}{\innerproduct{w}{Q_0}_{\rm ref}} \Big)  \leq  d_{\mathcal{H}}(  w' ,  w) +2\| \Delta r\|_\infty.
\end{equation}

\end{proof}

Let $B_\infty(\eta_*)=\{\Delta r\in L^\infty(\mathcal X,\mathcal B,\pi):\|\Delta r\|_\infty\leq \eta_* \}$ denote the ball of radius $\eta_*$ in $L^\infty(\mathcal X,\mathcal B,\pi)$.

For any perturbation \(\Delta r\in \operatorname{int}(B_\infty(\eta_*))\), Lemma~\ref{lem:fixed_point_c} guarantees that \(\mathfrak{L}_{\Delta r}\) admits a unique fixed point, which we denote by \(w_{*,\Delta r}\). Moreover, there exists a constant \(c_{*,\Delta r}>0\) such that
\begin{equation}\label{eq:wstar}
    w_{*,\Delta r}(x)
    = \frac{\alpha}{1 -  (1-\alpha)\,Q_{\Delta r}(x)/c_{*,\Delta r}}.
\end{equation}

\begin{lemma}[Continuity of the implicit root]\label{lem:root-continuous}
Fix a constant $\eta_{*}>0$ and, for each $\Delta r\in \operatorname{int}(B_\infty(\eta_*))$, let real numbers
\(c_{\min,{\Delta r}}\;<\;c_{\max,{\Delta r}}.\)
Assume
\[
u:\Bigl\{(c,{\Delta r}): \|\Delta r \|_\infty  < \eta_{*},
           \;c\in(c_{\min,{\Delta r}},c_{\max,{\Delta r}}]
      \Bigr\}\longrightarrow\mathbb R
\]
satisfies
\begin{enumerate}\itemsep3pt
\item[\textup{(i)}] $(c,{\Delta r})\mapsto u(c,{\Delta r})$ is continuous;
\item[\textup{(ii)}] for every fixed ${\Delta r}\in B_\infty(\eta_*)$,
                     the map $c\mapsto u(c,{\Delta r})$ is strictly decreasing on $(c_{\min,{\Delta r}},c_{\max,{\Delta r}}]$;
\item[\textup{(iii)}] $u\bigl(c_{\max,{\Delta r}},{\Delta r}\bigr)\le 1\ \text{and}\  u\bigl(c_{\min,{\Delta r}}+0,{\Delta r}\bigr)>1$ for all $\Delta r\in B_\infty(\eta_*)$.
\end{enumerate}
Then for every $\Delta r\in \operatorname{int}(B_\infty(\eta_*))$, there exists a unique \(c_{*,\Delta r}\in(c_{\min,\Delta r},c_{\max,\Delta r}]
\ \text{such that } u(c_{*,\Delta r},\Delta r)=1;\) and the map $\Delta r\longmapsto c_{*,\Delta r}$ is continuous on $\operatorname{int}(B_\infty(\eta_*))$.
\end{lemma}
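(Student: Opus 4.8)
The plan is to first settle existence and uniqueness of $c_{*,\Delta r}$ via the intermediate value theorem, and then to prove continuity of $\Delta r\mapsto c_{*,\Delta r}$ by an open-bracketing argument resting on the joint continuity (i) and the strict monotonicity (ii).

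\textbf{Step 1 (existence/uniqueness).} Fix $\Delta r\in\operatorname{int}(B_\infty(\eta_*))$. By (i) and (ii), $c\mapsto u(c,\Delta r)$ is continuous and strictly decreasing on $(c_{\min,\Delta r},c_{\max,\Delta r}]$. If $u(c_{\max,\Delta r},\Delta r)=1$, I take $c_{*,\Delta r}=c_{\max,\Delta r}$, which is the unique root by strict monotonicity. Otherwise (iii) gives $u(c_{\max,\Delta r},\Delta r)<1$ while $u(c,\Delta r)>1$ for $c$ near $c_{\min,\Delta r}$, so the intermediate value theorem together with strict monotonicity yields a unique root $c_{*,\Delta r}\in(c_{\min,\Delta r},c_{\max,\Delta r})$; in all cases $c_{*,\Delta r}$ is the unique solution in $(c_{\min,\Delta r},c_{\max,\Delta r}]$.

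\textbf{Step 2 (continuity).} Fix $\Delta r_0\in\operatorname{int}(B_\infty(\eta_*))$ and set $c_0:=c_{*,\Delta r_0}$. Given $\varepsilon>0$, I would choose it small enough that $c_0-\varepsilon>c_{\min,\Delta r_0}$ (and, when $c_0<c_{\max,\Delta r_0}$, also $c_0+\varepsilon<c_{\max,\Delta r_0}$). Strict monotonicity gives $u(c_0-\varepsilon,\Delta r_0)>1$ and, in this interior case, $u(c_0+\varepsilon,\Delta r_0)<1$. Invoking joint continuity at the points $(c_0\pm\varepsilon,\Delta r_0)$ then produces $\delta>0$ such that $u(c_0-\varepsilon,\Delta r)>1>u(c_0+\varepsilon,\Delta r)$ whenever $\|\Delta r-\Delta r_0\|_\infty<\delta$; since $u(\cdot,\Delta r)$ is strictly decreasing with a unique root, this forces $c_{*,\Delta r}\in(c_0-\varepsilon,c_0+\varepsilon)$, and letting $\varepsilon\downarrow 0$ gives continuity at $\Delta r_0$. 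In the boundary case $c_0=c_{\max,\Delta r_0}$ I would replace the upper bracket by the a priori bound $c_{*,\Delta r}\le c_{\max,\Delta r}$ combined with $c_{\max,\Delta r}\le c_0+\varepsilon$ for $\Delta r$ near $\Delta r_0$.

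\textbf{The main obstacle.} The delicate point is that the $c$-fibre $(c_{\min,\Delta r},c_{\max,\Delta r}]$ of the domain of $u$ itself moves with $\Delta r$, so before evaluating $u$ at $(c_0\pm\varepsilon,\Delta r)$ and applying joint continuity I must know these abscissae remain in the domain for $\Delta r$ near $\Delta r_0$; this requires the endpoints $c_{\min,\Delta r}$ and $c_{\max,\Delta r}$ to depend continuously (with the right one-sided semicontinuity) on $\Delta r$. This holds in the intended application, where $c_{\min,\Delta r}=(1-\alpha)Q_{*,\Delta r}$, $c_{\max,\Delta r}=Q_{*,\Delta r}$, and $Q_{*,\Delta r}=\operatorname*{ess\,sup}_{x\in\mathcal X}Q(x)e^{\Delta r(x)}$: from $e^{-\|\Delta r-\Delta r'\|_\infty}\le Q_{\Delta r}(x)/Q_{\Delta r'}(x)\le e^{\|\Delta r-\Delta r'\|_\infty}$ one obtains the same two-sided ratio bound between $Q_{*,\Delta r}$ and $Q_{*,\Delta r'}$, so $\Delta r\mapsto Q_{*,\Delta r}$ is continuous and the brackets $c_0\pm\varepsilon$ indeed lie in the domain of all nearby $\Delta r$. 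An alternative, slightly more robust route is by subsequences: if $\Delta r_n\to\Delta r_0$, then along any subsequence for which $c_{*,\Delta r_n}$ converges, joint continuity of $u$ shows the limit is again a root of $u(\cdot,\Delta r_0)=1$, hence equals $c_0$ by the uniqueness from Step 1; combined with an a priori bound on $\{c_{*,\Delta r_n}\}$ (available from $c_{*,\Delta r}\in(c_{\min,\Delta r},c_{\max,\Delta r}]$ and continuity of the endpoints) this yields $c_{*,\Delta r_n}\to c_0$.
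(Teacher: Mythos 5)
Your proposal is correct and follows essentially the same route as the paper: intermediate value theorem plus strict monotonicity for existence/uniqueness, then an $\varepsilon$-bracketing argument via joint continuity to localize $c_{*,\Delta r}$ near $c_{*,\Delta r_0}$, with the same treatment of the boundary case $c_0=c_{\max,\Delta r_0}$ through the one-sided bound. You are in fact more explicit than the paper about the one genuine subtlety — that the bracket points must remain in the $\Delta r$-dependent domain, which requires (semi)continuity of the endpoints $c_{\min,\Delta r},c_{\max,\Delta r}$; the paper simply asserts this feasibility cushion, whereas you justify it for the intended application via the two-sided ratio bound on $Q_{*,\Delta r}$.
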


\begin{proof}[Proof of Lemma~\ref{lem:root-continuous}]
\textbf{1.\ Existence and uniqueness for fixed $\Delta r$.}
For any $\Delta r\in B_\infty(\eta_*)$ define
\[
   F_{\Delta r}(c):=u(c,{\Delta r})-1,
   \qquad c\in(c_{\min,{\Delta r}},c_{\max,{\Delta r}}].
\]
By \textup{(ii)} each $F_{\Delta r}$ is strictly decreasing, while \textup{(iii)} gives
$\lim_{c\to c^+_{\min,\Delta r}}F_{\Delta r}(c)>0$ and $F_{\Delta r}(c_{\max,{\Delta r}})\le0$.
By the intermediate value theorem there exists at least one
$c_{*,{\Delta r}}\in(c_{\min,{\Delta r}},c_{\max,{\Delta r}}]$ with $F_{\Delta r}(c_{*,{\Delta r}})=0$,
and strict monotonicity yields uniqueness.

\smallskip
\textbf{2.\ Continuity of $\Delta r\mapsto c_{*,\Delta r}$.}
Fix $\bar r\in \operatorname{int}(B_\infty(\eta_*))$ and write $c_*:=c_{*,\bar r}$.
There exist $\varepsilon_0,\delta_0>0$ such that
\begin{equation}\label{eq:feasible-cushion}
[c_*-\varepsilon_0,\min \{c_*+\varepsilon_0, c_{\max,\Delta r}\}]\subset(c_{\min,\Delta r},c_{\max,\Delta r}]
\quad\text{whenever }\|\Delta r-\bar r\|_\infty<\delta_0.
\end{equation}

Let $\varepsilon\in(0,\varepsilon_0)$ and set $c_-:=c_*-\varepsilon$, $c_+:=\min \{c_*+\varepsilon, c_{\max,\Delta r}\}$.
By \eqref{eq:feasible-cushion}, $(c_-,\Delta r)$ and $(c_+,\Delta r)$ belong to the domain of $u$ for all $\Delta r$ sufficiently close to $\bar r$.
Since $u$ is continuous in both arguments by \textup{(i)} and $c\mapsto u(c,\bar r)$ is strictly decreasing by \textup{(ii)},
we have
\[
u(c_-,\bar r)>1\geq u(c_+,\bar r).
\]
Hence, by joint continuity, there exists $\delta\in(0,\delta_0]$ such that for all $\Delta r$ with $\|\Delta r-\bar r\|_\infty<\delta$,
\[
u(c_-,\Delta r)>1
\qquad\text{and}\qquad
u(c_+,\Delta r)\leq 1.
\]
For each such $\Delta r$, the strict decrease of $c\mapsto u(c,\Delta r)$ implies
\[
c_-<c_{*,\Delta r}\leq c_+.
\]
Therefore $|c_{*,\Delta r}-c_*|<\varepsilon$ whenever $\|\Delta r-\bar r\|_\infty<\delta$, which proves continuity at $\bar r$.
Since $\bar r$ was arbitrary, $\Delta r\mapsto c_{*,\Delta r}$ is continuous on $\operatorname{int}(B_\infty(\eta_*))$.
\end{proof}

\begin{lemma}\label{lem:52_proof}
Consider the setup in Theorem~\ref{thm:stable_2}. For any $\Delta r \in \operatorname{int}(B_\infty(\eta_*))$, under the TV metric we have
\begin{equation}\label{equ:limit_B.7}
    \mathfrak{L}^\infty_{\Delta r}w_0=\lim_{n\to\infty}\mathfrak{L}^n_{\Delta r}w_0=w_{*,\Delta r}.
\end{equation}
Moreover, we have
\[
\lim_{\Delta r\to 0} d_{\rm TV}(w_{*,0},w_{*,\Delta r})=0 .
\]
\end{lemma}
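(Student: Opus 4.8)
The plan is to view the perturbed update $\mathfrak{L}_{\Delta r}$ as the Regime~(iv) recursion \eqref{equ:update_w_Regime_iv} with the utility $Q$ replaced by $Q_{\Delta r}(x)=Q(x)e^{\Delta r(x)}$, so that every structural result already established for $L_N$ transfers. For the first assertion \eqref{equ:limit_B.7}, I would note that for $\|\Delta r\|_\infty<\eta_*$ one has $Q_*e^{-\eta_*}\le Q_{*,\Delta r}\le Q_*e^{\eta_*}$ and $\operatorname*{ess\,inf}_x Q_{\Delta r}(x)\ge e^{-\eta_*}Q_{\min}>0$, so Retraining Assumption~\ref{re_ass:3_ess-bdd-Q} and the positivity requirement \eqref{equ:w_0_upper} hold with $Q$ replaced by $Q_{\Delta r}$, while $w_0$ is unchanged and Assumption~\ref{ass:4_perturbation} is precisely the statement that the nondegeneracy bound \eqref{equ:ass2_alpha} holds for $Q_{\Delta r}$ whenever $\|\Delta r\|_\infty\le\eta_*$. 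Applying Lemma~\ref{lem:fixed_point_c} and Theorem~\ref{thm:Regime_iv_convergence} verbatim to $Q_{\Delta r}$ then gives that $\mathfrak{L}_{\Delta r}$ has the unique fixed point $w_{*,\Delta r}$ of \eqref{eq:wstar} and that $d_{\mathcal H}(\mathfrak{L}^n_{\Delta r}w_0,w_{*,\Delta r})\to0$; since $d_{\mathcal H}$ dominates the TV metric on the interior of the positive cone, $\mathfrak{L}^n_{\Delta r}w_0\to w_{*,\Delta r}$ in TV, which is exactly \eqref{equ:limit_B.7}.

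For the second assertion the first step is to show that the normalizing constant depends continuously on $\Delta r$. Set $u(c,\Delta r):=\alpha\int\bigl(1-(1-\alpha)Q_{\Delta r}(x)/c\bigr)^{-1}p_{\rm ref}(x)\,d\pi(x)$ on $\{(c,\Delta r):\|\Delta r\|_\infty<\eta_*,\ c\in((1-\alpha)Q_{*,\Delta r},Q_{*,\Delta r}]\}$, so that $c_{\min,\Delta r}=(1-\alpha)Q_{*,\Delta r}$, $c_{\max,\Delta r}=Q_{*,\Delta r}$, and $c_{*,\Delta r}$ is characterized by $u(c_{*,\Delta r},\Delta r)=1$. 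I would verify the three hypotheses of Lemma~\ref{lem:root-continuous}: strict monotonicity of $c\mapsto u(c,\Delta r)$ and the endpoint inequalities $u(Q_{*,\Delta r},\Delta r)\le1$, $u((1-\alpha)Q_{*,\Delta r}+0,\Delta r)>1$ are the three bullet points in the proof of Lemma~\ref{lem:fixed_point_c}, the last being Assumption~\ref{ass:4_perturbation}; joint continuity follows from dominated (and, at the left endpoint, monotone) convergence, using that $\Delta r\mapsto Q_{*,\Delta r}$ is continuous in the $L^\infty$ norm, so that on a neighbourhood of any point $(c_0,\Delta r_0)$ with $c_0>(1-\alpha)Q_{*,\Delta r_0}$ — in particular of $(c_{*,0},0)$, since $c_{*,0}>(1-\alpha)Q_*$ by Lemma~\ref{lem:fixed_point_c} — the integrand admits a single constant dominating function. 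Lemma~\ref{lem:root-continuous} then gives $c_{*,\Delta r}\to c_{*,0}$ as $\|\Delta r\|_\infty\to0$.

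The final step passes from pointwise convergence of the fixed points to TV convergence via Scheffé's lemma. Along any sequence $\Delta r_k$ with $\|\Delta r_k\|_\infty\to0$ we have $Q_{\Delta r_k}(x)\to Q(x)$ uniformly and $c_{*,\Delta r_k}\to c_{*,0}>(1-\alpha)Q_*$, so the denominator in \eqref{eq:wstar} tends to $1-(1-\alpha)Q(x)/c_{*,0}\ge\alpha>0$ and hence $w_{*,\Delta r_k}(x)\to w_{*,0}(x)$ for $\mathbb{P}_{\rm ref}$-a.e.\ $x$. Each $w_{*,\Delta r_k}$ and $w_{*,0}$ is a probability density with respect to $\mathbb{P}_{\rm ref}$ (this is precisely how $c_{*,\Delta r_k}$ is chosen), so Scheffé's lemma yields $\|w_{*,\Delta r_k}-w_{*,0}\|_{L^1(\mathbb{P}_{\rm ref})}\to0$, i.e.\ $d_{\rm TV}(w_{*,\Delta r_k},w_{*,0})\to0$; since the sequence was arbitrary, $\lim_{\|\Delta r\|_\infty\to0}d_{\rm TV}(w_{*,0},w_{*,\Delta r})=0$.

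I expect the main obstacle to be the joint-continuity input in the second step: the admissible window $((1-\alpha)Q_{*,\Delta r},Q_{*,\Delta r}]$ for $c$ drifts with $\Delta r$ and the integrand degenerates as $c\downarrow(1-\alpha)Q_{*,\Delta r}$, so no global dominating function exists. The resolution is to exploit that the relevant root $c_{*,0}$ is bounded strictly away from the left endpoint $(1-\alpha)Q_*$ (Lemma~\ref{lem:fixed_point_c}), which confines the analysis to a neighbourhood of $c_{*,0}$ that stays in the non-degenerate region uniformly for small $\Delta r$ — exactly the situation Lemma~\ref{lem:root-continuous} is designed to handle.
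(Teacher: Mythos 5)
Your proposal is correct and follows the paper's proof almost exactly: both arguments (a) transfer Lemma~\ref{lem:fixed_point_c} and Theorem~\ref{thm:Regime_iv_convergence} to the perturbed utility $Q_{\Delta r}$ and pass from the Hilbert projective metric to TV via the domination $d_{\rm TV}\le C\,d_{\mathcal H}$, and (b) obtain continuity of $\Delta r\mapsto c_{*,\Delta r}$ from Lemma~\ref{lem:root-continuous} applied to $u(c,\Delta r)=\innerproduct{h(\cdot;c,\Delta r)}{\mathbf 1}_{\rm ref}$ with the drifting window $((1-\alpha)Q_{*,\Delta r},Q_{*,\Delta r}]$. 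The only genuine divergence is the last step: you deduce $d_{\rm TV}(w_{*,\Delta r},w_{*,0})\to0$ from pointwise a.e.\ convergence of the densities together with Scheff\'e's lemma, whereas the paper establishes the explicit pointwise domination $|w_{*,0}(x)-w_{*,\Delta r}(x)|\le g(\Delta r)\,w_{*,\Delta r}(x)$ with $g(\Delta r)\to0$ and integrates, yielding the quantitative bound $d_{\rm TV}\le g(\Delta r)/2$. Your route is marginally more elementary but purely qualitative; the paper's gives a rate tied to $|c_{*,\Delta r}-c_{*,0}|$ and $\|\Delta r\|_\infty$. One cosmetic slip: the limit denominator satisfies $1-(1-\alpha)Q(x)/c_{*,0}\ge 1-(1-\alpha)Q_*/c_{*,0}>0$, but this lower bound equals $\alpha$ only when $c_{*,0}=Q_*$ (since $c_{*,0}\le Q_*$, the bound is in general at most $\alpha$, not at least $\alpha$); strict positivity is all the Scheff\'e argument needs, so the conclusion is unaffected.
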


\begin{proof}[Proof of Lemma~\ref{lem:52_proof}]
Set
\(
h(x;c,\Delta r)=\frac{\alpha}{\,1-(1-\alpha)Q_{\Delta r}(x)/c\,}.
\)
Note that $Q_{*,\Delta r}$ is Lipschitz in $\Delta r$. Fix $\Delta r$. By Assumption~\ref{ass:4_perturbation} we have
\[
\innerproduct{h(\cdot;Q_{*,\Delta r},\Delta r)}{\mathbf 1}_{\rm ref}\le 1
\quad\text{and}\quad
\innerproduct{h(\cdot;(1-\alpha)Q_{*,\Delta r}+0,\Delta r)}{\mathbf1}_{\rm ref}>1 .
\]
Define $u(c,\Delta r)=\innerproduct{h(\cdot;c,\Delta r)}{\mathbf1}_{\rm ref}$. It is obvious that $u(Q_{*,\Delta r},\Delta r)\le 1$ and $u((1-\alpha)Q_{*,\Delta r}+0,\Delta r)>1$. Moreover, $u$ is continuous over $\{(c,\Delta r): c\in((1-\alpha)Q_{*,\Delta r},Q_{*,\Delta r}],\,\|\Delta r\|_\infty<\eta_*\}$, and for fixed $\Delta r$, $u(c,\Delta r)$ is strictly decreasing in $c$.

By Lemma~\ref{lem:root-continuous}, for each $\Delta r$ there exists a unique
$c_{*,\Delta r}\in\bigl((1-\alpha)Q_{*,\Delta r},\,Q_{*,\Delta r}\bigr]$ such that $u(c_{*,\Delta r},\Delta r)=1$; moreover, the mapping $\Delta r\mapsto c_{*,\Delta r}$ is continuous on $\operatorname{int}(B_\infty(\eta_*))$. One can easily check that $w_{*,\Delta r}=h(x;c_{*,\Delta r},\Delta r)$ is the fixed point of $\mathfrak{L}_{\Delta r}$ (see also the proof of Lemma~\ref{lem:fixed_point_c}).

By Theorem~\ref{thm:Regime_iv_convergence}, \eqref{equ:limit_B.7} holds in the Hilbert projective metric, where $w_k$ and $w_*$ are replaced by $\mathfrak{L}^n_{\Delta r}w_0$ and $w_{*,\Delta r}$, respectively. Since the TV metric is naturally dominated by the Hilbert projective metric (see Lemma 1 in \citet{atar1997exponential}), i.e., for some universal constant $C>0$,
\begin{equation}\label{ineq:Hilbert_projective_metric_TV}
    d_{\rm TV}\bigl(\mathfrak{L}^n_{\Delta r}w_0,\,w_{*,\Delta r}\bigr)
\le C\cdot d_{\mathcal{H}}\bigl(\mathfrak{L}^n_{\Delta r}w_0,\,w_{*,\Delta r}\bigr),
\end{equation}
\eqref{equ:limit_B.7} also holds for the TV metric. Notice that
\begin{align*}
    |w_{*,0}(x)-w_{*,\Delta r}(x)|
    =\left|\frac{(1-\alpha)Q(x)\bigl(1/c_{*,0}-e^{\Delta r(x)}/c_{*,\Delta r}\bigr)}{1-(1-\alpha)Q(x)/c_{*,0}}\right|
      \,w_{*,\Delta r}(x) \le g(\Delta r)\,w_{*,\Delta r}(x),
\end{align*}
where the positive function
\[
g(\Delta r)=\frac{(1-\alpha)Q_*}{1-(1-\alpha)Q_*/c_{*,0}}
\max\left( \big|1/c_{*,0}-e^{\|\Delta r\|_\infty }/c_{*,\Delta r}\big|,\big|1/c_{*,0}-e^{-\|\Delta r\|_\infty }/c_{*,\Delta r}\big|\right).
\]
As a consequence of Lemma~\ref{lem:root-continuous}, we have
\[
\lim_{\|\Delta r\|_\infty\to 0} g(\Delta r)=0 .
\]
Thus,
\[
d_{\rm TV}(w_{*,0},w_{*,\Delta r})
=\frac{1}{2}\int |w_{*,0}(x)-w_{*,\Delta r}(x)|\,\mathbb{P}_{\rm ref}(dx)
\le \frac{g(\Delta r)}{2}\int w_{*,\Delta r}(x)\,\mathbb{P}_{\rm ref}(dx)
=\frac{g(\Delta r)}{2}.
\]
Letting $\Delta r\to 0$, we complete the proof.
\end{proof}

\begin{lemma}\label{lem:53_proof}
Under the setup in Theorem~\ref{thm:stable_2}, we have
\begin{equation}\label{lim:67}
   \limsup_{\eta\to 0^+} \limsup_{N\to \infty}\sup_{\|\Delta r\|_{\infty}\leq \eta  } d_{\rm TV}(\mathfrak{L}^N_{\Delta r} w_0, \mathfrak{L}^\infty_{\Delta r} w_0)=0.
\end{equation}

\end{lemma}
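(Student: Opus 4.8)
The plan is to revisit the proof of Theorem~\ref{thm:Regime_iv_convergence} and show that, once $\|\Delta r\|_\infty$ lies below a data-dependent threshold $\eta_1>0$, every constant produced there can be chosen uniformly in $\Delta r$; this promotes the pointwise-in-$\Delta r$ convergence $\mathfrak L^N_{\Delta r}w_0\to w_{*,\Delta r}=\mathfrak L^\infty_{\Delta r}w_0$ of Lemma~\ref{lem:52_proof} to a rate uniform over small perturbations. Since $d_{\rm TV}\le C\,d_{\mathcal H}$ by \eqref{ineq:Hilbert_projective_metric_TV}, it suffices to prove \eqref{lim:67} with $d_{\rm TV}$ replaced by $d_{\mathcal H}$; I will in fact show the inner quantity $\limsup_{N\to\infty}\sup_{\|\Delta r\|_\infty\le\eta}d_{\mathcal H}(\mathfrak L^N_{\Delta r}w_0,w_{*,\Delta r})$ already equals $0$ for every $\eta\le\eta_1$, which makes the outer $\limsup_{\eta\to0^+}$ immediate.

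First I would assemble uniform a priori bounds. From $Q_{\Delta r}=Qe^{\Delta r}$ one has $Q_{\min}e^{-\eta_1}\le Q_{\Delta r}\le Q_*e^{\eta_1}$ and the matching bounds on $Q_{\min,\Delta r},Q_{*,\Delta r}$ on the whole ball. Using Assumption~\ref{ass:A2_new} ($M:=\operatorname*{ess\,sup}_x w_0<\infty$) and $\langle w_0,Q_{\Delta r}\rangle_{\rm ref}\ge Q_{\min,\Delta r}$ (as $w_0$ is a $\mathbb P_{\rm ref}$-density), the first iterate satisfies $\alpha\le w_1^{\Delta r}\le M_1^*:=\alpha+(1-\alpha)MQ_*e^{2\eta_1}/Q_{\min}$. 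For the fixed point, Lemma~\ref{lem:52_proof} provides continuity of $\Delta r\mapsto c_{*,\Delta r}$ at $0$ and the strict inequality $c_{*,0}>(1-\alpha)Q_*$; together with $Q_{*,\Delta r}\to Q_*$ this produces some $\delta_0>0$ and a (possibly smaller) $\eta_1$ with $c_{*,\Delta r}-(1-\alpha)Q_{*,\Delta r}\ge\delta_0$, whence $\alpha\le w_{*,\Delta r}(x)=\alpha c_{*,\Delta r}/\bigl(c_{*,\Delta r}-(1-\alpha)Q_{\Delta r}(x)\bigr)\le W^*:=\alpha Q_*e^{\eta_1}/\delta_0$ uniformly over $\|\Delta r\|_\infty\le\eta_1$. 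Hence $d_{\mathcal H}(w_1^{\Delta r},w_{*,\Delta r})\le R^*:=\log(M_1^*W^*/\alpha^2)$, and since the Birkhoff step \eqref{equ:45_Birkhoff} makes $D_k^{\Delta r}:=d_{\mathcal H}(w_k^{\Delta r},w_{*,\Delta r})$ non-increasing for $k\ge1$, every iterate $w_k^{\Delta r}$ with $k\ge1$ stays in $[\alpha,e^{R^*}W^*]$.

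Next I would replay the telescoping estimate from the proof of Theorem~\ref{thm:Regime_iv_convergence} with these uniform constants; fix $\Delta r$ with $w_0\neq w_{*,\Delta r}$ (for the remaining $\Delta r$ we have $D_k^{\Delta r}\equiv0$, so nothing is to prove). Setting $M_k^{\Delta r}:=\beta(w_k^{\Delta r},w_{*,\Delta r})$ and $m_k^{\Delta r}:=\beta(w_{*,\Delta r},w_k^{\Delta r})$, the same computation yields $M_k^{\Delta r},m_k^{\Delta r}>1$ and $D_{k+1}^{\Delta r}\le D_k^{\Delta r}+\log g_{\Delta r}(M_k^{\Delta r})+\log g_{\Delta r}(m_k^{\Delta r})$, where $g_{\Delta r}(y)=1-a_{\Delta r}(1-1/y)$ is decreasing on $[1,\infty)$ with $g_{\Delta r}(1)=1$ and, by the bounds above, $a_{\Delta r}\ge a^*:=\alpha Q_{\min}e^{-\eta_1}\big/\bigl[(\alpha+(1-\alpha)e^{R^*}W^*)Q_*e^{\eta_1}\bigr]>0$. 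Fix $\epsilon>0$. If $D_k^{\Delta r}\ge\epsilon$ then $M_k^{\Delta r}m_k^{\Delta r}=e^{D_k^{\Delta r}}\ge e^\epsilon$, so $\max(M_k^{\Delta r},m_k^{\Delta r})\ge e^{\epsilon/2}$, and since both log-terms are $\le0$ on $[1,\infty)$ and $g_{\Delta r}$ is decreasing, $D_{k+1}^{\Delta r}\le D_k^{\Delta r}-c(\epsilon)$ with $c(\epsilon):=-\log\bigl(1-a^*(1-e^{-\epsilon/2})\bigr)>0$. As $0\le D_k^{\Delta r}\le R^*$ and $D_k^{\Delta r}$ is non-increasing, the set of $k\ge1$ with $D_k^{\Delta r}\ge\epsilon$ is an initial segment of length at most $R^*/c(\epsilon)$, so $D_N^{\Delta r}<\epsilon$ for every $N\ge N_0(\epsilon):=\lceil R^*/c(\epsilon)\rceil+1$ and every $\|\Delta r\|_\infty\le\eta_1$. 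Letting $\epsilon\downarrow0$ shows the inner $\limsup$ is $0$ for every $\eta\le\eta_1$, and \eqref{ineq:Hilbert_projective_metric_TV} transfers this to the TV metric, giving \eqref{lim:67}.

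The main obstacle I anticipate is the uniform control of the fixed point $w_{*,\Delta r}$: all other constants above are elementary, but the bound $W^*$ hinges on keeping $c_{*,\Delta r}$ away from the lower endpoint $(1-\alpha)Q_{*,\Delta r}$ of its admissible interval uniformly over the ball, which is exactly where the continuity statement of Lemma~\ref{lem:52_proof} (itself resting on Lemma~\ref{lem:root-continuous}) together with $Q_{*,\Delta r}\to Q_*$ enter; once that is in place, the rest is a uniform-in-$\Delta r$ rerun of the bookkeeping already carried out for Theorem~\ref{thm:Regime_iv_convergence}.
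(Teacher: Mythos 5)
Your proposal is correct, but it takes a genuinely different route from the paper. The paper argues by contradiction: it extracts sequences $\eta_n\to0$, $N_n\to\infty$, $\|\Delta r_n\|_\infty\le\eta_n$ violating \eqref{lim:67}, then combines the Birkhoff monotonicity $d_{\mathcal H}(\mathfrak L^{N_n}_{\Delta r_n}w_0,w_{*,\Delta r_n})\le d_{\mathcal H}(\mathfrak L^{l}_{\Delta r_n}w_0,w_{*,\Delta r_n})$ for fixed $l$ with the joint continuity of $(\Delta r,w)\mapsto\mathfrak L_{\Delta r}w$ in the Hilbert metric (inequality \eqref{equ:Hilbert_projective_norm_continuous} of Lemma~\ref{lem:51_proof}) and the continuity of $\Delta r\mapsto w_{*,\Delta r}$ to pass to the limit in $n$, contradicting $d_{\mathcal H}(\mathfrak L^{l}_{0}w_0,w_{*,0})\to0$ from Theorem~\ref{thm:Regime_iv_convergence}. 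You instead uniformize the contraction bookkeeping of Theorem~\ref{thm:Regime_iv_convergence} over a small $L^\infty$-ball: uniform envelopes $Q_{\min}e^{-\eta_1}\le Q_{\Delta r}\le Q_*e^{\eta_1}$, a uniform bound on the first iterate, a uniform bound $W^*$ on the fixed point obtained by keeping $c_{*,\Delta r}-(1-\alpha)Q_{*,\Delta r}\ge\delta_0$ (which, as you correctly flag, is the crux and rests on the same Lemma~\ref{lem:root-continuous} the paper uses), and hence a uniform lower bound $a^*$ on the per-step contraction coefficient. The resulting uniform per-step decrease $c(\epsilon)$ whenever $D_k\ge\epsilon$, together with $D_1\le R^*$ and monotonicity, yields an $N_0(\epsilon)$ independent of $\Delta r$. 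All the individual steps check out (in particular $M_k,m_k>1$ whenever $D_k>0$ since both arguments are normalized densities, and the descent set is an initial segment by monotonicity). What your approach buys is a strictly stronger, quantitative conclusion: the inner quantity $\limsup_{N\to\infty}\sup_{\|\Delta r\|_\infty\le\eta}d_{\mathcal H}(\mathfrak L^N_{\Delta r}w_0,\mathfrak L^\infty_{\Delta r}w_0)$ vanishes for every fixed $\eta\le\eta_1$ with an explicit uniform rate, whereas the paper's compactness argument is purely qualitative; the price is heavier constant-tracking, and like the paper you must read the ``setup of Theorem~\ref{thm:stable_2}'' as including $Q_{\min}>0$, Assumption~\ref{ass:A2_new}, and Assumption~\ref{ass:4_perturbation}, all of which the paper's own proof also invokes implicitly through Theorem~\ref{thm:Regime_iv_convergence} and Lemma~\ref{lem:52_proof}.
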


\begin{proof}[Proof of Lemma~\ref{lem:53_proof}]
As a consequence of inequality \eqref{ineq:Hilbert_projective_metric_TV}, to show \eqref{lim:67}, it suffices to show
\begin{equation}\label{lim:65_hp_norm}
   \limsup_{\eta\to 0^+} \limsup_{N\to \infty}\sup_{\|\Delta r\|_{\infty}\leq \eta  } d_{\mathcal{H}}(\mathfrak{L}^N_{\Delta r} w_0, \mathfrak{L}^\infty_{\Delta r} w_0)=0.
\end{equation}
If \eqref{lim:65_hp_norm} fails, then there exists a positive number $\varepsilon_0$, a sequence of positive numbers $\{\eta_n\}_{n=1}^\infty$ converge to zero, a sequence of increasing integers $\{N_n\}_{n=1}^\infty$, and a sequence of functions $\{\Delta r_n\}_{n=1}^\infty$ such that $\|\Delta r_n\|_\infty \leq \eta_n$ for any $n\geq 1$ satisfy 
\[
 d_{\mathcal{H}}(\mathfrak{L}^{N_n}_{\Delta r_n} w_0, \mathfrak{L}^\infty_{\Delta r_n} w_0)\geq \varepsilon_0>0\text{ and } \lim_{n\to \infty}\| \Delta r_n\|_\infty = 0.
\]
Set $F:(\Delta r, w)\mapsto \mathfrak{L}_{\Delta r} w$. The inequality \eqref{equ:Hilbert_projective_norm_continuous} implies the joint continuity under the Hilbert projective metric
\[
F(\Delta r, w')=\mathfrak{L}_{\Delta r} w'\to L_0 w=F(0, w),\text{ as }\|\Delta r\|_\infty\to 0, d_{\mathcal{H}}(w', w)\to 0.
\]
Similar to the proof of \eqref{equ:49_induction}, we can show that for any integer $l\geq 1$
\begin{equation}\label{equ:limit_69_new}
    \lim_{\Delta r \to 0}d_{\mathcal{H}}(\mathfrak{L}^{l}_{\Delta r} w_0, \mathfrak{L}^{l}_{0} w_0 )=0.
\end{equation}
By Lemma~\ref{lem:52_proof}, we have that $\mathfrak{L}^\infty_{\Delta r}w_0= w_{*,\Delta r}$. If $\| \gamma_{\Delta r} - \gamma_{0} \|_\infty<\gamma_{\min}$, then
\begin{align*}
    d_{\mathcal{H}}(w_{*,\Delta r},w_{*,0})&=\operatorname*{ess\,sup}_{x\in \mathcal{X}}\log \frac{ \gamma_{\Delta r}(x) }{\gamma_{0}(x)  } - \operatorname*{ess\,inf}_{x\in \mathcal{X}}\log \frac{ \gamma_{\Delta r}(x) }{\gamma_{0}(x)  }\\
    &\leq \operatorname*{ess\,sup}_{x\in \mathcal{X}} \log\frac{\gamma_{0}(x)+\| \gamma_{\Delta r} - \gamma_{0} \|_\infty }{\gamma_{0}(x)  } - \operatorname*{ess\,inf}_{x\in \mathcal{X}} \log\frac{\gamma_{0}(x)-\| \gamma_{\Delta r} - \gamma_{0} \|_\infty }{\gamma_{0}(x)  }\\
    &\leq  \log\frac{\gamma_{\min}+\| \gamma_{\Delta r} - \gamma_{0} \|_\infty }{\gamma_{\min}  } - \log\frac{\gamma_{\min}-\| \gamma_{\Delta r} - \gamma_{0} \|_\infty }{\gamma_{\min}  }
\end{align*}
where $\gamma_{\Delta r}(x)=1-(1-\alpha)Q_{\Delta r}(x)/c_{*,\Delta r}$ and $\min_{x\in \mathcal{X}}\gamma_0(x)=1-(1-\alpha)Q_*/c_{*,0}=:\gamma_{\min}>0$. In Lemma~\ref{lem:root-continuous}, we know that $c_{*,\Delta r}$ is continuous in $\Delta r$. Hence
\begin{align*}
        \| \gamma_{\Delta r} - \gamma_{0} \|_\infty\leq Q_* \Big\| \frac{e^{\Delta r(x)}}{c_{*,\Delta r}} -\frac{1}{c_{*,0}} \Big\|_{\infty}\to 0,
\end{align*}
as $\|\Delta r \|_\infty\to 0$. In conclusion,
\begin{equation}\label{equ:limit_69}
    \lim_{\Delta r \to 0}d_{\mathcal{H}}(\mathfrak{L}^{\infty}_{\Delta r} w_0, \mathfrak{L}^{\infty}_{0} w_0 )=\lim_{\Delta r\to 0}d_{\mathcal{H}}(w_{*,\Delta r},w_{*,0}) =0.
\end{equation}

Due to $d_{\mathcal{H}}(\mathfrak{L}^{k+1}_{\Delta r} w_0, \mathfrak{L}^\infty_{\Delta r} w_0) \leq d_{\mathcal{H}}(\mathfrak{L}^{k}_{\Delta r} w_0, \mathfrak{L}^\infty_{\Delta r} w_0)$ for all $k\geq 1$ (see \eqref{equ:45_Birkhoff}), for any fixed integer $l\geq 1$, we have
\begin{equation}\label{ineq:71_final}
    \varepsilon_0\leq\limsup_{n\to \infty} d_{\mathcal{H}}(\mathfrak{L}^{N_n}_{\Delta r_n} w_0, \mathfrak{L}^\infty_{\Delta r_n} w_0) \leq \limsup_{n\to \infty} d_{\mathcal{H}}(\mathfrak{L}^{l}_{\Delta r_n} w_0, \mathfrak{L}^\infty_{\Delta r_n} w_0).
\end{equation}
Due to \eqref{equ:limit_69_new} and \eqref{equ:limit_69}, we have
\[
|d_{\mathcal{H}}(\mathfrak{L}^{l}_{\Delta r_n} w_0, \mathfrak{L}^\infty_{\Delta r_n} w_0) - d_{\mathcal{H}}(\mathfrak{L}^{l}_{0} w_0, \mathfrak{L}^\infty_{0} w_0)|\leq  d_{\mathcal{H}}(\mathfrak{L}^{l}_{\Delta r_n} w_0, \mathfrak{L}^l_{0} w_0) + d_{\mathcal{H}}(\mathfrak{L}^{\infty}_{\Delta r} w_0, \mathfrak{L}^\infty_{0} w_0)\to 0,
\]
as $n\to \infty$. Hence, \eqref{ineq:71_final} implies that for any $l\geq 1$,
\[
d_{\mathcal{H}}(\mathfrak{L}^{l}_{0} w_0, \mathfrak{L}^\infty_{0} w_0) \geq \varepsilon_0>0,
\]
which contradicts $\lim_{l\to \infty}d_{\mathcal{H}}(\mathfrak{L}^{l}_{0} w_0, \mathfrak{L}^\infty_{0} w_0)=0$. 

This completes the proof of \eqref{lim:65_hp_norm}, hence for~\eqref{lim:67}. 
\end{proof}

\begin{proof}[Proof of Theorem~\ref{thm:stable_2}]

By Lemma~\ref{lem:stability_2_general}, it suffices to verify that \eqref{lim:45_joint_continuous}, \eqref{lim:46_limit_continuous}, and \eqref{lim:47} hold for the perturbed update map \eqref{equ:dynamic_L_regime_iv}. As the direct consequence of Lemmas~\ref{lem:51_proof}, \ref{lem:52_proof} and \ref{lem:53_proof}, we complete the proof of Theorem~\ref{thm:stable_2}.   
\end{proof}

\end{document}